\def\eqref#1{equation~\ref{#1}}
\def\1{\bm{1}}
\DeclareMathAlphabet{\mathsfit}{\encodingdefault}{\sfdefault}{m}{sl}
\SetMathAlphabet{\mathsfit}{bold}{\encodingdefault}{\sfdefault}{bx}{n}
\newtheorem{example}{Example}
\newtheorem{theorem}{Theorem}
\newtheorem{lemma}{Lemma}[section]
\newtheorem{corollary}{Corollary}
\newtheorem{definition}{Definition}[section]
\newtheorem{assumption}{Assumption}[section]
\title{Bayesian Risk-Sensitive Policy Optimization For MDPs With General Loss Functions
}
\author{
    Xiaoshuang Wang\\
    Industrial and Systems Engineering\\
    Georgia Institute of Technology\\
    Atlanta, GA 30332, USA\\
    \texttt{xwang3094@gatech.edu}\\
    \And
    Yifan Lin\\
    Industrial and Systems Engineering\\
    Georgia Institute of Technology\\
    Atlanta, GA 30332, USA\\
    \texttt{ylin429@gatech.edu}
    \And
    Enlu Zhou\\
    Industrial and Systems Engineering\\
    Georgia Institute of Technology\\
    Atlanta, GA 30332, USA\\
    \texttt{enlu.zhou@isye.gatech.edu}
}
\begin{document}

\maketitle

%

\maketitle
\begin{abstract}
Motivated by many application problems, we consider Markov decision processes (MDPs) with a general loss function and unknown parameters. To mitigate the epistemic uncertainty associated with unknown parameters, we take a Bayesian approach to estimate the parameters from data and impose a coherent risk functional (with respect to the Bayesian posterior distribution) on the loss. Since this formulation usually does not satisfy the interchangeability principle, it does not admit Bellman equations and cannot be solved by approaches based on dynamic programming. Therefore, We propose a policy gradient optimization method, leveraging the dual representation of coherent risk measures and extending the envelope theorem to continuous cases. We then show the stationary analysis of the algorithm with a convergence rate of 
{$\mathcal{O}(T^{-1/2}+r^{-1/2})$}, where $T$  is the number of  policy gradient iterations and $r$ is the sample size of the gradient estimator. We further extend our algorithm to an episodic setting, and establish the global convergence of the extended algorithm and provide bounds on the number of iterations needed to achieve an error bound {$\mathcal{O}(\epsilon)$}  in each episode. 

\end{abstract}

\section{Introduction}\label{sec:intro}

Markov decision process (MDP) is a paradigm for modeling sequential decision making under uncertainty, with a primary focus on identifying an optimal policy that minimizes the (discounted)  expected total cost.  However, the standard form of MDP is not sufficient for modeling some practical problems. 
For example, consider a self-driving car in a dynamic urban environment. It must reach its destination efficiently yet safely amid unpredictable events, calling for a general optimization objective within the MDP framework. At the same time, it faces incomplete knowledge of its environment, such as road conditions. Thus, the decision maker confronts two challenges: defining a general performance measure for intrinsic uncertainty and handling epistemic uncertainty about the environment. This paper addresses both challenges simultaneously within the MDP framework.


There is extensive literature addressing general loss functions and epistemic uncertainty separately. 
For instance,  risk-sensitive objectives have been explored in the contexts of MDPs  \citep{howard1972risk,ruszczynski2010risk,mannor2011mean,petrik2012an}, stochastic optimal control  \citep{borkar2002risk,moon2020generalized}, and stochastic programming  \citep{shapiro2012minimax,pichler2022risk} literature. These objectives cannot be simply represented as the total expected cost. Epistemic uncertainty arises when some MDP parameters, such as transition probabilities, are unknown and must be estimated from available data. 
This discrepancy between the estimated and true MDP is referred to as epistemic uncertainty. Numerous approaches have been proposed to address epistemic uncertainty in MDPs, with robust MDP  \citep{nilim2003robustness, iyengar2005robust, delage2010percentile, wiesemann2013robust, petrik2019beyond} being one of the most widely adopted formulations. A more flexible and less conservative formulation, coined as Bayesian Risk MDP, was recently proposed by \citet{lin2022bayesian}. 

To the best of our knowledge, this paper is the first to address Markov decision processes (MDPs) with both general loss functions and epistemic uncertainty. Existing methods for handling epistemic uncertainty in MDPs (e.g., robust or Bayesian-risk formulations) typically rely on Bellman equations, which in turn depend on an interchangeability principle. This principle usually fails when the objective is a general loss function rather than the standard cumulative total cost, making dynamic programming inapplicable. Consequently, one cannot simply wrap a robust or Bayesian-risk method around a general objective.

Motivated by this limitation, we take a policy optimization approach to MDPs with general loss functions, particularly focusing on convex loss functions in the occupancy measure. Convexity is a natural assumption—such loss functions are widely used (see \cite{pennings2003shape}) and encompass many existing models, such as risk-sensitive MDPs, constrained MDPs, and the standard expected total cost as a special case (where the loss is linear in the occupancy measure). To handle both epistemic uncertainty (unknown parameters like transition probabilities) and intrinsic uncertainty, we adopt a Bayesian perspective: we estimate parameters from a fixed batch of data and impose a coherent risk functional with respect to the Bayesian posterior distribution. This yields an offline optimization problem with a two-layer composite objective: an outer coherent risk measure and an inner convex loss function.

To solve this composite problem, we directly optimize policies and accommodate high-dimensional representations such as neural networks. The coherent risk measure admits a dual representation as shown in \cite{shapiro2021lectures}, which can be expressed as a supremum over a risk envelope set. Building on this, we extend the envelope theorem in \cite{milgrom2002envelope} to derive the policy gradient. Unlike \cite{tamar2015policy}, whose method was limited to a discrete parameter space, our extension covers the continuous case and may be of independent interest. The resulting policy gradient involves the gradient of the loss with respect to policy parameters, which we estimate using the recent variational method proposed in \cite{zhang2020variational}. Alternative estimators, such as the zeroth-order method of \cite{balasubramanian2022zeroth}, are also applicable. Incorporating these estimators into the policy gradient allows us to perform gradient descent on the composed objective. Finally, to enhance practicality, we extend our method to the episodic setting, where the agent iteratively gathers new data under the current policy and refines its estimates and policy updates accordingly.

There have been efforts to solve some special cases of our composite problem using policy gradient algorithms. For example, \cite{chow2014algorithms} applied Conditional Value-at-Risk(CVaR) to the total cost and developed policy gradient and actor-critic algorithms, each utilizing a distinct method to estimate the gradient and update policy parameters in the descent direction. In contrast, we consider a broader composition of a general coherent risk measure and a general loss function, allowing more flexible objectives. Note that although the composition of a coherent risk measure and a convex loss function is convex in the occupancy measure, it is generally non-convex in the policy parameters, which introduces additional challenges for our convergence analysis. The work most relevant to ours is perhaps \cite{zhang2020variational}, which addresses a reinforcement learning (RL) problem with a general convex loss function and derives the variational policy gradient theorem with a global convergence guarantee. However, our work differs in that we consider an offline planning problem in an MDP with unknown transition probabilities, which are estimated from data. Therefore, we address not only a general convex loss function but also epistemic uncertainty. This introduces additional challenges related to risk measures, and the robustness of the proposed formulation is a key consideration.

Our contributions are summarized as follows: (1) We propose a Bayesian risk formulation for MDPs with a general convex loss function and develop a policy gradient algorithm to solve for the optimal policy. The proposed formulation jointly mitigates both epistemic and intrinsic uncertainty; (2) We extend the envelope theorem to   the dual representation of the coherent risk measure, and then apply the envelope theorem to derive the policy gradient. Our extension from the discrete case to the continuous case for the envelope theorem may be of independent interest; (3) 
We prove the convergence of the proposed algorithm and establish its convergence rate as {
$\mathcal{O}(T^{-1/2}+r^{-1/2})$}, where $T$  is the number of policy gradient iterations and $r$ is the sample number of the gradient estimator; (4) We extend our policy gradient algorithm to the episodic setting, and prove the asymptotic convergence of the  episodic optimal policy of our Bayesian formulation to a globally optimal policy of the MDP problem under the true environment. Moreover, we show the number of iterations required in any episode to maintain an optimality gap { $\mathcal{O}(\epsilon)$} under our Bayesian formulation.

\section{Problem Formulation}\label{sec:formulation}
Consider an infinite-horizon Markov Decision Process (MDP) over a finite state space $\mathcal{S} $ and a finite action space $\mathcal{A}$. For each state $s\in \mathcal{S}$ and action $a\in \mathcal{A}$, a transition to the next state $s'$ follows the transition kernel $P^*$, i.e. $s'\sim P^*(\cdot|s,a) $. A stationary policy $\pi$ is defined as a function mapping from the state
space to a probability simplex $\Delta(\cdot)$ over the action space.
Given any transition probability $P$, define $\lambda^{\pi,P}$ to be the discounted state-action occupancy measure under policy $\pi$ :
\begin{equation}
\label{eq:def_occu}
  \lambda_{s a}^{\pi,P}=\sum_{t=0}^{\infty} \gamma^t \cdot \mathbb{P}\left(s_t=s, a_t=a \mid \pi, s_0 \sim \tau,P \right)
\end{equation}
for any $ (s,a) \in \mathcal{S} \times \mathcal{A}$, where $\tau$ is the initial distribution, $\gamma \in (0,1)$ is the discount factor.

As mentioned in introduction, in many application problems such as  a self-driving car in a dynamic urban environment, the decision maker faces two kinds of
challenges: the epistemic uncertainty about the environment and a general performance measure for  the intrinsic uncertainty. In this paper, we aim to address both challenges together. We consider a general loss function $F(\lambda,P)$ defined over the occupancy measure $\lambda$ and transition kernel $P$, which is assumed to be convex in $\lambda$. 
In practice, the true distribution $P^*$ is usually unknown and needs to be estimated. In this work, we take a Bayesian approach to estimate the environment. We assume that the transition kernel $P^* \equiv P_{\theta^*}$ is parameterized by $\theta^*$, where $\theta^* \in \Theta$ is the true but unknown parameter value, $\Theta \subseteq \mathbb{R}^{p}$ is the parameter space, and $p$ is the dimension of $\Theta$. Many real-world problems exhibit the characteristic of relying on a parametric assumption.
In the example of a self-driving car, the noise in sensor measurements may be assumed to follow an unknown Gaussian distribution.

Under the parametric assumption,  we assume we have access to some data which are  state transitions $\zeta=(s,a,s')$, where $s'$ follows the distribution $P_{\theta^*}(\cdot|s,a)$ and define $P_{\theta^*}(\zeta):=P_{\theta^*}(s'|s,a)$. Now given a fixed batch of data $\zeta^{(N)}$ of $N$ samples, we can update the posterior distribution (denoted by $\mu_N$) on the parameter $\theta$ using the Bayes rule: $\mu_N(\theta)=\frac{P_{\theta}(\zeta^{(N)}))\mu_0(\theta)}{\int_{\theta'} P_{\theta'}(\zeta^{(N)})\mu_0(\theta')d\theta'}$, where $\mu_0$ is a prior distribution of $\theta$ we assume. 
Furthermore, as discussed before, model mis-specification caused by the lack of data could lead to sub-optimality of the learned policy when it is implemented in a real-world setting. Hence, we further impose a risk functional on the objective with respect to (w.r.t.) the Bayesian posterior to account for the epistemic uncertainty, which results in the following composed formulation:
\begin{equation}\label{original_policy_optimimzation}
    \min_{\pi} \rho_{\theta\sim\mu_N}(F(\lambda^{\pi,P_\theta},P_\theta))
\end{equation}
where $\rho$ is a general coherent risk measure 
\footnote{Let $(\Omega, \mathcal{F}, \mathbb{P})$w.r.t. the posterior $\mu_N$ be a probability space and $\mathcal{X}$ be a linear space of $\mathcal{F}$-measurable functions $X: \Omega \rightarrow \mathbb{R}$. A risk measure is a function $\rho: \mathcal{X} \to \mathbb{R}$ which assigns to a random variable $X$ a real number representing its risk. A coherent risk measure satisfies properties of monotonicity, sub-additivity, homogeneity, and translational invariance.} 
w.r.t. the posterior $\mu_N$.{ We aim to solve problem \eqref{original_policy_optimimzation} in this paper.} Detailed introduction about coherent risk measures can be  found in \citep{artzner1999coherent}. 
By this formulation, we look for a policy that minimizes a performance measure taking into account to the epistemic uncertainty caused by lack of data for a general convex loss function. 
If $F$ is a linear function of $\lambda$, i.e. $F(\lambda, P)=\langle \lambda, c \rangle$ for a cost vector $c \in \mathbb{R}^{|\mathcal{S}|\times |\mathcal{A}|}$, and
the posterior $\mu_N$ is a singleton on the true parameter $\theta^*$, then the risk measure just considers the performance on this singleton and \eqref{original_policy_optimimzation} reduces to the classical MDP problem. 
In Appendix~\ref{appendix:loss_function}, we give two examples that are not in the classical form of MDP but fall into our framework.

\color{red}

\color{black}

\section{Policy Gradient Algorithm: Derivation and Estimation}\label{sec:algorithm}
As discussed in the introduction, the dynamic programming type of algorithms may not be applicable to a general convex loss function $F(\cdot)$. So we adopt the policy gradient algorithm, which directly optimizes parameterized policies. Consider a stochastic parameterized policy $\pi_{\alpha}: \mathcal{S} \to \Delta(\mathcal{A})$, parameterized by $\alpha \in W \subset \mathbb{R}^{d}$. To directly work on the parameterized policy, we denote $F(\lambda^{\pi_\alpha,P_\theta},P_\theta)$ by $C(\alpha,\theta)$. The policy optimization problem \eqref{original_policy_optimimzation}  then becomes:
\begin{align}
\label{param_policy_optimimzation}
    \min_{\alpha}G(\alpha):= \rho_{\theta\sim\mu_N}(C(\alpha,\theta)).
\end{align}
It is worth mentioning that 
$G(\alpha)$ is not necessarily a convex function though $F$ is convex w.r.t. $\lambda$. So we can only reach a stationary point of $G(\alpha)$ by the policy gradient descent method (see more detailed discussion in Section~\ref{subsec:convergence_alg}). In the rest of the section, we derive the policy gradient to the proposed formulation \eqref{param_policy_optimimzation} using the envelope theorem, and construct the policy gradient estimator. It should be noted that our proposed formulation allows for flexible methods to estimate the policy gradient, including the variational approach such as in \citep{zhang2020variational}, and the zeroth-order method such as in \citep{balasubramanian2022zeroth}.

\subsection{Preliminaries}
To ensure the objective $G(\alpha)$ is well defined, we first assume that  $C(\alpha,\theta) \in  \mathcal{Z}:=L_p(\Theta,\mu_N)$.
\begin{assumption}
\label{assum:C_lp}
    $C(\alpha,\theta)\in\mathcal{Z}=\{f:\|f\|_p:=\left(\int_\Theta |f(\theta)|^p d\mu_N(\theta)\right)^{1/p} <\infty\},\forall \alpha \in W$,  for some $p\ge 1$.
\end{assumption}
The choice of $p$ depends on the specific coherent risk measure. For example, $p$ can be chosen as $1$ for CVaR  introduced in Example \ref{examlple:risk-averse CMDP}.
Let $\mathcal{B}:=\{\xi \in \mathcal{Z}^*: \int_{\Theta} \xi(\theta)\mu_N(\theta)d\theta=1, \xi \succeq 0\},$ where $\mathcal{Z}^*:=L_q(\Theta,\mu_N)$ is the dual space of $\mathcal{Z}$ with $1/p+1/q=1.$
As shown in \citep{shapiro2021lectures}, a coherent risk measure has a well-known dual representation.

\begin{theorem}
\label{thm:dual_risk}{(Theorem 6.6 in \citep{shapiro2021lectures}.)}
A risk measure $\rho: \mathcal{Z} \rightarrow \mathbb{R}$ is coherent if and only if there exists a convex bounded and closed set (also known as risk envelope) $\mathcal{U}=\mathcal{U}(\mu_N) \subset \mathcal{B}$ such that 
$
\rho(Z)=\max _{\xi: \xi  \in \mathcal{U}\left(\mu_N\right)} \mathbb{E}_{\xi}[Z],
$
where $\mathbb{E}_{\xi}[Z] := \int_{\theta \in \Theta} Z(\theta ) \xi(\theta ) \mu_N(\theta)d\theta$ .
\end{theorem}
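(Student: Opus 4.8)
The plan is to prove the two implications of Theorem~\ref{thm:dual_risk} separately, with the substantive direction resting on convex duality (the Fenchel--Moreau biconjugation theorem) in the dual pair $(\mathcal{Z},\mathcal{Z}^*)=(L_p(\Theta,\mu_N),L_q(\Theta,\mu_N))$, supplemented by weak-$*$ compactness for the attainment of the maximum.

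\textbf{Sufficiency.} First I would assume $\rho(Z)=\max_{\xi\in\mathcal{U}}\mathbb{E}_{\xi}[Z]$ for some convex, bounded, closed $\mathcal{U}\subseteq\mathcal{B}$ and verify the four coherence axioms directly. Positive homogeneity and subadditivity are immediate because a pointwise supremum of linear functionals is sublinear; monotonicity holds because every $\xi\in\mathcal{U}\subseteq\mathcal{B}$ satisfies $\xi\succeq 0$, so $Z_1\le Z_2$ ($\mu_N$-a.e.) gives $\mathbb{E}_{\xi}[Z_1]\le\mathbb{E}_{\xi}[Z_2]$ for every $\xi$; translation invariance holds because $\int_\Theta\xi\,d\mu_N=1$ yields $\mathbb{E}_{\xi}[Z+c]=\mathbb{E}_{\xi}[Z]+c$. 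Finiteness of $\rho(Z)$ for each $Z\in\mathcal{Z}$ follows from H\"older's inequality together with boundedness of $\mathcal{U}$ in $L_q$.

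\textbf{Necessity.} Now suppose $\rho$ is coherent. Positive homogeneity and subadditivity make $\rho$ sublinear, hence convex; translation invariance and monotonicity will then make it $1$-Lipschitz once continuity is known. The key functional-analytic point is that a finite-valued convex functional on $\mathcal{Z}=L_p$ that is monotone with respect to the lattice order is automatically continuous (an extension of the Namioka--Klee theorem for monotone convex functionals on Banach lattices); alternatively one simply postulates lower semicontinuity, which is standard for coherent risk measures. Given that $\rho$ is proper, convex and lower semicontinuous, Fenchel--Moreau gives $\rho=\rho^{**}$, i.e.
\[
\rho(Z)=\sup_{\xi\in\mathcal{Z}^*}\bigl\{\mathbb{E}_{\xi}[Z]-\rho^*(\xi)\bigr\},\qquad \rho^*(\xi)=\sup_{Z\in\mathcal{Z}}\bigl\{\mathbb{E}_{\xi}[Z]-\rho(Z)\bigr\}.
\]
Because $\rho$ is positively homogeneous, $\rho^*$ takes only the values $0$ and $+\infty$, so $\rho^*=\delta_{\mathcal{U}}$ is the indicator of the convex, weak-$*$ closed set $\mathcal{U}:=\{\xi\in\mathcal{Z}^*:\mathbb{E}_{\xi}[Z]\le\rho(Z)\ \text{for all }Z\in\mathcal{Z}\}=\partial\rho(0)$, and $\rho(Z)=\sup_{\xi\in\mathcal{U}}\mathbb{E}_{\xi}[Z]$.

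\textbf{Locating $\mathcal{U}$ and attainment.} It remains to show $\mathcal{U}\subseteq\mathcal{B}$, that $\mathcal{U}$ is bounded, and that the supremum is a maximum. Monotonicity with $\rho(0)=0$ gives $\mathbb{E}_{\xi}[Z]\le\rho(Z)\le 0$ for every $Z\le 0$ a.e. and every $\xi\in\mathcal{U}$, which forces $\xi\succeq 0$; applying $\mathbb{E}_{\xi}[Z+c]\le\rho(Z)+c$ for both signs of $c$ forces $\int_\Theta\xi\,d\mu_N=1$; hence $\mathcal{U}\subseteq\mathcal{B}$. Since $\sup_{\xi\in\mathcal{U}}\mathbb{E}_{\xi}[Z]=\rho(Z)<\infty$ for each fixed $Z$, the uniform boundedness principle gives $\sup_{\xi\in\mathcal{U}}\|\xi\|_q<\infty$, so $\mathcal{U}$ is bounded. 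Being bounded, convex and weak-$*$ closed in the dual space $L_q$, $\mathcal{U}$ is weak-$*$ compact by Banach--Alaoglu, and for fixed $Z$ the map $\xi\mapsto\mathbb{E}_{\xi}[Z]$ is weak-$*$ continuous, so the supremum is attained and may be written as $\max$, giving the stated $\mathcal{U}=\mathcal{U}(\mu_N)$. The hard part will be the continuity step for $\rho$ on $L_p$ (the Namioka--Klee-type argument), which is the one place where the Banach-lattice structure is genuinely used; everything else is routine manipulation of the coherence axioms and standard convex-analytic facts, together with a minor check that $\mathcal{U}\neq\emptyset$ (from Hahn--Banach, since $\rho$ is proper) so that the maximum is over a nonempty set.
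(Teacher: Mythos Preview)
Your proof is correct and follows the standard Fenchel--Moreau argument for the dual representation of coherent risk measures. However, the paper does not give its own proof of this statement at all: Theorem~\ref{thm:dual_risk} is simply quoted from \citet{shapiro2021lectures} (their Theorem~6.6) and used as a black box. There is therefore nothing to compare against in the paper itself; what you have written is essentially the textbook proof that the cited reference contains, so in that sense you are aligned with the intended source.
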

Note $\xi$ could be viewed as perturbation on the posterior $\mu_N$ that satisfies certain conditions, and the risk measure can be understood as the extreme performance for these perturbations. Theorem \ref{thm:dual_risk} implies that a functional $\rho$ defined by  $
\rho(Z)=\max _{\xi: \xi  \in \mathcal{U}} \mathbb{E}_{\xi}[Z]$ is a coherent risk measure if $\mathcal{U} \subset \mathcal{B}$ is convex, bounded and closed. In this paper we only focus on a  class of coherent risk measures $\rho$ following Definition \ref{assum: envelop} throughout the paper. 
\begin{definition}
\label{assum: envelop}
For each given policy parameter $\theta \in \mathbb{R}^K$, there exists an expression for the risk envelope $\mathcal{U}$ of the coherent risk measure  $\rho$ in the following form:
\begin{align*}
\begin{aligned}
  &\mathcal{U}\left(\mu_N\right)=\{\xi \in \mathcal{Z}^* : g_e\left(\xi,\mu_N\right)=0, \forall e \in \mathcal{E},f_i\left(\xi, \mu_N\right) \leq 0,\\
  &  \forall i \in \mathcal{I},\int_{\theta \in \Theta} \xi(\theta) \mu_N(\theta)d\theta=1, \xi(\theta) \geq 0,\|\xi\|_q \le B_q\},
\end{aligned}
\end{align*}
where $g_e\left(\xi, \mu_N\right)$ is an affine function in $\xi$, $f_i\left(\xi,  \mu_N\right)$ is a convex function in $\xi$, $\|\cdot\|_q $ is the $L_q$ norm in $\mathcal{Z}^*$, and there exists a strictly feasible point $\bar{\xi}$. $\mathcal{E}$ and $\mathcal{I}$ here denote the sets of equality and inequality constraints, respectively. Furthermore, for any given $\xi \in \mathcal{B}, f_i(\xi, \mu_N)$ and $g_e(\xi, \mu_N)$ are twice differentiable in $\mu_N$, and there exists a $M>0$ such that  $\forall \omega \in \Omega:$
\begin{align*}
    \max \left\{\max _{i \in \mathcal{I}}\left|\frac{d f_i(\xi, \mu_N)}{d \mu_N(\theta)}\right|, \max _{e \in \mathcal{E}}\left|\frac{d g_e(\xi, \mu_N)}{d \mu_N(\theta)}\right|\right\} \leq M.
\end{align*}
\end{definition}

The conditions on $g_e$ and $f_i$  ensure that risk envelope $\mathcal{U}(\mu_N)$ is a convex closed set, and the condition $\|\xi\|_q\le B_q$ makes $\mathcal{U}(\mu_N)$ bounded. A similar assumption is considered in Assumption 2.2\citep{tamar2015policy}. The assumption about bounded derivatives can be easily satisfied if $\Theta$ is compact. While \citep{tamar2015policy} only consider the case where $\Theta$ is finite,  we extend it to the continuous case, leading to a functional problem over an infinite dimensional space instead of a finite-dimensional case. Therefore, we extend the result in \citep{tamar2015policy} to the infinite dimensional space, which is shown in Theorem \ref{thm:gradient_coherent}.
Notably, the function forms of $g_e(\cdot)$ and $f_i(\cdot)$ can be exactly specified for a given coherent risk measure. We refer the readers to Appendix~\ref{appendix:example_envelop} and Section 6.3.2 \citep{shapiro2021lectures} for some examples of the envelope set for coherent risk measures, which cover most common coherent risk measures. { The detailed discussion about differences between our approach and that of \citet{tamar2015policy} is provided in Appendix \ref{appendix:difference_tamar}.}

\subsection{Derivation of Policy Gradient}
According to Theorem \ref{thm:dual_risk}, we can write the coherent risk measure as a maximization problem \eqref{eq:dual_problem}, where the decision variable is $\xi$ and the objective is a linear functional of $\xi$, and define the Lagrangian function \eqref{eq:lagrangian} for problem \eqref{eq:dual_problem}:
\begin{equation}
\label{eq:dual_problem}
\begin{aligned}
    &\rho_{\theta\sim\mu_N}(C(\alpha,\theta))=\max _{\xi: \xi \in \mathcal{U}\left(\mu_N\right)} \mathbb{E}_{\xi}[C(\alpha,\theta)]
    = \max _{\xi: \xi \in \mathcal{U}\left(\mu_N\right)} \int_{\theta \in \Theta} \xi(\theta) \mu_N(\theta)C(\alpha,\theta) d\theta.
\end{aligned}
\end{equation}
\begin{equation}
\label{eq:lagrangian}
\begin{aligned}  &L_\alpha(\xi,\lambda^{\mathcal{P}},\lambda^{\mathcal{E}},\lambda^{\mathcal{I}})=\int_{\theta \in \Theta} \xi(\theta) \mu_N(\theta) C(\alpha,\theta)d\theta-\lambda^{\mathcal{P}}\left(\int_{\theta \in \Theta} \xi(\theta) \mu_N(\theta)d\theta-1\right)\\
&-\sum_{e \in \mathcal{E}} \lambda^{\mathcal{E}}(e) g_e\left(\xi, \mu_N\right)-\sum_{i \in \mathcal{I}} \lambda^{\mathcal{I}}(i) f_i\left(\xi, \mu_N\right).
\end{aligned} 
\end{equation}
Using the Lagrangian relaxation \eqref{eq:lagrangian}, we derive the policy gradient for \eqref{eq:dual_problem} in Theorem \ref{thm:gradient_coherent}. For this purpose, we need some mild assumptions  on the objective function. 

\begin{assumption}
\label{assum:Lipschitz_F_C}
(1) $\nabla_{\lambda} F(\lambda, P)$ is uniformly bounded by $L_{F,\infty}$ for any $\lambda$ and $P$ w.r.t. $\|\cdot\|_\infty$; (2) $\nabla_\alpha C(\alpha, \theta)$ is  $L_{\theta,2}$-Lipschitz continuous w.r.t. $\theta \in \Theta$ and $\|\cdot\|_2$ for any $\alpha \in W$; (3) $\nabla C(\alpha,\theta)$ is uniformly bounded by $B$ for any $\alpha \in W$ and $\theta \in \Theta$ w.r.t. $\|\cdot\|_2$; (4) $\Theta \subseteq \mathbb{R}^p$ is compact and convex; (5)  $W$, the domain of $\alpha$, is bounded by $B_W$.
\end{assumption}
Assumption~\ref{assum:Lipschitz_F_C} requires the uniform boundedness and Lipschitz continuity of $\nabla C$ and $\nabla F$, where  $C(\alpha,\theta)=F(\lambda^{\pi_\alpha,P_\theta},P_\theta)$. One sufficient condition  easy to verify for Assumption~\ref{assum:Lipschitz_F_C} to hold is: each component in the composed function $F(\lambda^{\pi_\alpha,P_\theta},P_\theta)$ is (somewhere) twice continuously differentiable w.r.t parameters $\alpha, \theta$, and the domains of two parameters are compact convex sets.  { Standard assumptions on the policy parameterization in RL are closely related to Assumption 3.2, and the detailed discussion and a continuous-control example  are offered in Appendix \ref{appendix:Lipshitz}. }

\begin{theorem}
\label{thm:gradient_coherent}
Assume that Assumption \ref{assum:C_lp} and \ref{assum:Lipschitz_F_C} hold, and $\rho$ satisfies Definition \ref{assum: envelop}. Assume that $\mu_N$ is a Radon measure (see Appendix \ref{appendix: radon} for definition). Define $\xi^*\in\arg \max_{\xi\ge 0, \|\xi\|_q\le B_q} \min _{\lambda^{\mathcal{I}}\geq 0,\lambda^{\mathcal{P}},  \lambda^{\varepsilon}} L_\alpha\left(\xi, \lambda^{\mathcal{P}}, \lambda^{\mathcal{E}}, \lambda^{\mathcal{I}}\right).$
Then we have the policy gradient
\begin{equation}
\label{eq:risk_measure}
\begin{split}
      g(\alpha)&:=  \nabla_\alpha \rho_{\theta\sim\mu_N}(C(\alpha,\theta))=\int_{\theta \in \Theta} \xi_\alpha^*(\theta) \mu_N(\theta)\nabla_\alpha C(\alpha,\theta) d\theta.   
\end{split}
\end{equation}
\end{theorem}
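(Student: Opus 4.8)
The plan is to apply an envelope theorem to the parametrized family of maximization problems \eqref{eq:dual_problem}, viewing $\rho_{\theta\sim\mu_N}(C(\alpha,\theta)) = \max_{\xi\in\mathcal{U}(\mu_N)}\mathbb{E}_\xi[C(\alpha,\theta)]$ as a value function $V(\alpha)$ of the parameter $\alpha$, with the inner optimization carried out over the fixed feasible set $\mathcal{U}(\mu_N)$ (which does not depend on $\alpha$). The integrand $\xi(\theta)\mu_N(\theta)C(\alpha,\theta)$ is linear in $\xi$ and, by Assumption~\ref{assum:Lipschitz_F_C}, differentiable in $\alpha$ with $\nabla_\alpha C$ bounded and suitably regular. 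The classical Milgrom--Segal envelope theorem gives, for a maximizer $\xi_\alpha^*$, that $\nabla_\alpha V(\alpha) = \nabla_\alpha \mathbb{E}_{\xi}[C(\alpha,\theta)]\big|_{\xi=\xi_\alpha^*} = \int_\Theta \xi_\alpha^*(\theta)\mu_N(\theta)\nabla_\alpha C(\alpha,\theta)\,d\theta$, which is exactly \eqref{eq:risk_measure}. The work lies in justifying this in the present infinite-dimensional setting.

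First I would verify the hypotheses needed for the envelope theorem: (i) existence of a maximizer $\xi_\alpha^*$ — this follows from Theorem~\ref{thm:dual_risk}, since $\mathcal{U}(\mu_N)$ is convex, bounded, closed (hence weak-$*$ compact in $\mathcal{Z}^*=L_q$ by Banach--Alaoglu when $q>1$, or via the explicit constraints and $\|\xi\|_q\le B_q$), and $\xi\mapsto\mathbb{E}_\xi[C(\alpha,\theta)]$ is weak-$*$ continuous because $C(\alpha,\cdot)\in\mathcal{Z}=L_p$; (ii) for fixed $\xi$, differentiability of $\alpha\mapsto\mathbb{E}_\xi[C(\alpha,\theta)]$ with an integrable dominating bound for $\nabla_\alpha C$ — supplied by $\|\nabla_\alpha C(\alpha,\theta)\|_2\le B$ uniformly (Assumption~\ref{assum:Lipschitz_F_C}(3)) together with finiteness of $\mu_N$, which also lets us pass the $\alpha$-gradient under the integral sign via dominated convergence; (iii) an equi-differentiability / equi-Lipschitz condition in $\alpha$ uniform over $\xi\in\mathcal{U}(\mu_N)$, again obtained from the uniform bound $B$ and the constraint $\int\xi\mu_N=1$, $\xi\ge0$, so that the family $\{\mathbb{E}_\xi[C(\cdot,\theta)]:\xi\in\mathcal{U}\}$ is uniformly Lipschitz on the bounded set $W$. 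Under these, $V$ is locally Lipschitz, hence differentiable a.e.; at any point of differentiability the envelope formula holds, and the uniform equi-differentiability upgrades this to differentiability everywhere with the stated gradient. The role of the Lagrangian \eqref{eq:lagrangian} and the saddle-point characterization of $\xi^*$ is to identify the maximizer concretely (strict feasibility of $\bar\xi$ gives a Slater condition so strong duality holds and $\xi_\alpha^* = \xi^*$ as defined in the theorem statement), and the Radon-measure assumption on $\mu_N$ ensures the relevant integrals and duality pairings are well-behaved.

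The main obstacle I anticipate is part (iii): making the differentiation-under-the-max rigorous in infinite dimensions, i.e. controlling the maximizer's behavior as $\alpha$ varies. In finite dimensions \citep{tamar2015policy} this is routine, but here $\mathcal{U}(\mu_N)$ lives in $L_q$ and one must be careful that $\xi_\alpha^*$ need not be unique or continuous in $\alpha$; the envelope theorem sidesteps uniqueness, but one still needs the one-sided directional derivatives of $V$ from above and below to coincide, which requires a uniform (in $\xi\in\mathcal U$) modulus-of-continuity estimate on $\alpha\mapsto\nabla_\alpha\mathbb{E}_\xi[C(\alpha,\theta)]$. This is where Assumption~\ref{assum:Lipschitz_F_C}(2) (Lipschitzness of $\nabla_\alpha C$ in $\theta$) and the twice-differentiability conditions in Definition~\ref{assum: envelop}, combined with compactness of $\Theta$, are used to get the needed uniform estimates. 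A secondary technical point is the case $p=1$, $q=\infty$, where $\mathcal{Z}^*$ is not the whole of $L_\infty$'s dual and weak-$*$ compactness arguments need adjustment; here the explicit boundedness $\|\xi\|_q\le B_q$ and the structure of $\mathcal{U}$ from Definition~\ref{assum: envelop} (finitely generated by affine/convex constraints) must be invoked directly to recover compactness and existence of $\xi_\alpha^*$.
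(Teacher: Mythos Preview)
Your proposal is correct and shares the paper's overarching strategy---apply a Milgrom--Segal envelope theorem to the dual representation \eqref{eq:dual_problem}---but the execution differs in a way worth noting. You work directly with the maximization over the fixed set $\mathcal{U}(\mu_N)$, using weak-$*$ compactness (Banach--Alaoglu) for existence of $\xi_\alpha^*$ and the uniform bound on $\nabla_\alpha C$ for equi-Lipschitz/equi-differentiability; the Lagrangian is for you only an a posteriori device to identify $\xi^*$ via Slater and strong duality. The paper instead first relaxes the $g_e,f_i$ constraints into the Lagrangian, retaining only $\xi\in\mathcal{U}'(\mu_N):=\{\xi\ge0,\ \int\xi\,\mu_N=1,\ \|\xi\|_q\le B_q\}$, and then applies the \emph{saddle-point} version of the envelope theorem (Theorem~4 of \citet{milgrom2002envelope}) to the resulting max--min. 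Its key infinite-dimensional ingredient is not compactness but \emph{separability}: the Radon-measure hypothesis on $\mu_N$ is invoked precisely so that, via Theorem~4.13 of \citet{brezis2011functional}, $L^q(\Theta,\mu_N)$---and hence $\mathcal{U}'$---is separable, which is the structural condition the paper verifies to apply the envelope theorem in this functional setting. You describe the Radon assumption only as making integrals ``well-behaved,'' which misses this specific role. Your direct-max route is arguably cleaner, since the feasible set does not depend on $\alpha$ and no constraint-differentiation issues arise; the paper's saddle-point route makes the Lagrangian definition of $\xi^*$ in the theorem statement natural from the outset and isolates separability as the bridge from the finite-$\Theta$ case of \citet{tamar2015policy} to the continuous one. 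One small gap in your sketch: equi-\emph{differentiability} (not just equi-Lipschitzness) needs a uniform modulus of continuity for $\nabla_\alpha C$, which the paper obtains from a uniform bound on $\nabla_\alpha^2 C$ rather than from the bound $B$ on $\nabla_\alpha C$ alone.
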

Proof details of Theorem \ref{thm:gradient_coherent} can be found in Appendix~\ref{appendix:gradient_coherent}.
Theorem \ref{thm:gradient_coherent} implies that we can plug in a saddle point of Lagrangian \eqref{eq:lagrangian} into \eqref{eq:risk_measure} to get the policy gradient. However, \eqref{eq:risk_measure} still involves $\nabla C$, the gradient of the loss function, and the integration w.r.t. the posterior $\mu_N$. In the next subsection, we show how to estimate the policy gradient in \eqref{eq:risk_measure}.

\subsection{Construction of the Policy Gradient Estimator}
In this section, we focus on how to estimate the policy gradient $g(\alpha)$  and  denote its estimator by $\widehat{g}(\alpha)$.
We first need to find $\xi^*$ in Theorem \ref{thm:gradient_coherent}. 
For some coherent risk measures, the  closed-form expression of $\xi^*$ is known. For CVaR with risk level $\beta \in (0,1)$, $\xi^*(\theta)=\frac{1}{1-\beta}$ if $C(\alpha,\theta)\ge v_{\beta}$ and 0 otherwise, where $v_\beta$ is the $\beta$-quantile of $C(\alpha,\theta)$. For a general coherent risk measure, we can use the approach sample average approximation (SAA). We first sample $\theta_k, k =1,\dots,r, $ from $\mu_N$, and then solve the following SAA problem for the solution $\xi^*(\theta_k)$ for each $k$:
\begin{equation}
\label{eq:max_min_SAA}
\begin{split}
   &\max_{\substack{\xi \ge 0,\\(\sum_{k=1}^{r}|\xi(\theta_k)|^q)/r\le B_q}} \min _{\lambda^{\mathcal{I}} \geq 0,\lambda^{\mathcal{P}}, \lambda^{\mathcal{E}} }
   \frac{1}{r}\sum_{k=1}^{r}\xi(\theta_k)  C(\alpha,\theta_k)-\lambda^{\mathcal{P}}\left( \frac{1}{r}\sum_{k=1}^{r}\xi(\theta_k ) -1\right)\\&-\sum_{e \in \mathcal{E}} \lambda^{\mathcal{E}}(e) g_e\left(\xi^{(r)}, \mu_N{(r)}\right)-\sum_{id \in \mathcal{I}} \lambda^{\mathcal{I}}(k) f_i\left(\xi^{(r)}, \mu_N{(r)}\right)  
\end{split}
\end{equation}
Notice the objective in \eqref{eq:max_min_SAA} is linear w.r.t.  $\lambda^{\mathcal{I}},\lambda^{\mathcal{E}}$ and concave w.r.t $\xi$, and the domain of $\xi$ is a convex bounded set in $\mathbb{R}^{r}$. Thus, \eqref{eq:max_min_SAA} can be solved  by any max-min optimization algorithm for a concave-convex function, such as alternating gradient descent ascent. Here we assume that we can solve \eqref{eq:max_min_SAA} to derive $\xi^*(\theta_k)$ accurately for each $k$.
Apart from $\xi^*$, we need to estimate  $\nabla_\alpha C(\alpha,\theta)$ and the integral $\int_{\theta \in \Theta} \xi_\alpha^*(\theta) \mu_N(\theta)\nabla_\alpha C(\alpha,\theta)$. To estimate $\nabla_\alpha C(\alpha,\theta)$, any plug-in estimation method satisfies our demand. Here, we adopt the variational policy gradient theorem in \citep{zhang2020variational}, which considers the policy gradient for a concave function defined on the occupancy measure for a RL problem. Different from our Bayesian-risk problem with a general loss function, \citep{zhang2020variational} only considers the inner-layer  $F$ of our objective \eqref{original_policy_optimimzation} in the online setting. It should also be noticed that their method can be replaced by other methods such as the zeroth-order estimation method in \citep{balasubramanian2022zeroth}. While the variational policy gradient theorem require access to the conjugate function $F^*$, which may be difficult to calculate in some cases, zeroth-order method only requires function evaluation of $F$ but leads to higher computational cost in general cases. 

\begin{lemma}
    {(Theorem 3.1 in \citep{zhang2020variational})} 
\label{thm:nablaC}Suppose $F$ is convex  and continuously differentiable in an open neighborhood of $\lambda^{\pi_\alpha,P_\theta}$. Fix the transition kernel $P_\theta$ and  denote $V(\alpha ; z)$ to be the expected cumulative cost of policy $\pi_\alpha$ when the cost function is $z$, and assume $\nabla_\alpha V(\alpha ; z)$ always exists. Then we have
\begin{equation}
 \label{eq:min_max_nablaC}
 \begin{aligned}
     &\nabla_\alpha C\left(\alpha,\theta\right)=-\lim _{\delta \rightarrow 0_{+}} \underset{x\in \mathbb{R}^{SA}}{\operatorname{argmin}} \sup_{z\in \mathbb{R}^{SA}}\{ V(\alpha ; z)+\delta \nabla_\alpha V(\alpha ; z)^{\top} x-F^*(z)+\frac{\delta}{2}\|x\|^2\},  
 \end{aligned}
 \end{equation}
where 
$\ V(\alpha ; z)  =\langle z, \lambda(\alpha,\theta)\rangle$, $\nabla_\alpha V(\alpha ; z)^{\top} x =\langle z, \nabla_\alpha \lambda(\alpha,\theta) x\rangle, F^*(z):= \sup_{x\in \mathbb{R}^{SA}} x^\top z-F(x) $ is the Fenchel conjugate of $F$.
\end{lemma}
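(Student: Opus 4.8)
The plan is to derive the variational characterization of $\nabla_\alpha C(\alpha,\theta) = \nabla_\alpha F(\lambda^{\pi_\alpha,P_\theta},P_\theta)$ by combining the chain rule with a Fenchel-duality reformulation of the gradient $\nabla_\lambda F$ and a suitable smoothing/limit argument. First I would fix $\theta$ (so $P_\theta$ is frozen and $\lambda(\alpha,\theta)=\lambda^{\pi_\alpha,P_\theta}$ is a smooth function of $\alpha$) and observe that, since $F$ is convex and $C^1$ near $\lambda(\alpha,\theta)$, the chain rule gives $\nabla_\alpha C(\alpha,\theta) = (\nabla_\alpha\lambda(\alpha,\theta))^\top \nabla_\lambda F(\lambda(\alpha,\theta))$, where $\nabla_\alpha\lambda$ is the Jacobian of the occupancy measure with respect to policy parameters — this Jacobian is exactly what $\nabla_\alpha V(\alpha;z)$ encodes through $V(\alpha;z)=\langle z,\lambda(\alpha,\theta)\rangle$, since $\nabla_\alpha V(\alpha;z) = (\nabla_\alpha\lambda)^\top z$. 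So the whole task reduces to expressing $\nabla_\lambda F(\lambda)$ variationally and pushing it through the linear map $(\nabla_\alpha\lambda)^\top$.

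Next I would invoke the Fenchel–Young identity: for convex $F$ with conjugate $F^*$, at a point of differentiability $z^* := \nabla_\lambda F(\lambda)$ is the unique maximizer of $z\mapsto \langle z,\lambda\rangle - F^*(z)$. The key trick of Zhang et al. is to realize $z^*$ not by maximizing alone but as a limit of regularized saddle-point problems: consider, for small $\delta>0$,
\[
x_\delta \in \operatorname*{argmin}_{x}\sup_{z}\Big\{ \langle z,\lambda\rangle + \delta\,\langle z,(\nabla_\alpha\lambda)x\rangle - F^*(z) + \tfrac{\delta}{2}\|x\|^2\Big\}.
\]
The inner supremum over $z$ is $F^{**}$ evaluated at $\lambda + \delta(\nabla_\alpha\lambda)x = F$ evaluated there (by biconjugacy, using convexity and lower semicontinuity of $F$), so the bracket equals $F(\lambda+\delta(\nabla_\alpha\lambda)x) + \tfrac{\delta}{2}\|x\|^2$. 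Minimizing over $x$: the first-order condition is $\delta(\nabla_\alpha\lambda)^\top\nabla_\lambda F(\lambda+\delta(\nabla_\alpha\lambda)x_\delta) + \delta x_\delta = 0$, i.e. $x_\delta = -(\nabla_\alpha\lambda)^\top\nabla_\lambda F(\lambda+\delta(\nabla_\alpha\lambda)x_\delta)$. Letting $\delta\to 0_+$ and using continuity of $\nabla_\lambda F$ in a neighborhood of $\lambda$ (the hypothesis that $F$ is $C^1$ near $\lambda^{\pi_\alpha,P_\theta}$), the right-hand side converges to $-(\nabla_\alpha\lambda)^\top\nabla_\lambda F(\lambda) = -\nabla_\alpha C(\alpha,\theta)$, which is precisely the claimed formula \eqref{eq:min_max_nablaC} after substituting back $V(\alpha;z)=\langle z,\lambda\rangle$, $\nabla_\alpha V(\alpha;z)^\top x = \langle z,(\nabla_\alpha\lambda)x\rangle$, and $F^*$ for the conjugate.

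The steps requiring care — and the main obstacle — are the interchange of $\min_x$ and $\sup_z$ and the well-posedness of the saddle point before passing to the limit. One must justify a minimax theorem on the regularized problem: the objective is concave in $z$ (affine minus convex $F^*$) and, for fixed $z$, convex in $x$ (affine plus $\tfrac{\delta}{2}\|x\|^2$ strongly convex), and the $\tfrac{\delta}{2}\|x\|^2$ term plus compactness of the effective domain of $F^*$ (or a coercivity/growth argument) supplies the conditions for Sion's minimax theorem; the strong convexity in $x$ guarantees $x_\delta$ is unique so the $\operatorname{argmin}$ is single-valued. A second delicate point is that the limit $\delta\to 0_+$ interchanges with the $\operatorname{argmin}$: one shows $x_\delta$ stays bounded (from the fixed-point relation and boundedness of $\nabla_\lambda F$ near $\lambda$), extracts a convergent subnet, and identifies every limit point as $-\nabla_\alpha C(\alpha,\theta)$ by continuity, so the full limit exists and equals that value. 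Since $\nabla_\alpha V(\alpha;z)$ is assumed to always exist (hypothesis), the Jacobian $\nabla_\alpha\lambda$ is well-defined throughout, and these are the only analytic subtleties. I would present the minimax interchange and the $\delta\to0$ limit as the two lemmas doing the real work, with the chain rule and Fenchel–Young identity as bookkeeping.
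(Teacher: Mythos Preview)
Your proposal is correct and follows essentially the same route as the paper's sketch: compute the inner $\sup_z$ via biconjugacy $F^{**}=F$ to get $F(\lambda+\delta(\nabla_\alpha\lambda)x)+\tfrac{\delta}{2}\|x\|^2$, write the first-order condition $x_\delta=-(\nabla_\alpha\lambda)^\top\nabla_\lambda F(\lambda+\delta(\nabla_\alpha\lambda)x_\delta)$, and send $\delta\to 0_+$ using continuity of $\nabla_\lambda F$ and the chain rule. One small remark: since the formula is already $\operatorname{argmin}_x\sup_z$, you simply evaluate the inner supremum pointwise in $x$ before minimizing, so no minimax interchange (Sion) is actually needed; your additional care about boundedness of $x_\delta$ and passage to the limit is more detail than the paper provides, which just defers to Zhang et al.\ for the full argument.
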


It may have a high computational cost if we directly estimate each part at a specific $\alpha$ in $\nabla_\alpha C=\nabla_\lambda F \cdot \nabla_\alpha \lambda$. The variational policy gradient method bypasses this issue  by changing this problem into a problem of calculating some linear functions and the conjugate function at any $z$, shown in \eqref{eq:min_max_nablaC}. 
\citep{zhang2020variational} considers an online setting and thus they need to interact with the environment to estimate $\nabla_\alpha C$. In our offline setting, we can directly solve \eqref{eq:min_max_nablaC} to get $\nabla_\alpha C$. 
An example algorithm to solve \eqref{eq:min_max_nablaC} is given in Appendix~\ref{appendix:nablaC_solve}.  
To evaluate the integral $\int_{\theta \in \Theta} \xi_\alpha^*(\theta) \mu_N(\theta)\nabla_\alpha C(\alpha,\theta)d\theta$, we  use  samples $\theta_k$ to construct the policy gradient estimator
\begin{equation}
\label{eq:gradient_estimator_formulation}
    \nabla_\alpha \rho_{\theta\sim\mu_N}(C(\alpha,\theta))\approx 
    \widehat{g}(\alpha):=\frac{1}{r}\sum_{k=1}^{r}\xi^*(\theta_k)\nabla_\alpha C\left(\alpha,\theta_k\right).
\end{equation}
In this paper, we assume the access to samples from the posterior distribution $\mu_N$. 
While computing the posterior often requires costly methods like Markov Chain Monte Carlo (MCMC), utilizing a conjugate prior yields closed-form updates  for the posterior parameters, improving efficiency. 
Computing the posterior typically requires expensive methods such as Markov Chain Monte Carlo (MCMC).
However, by utilizing a conjugate prior, we obtain a closed-form expression for the posterior parameters, making the calculation more efficient. 
\subsection{Full Algorithm and Episodic Setting}
\label{subsec:Full Algorithm}
To perform policy gradient optimization, we iteratively use the gradient descent step \eqref{eq:gradient_descent_step}, where $\eta_t$ is the step size, and $\operatorname{Proj}_{W}(x)= \operatorname{\arg\min}_{y\in W}\|y-x\|_2^2$ is the projection  into the parameter space $W$. We summarize the full algorithm in Algorithm~\ref{alg:sgd_coherent} in Appendix \ref{appendix:algorithms}.
\begin{equation}
\label{eq:gradient_descent_step}
\begin{split}
\alpha_{t+1}&=\arg\min\limits_{\alpha \in W}\langle \widehat{g}(\alpha_t),\alpha-\alpha_t \rangle+\frac{\eta_t}{2}\|\alpha-\alpha_t\|^2 =\operatorname{Proj}_{W}\left(\alpha_t-\frac{1}{\eta_t}\widehat{g}(\alpha_t)\right).
\end{split}
\end{equation}
So far we have considered the offline setting with a fixed batch of data, but in many application problems data can be collected periodically. Again, consider a self-driving car as an example: the car is trained in an offline setting and then deployed to a real environment for a test drive while collecting more data from the environment. The collected data can be then used to learn about the environment and update the policy. This process can be repeated iteratively. 
Thus, we extend  our approach to an episodic setting as described above. A potential approach is to use Algorithm \ref{alg:sgd_coherent} to make policy updates during each episode, as detailed in Algorithm \ref{alg:sgd_episodic} in Appendix \ref{appendix:algorithms}.

\section{Convergence Analysis}\label{sec:convergence}

In this section, we analyze the convergence properties of Algorithm~\ref{alg:sgd_coherent} and Algorithm~\ref{alg:sgd_episodic}. We begin by establishing the error bound for the policy gradient estimator. Next, we demonstrate the finite-time first-order convergence rate is {
$\mathcal{O}(T^{-1/2}+r^{-1/2})$, where $T$  is the number of policy gradient iterations and $r$ is the sample number of the gradient estimator.} Furthermore, we prove the consistency of the proposed Bayesian risk formulation, meaning that the optimal policy obtained through this formulation converges to the one obtained by solving the true problem as the number of initial data points $N$ approaches infinity. Lastly, for the episodic setting we show the number of iterations required in any episode to maintain an {$\mathcal{O}(\epsilon)$}-error bound over all episodes, which implies the global convergence of Algorithm \ref{alg:sgd_episodic} as $N$ goes to infinity.

\subsection{Estimation Error of the Policy Gradient}
\label{subsec:Estimation Error}
\begin{assumption}
\label{assum:xi_variance} 
Assume  $\xi^*$ in Theorem~\ref{thm:gradient_coherent} satisfies $\sup_{\alpha\in W} \mathrm{Var}_{\theta\sim\mu_N}[\xi^*(\theta)\nabla C(\alpha,\theta)]=\sigma_{\xi}<\infty$.
\end{assumption}
Assumption \ref{assum:xi_variance} requires the uniformly bounded variance of $\xi^{*}\nabla C$. It is hard to show some property of $\xi^*$ in a  general case as the envelope set is given in a general form.  One sufficient condition for Assumption 4.1 to hold is that $\xi^*$ is bounded on  $\Theta$. As  $\Theta$ is a compact and convex set, it is not a strong condition.
\begin{theorem}
\label{thm:estimator_error_coherent}
Assume  Assumption~\ref{assum:C_lp}, \ref{assum:Lipschitz_F_C} and \ref{assum:xi_variance} hold. By using $r$  samples for gradient estimator in \eqref{eq:gradient_estimator_formulation}, the gradient estimation error is
$
    \label{eq:gradient_error}
     \mathbb{E}\left[\|\widehat{g}(\alpha)-g(\alpha)\|_2^2\right]\le\frac{\sigma_\xi}{r}, \forall\alpha \in W.
$
\end{theorem}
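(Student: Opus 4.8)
The estimator $\widehat g(\alpha)=\frac{1}{r}\sum_{k=1}^r \xi^*(\theta_k)\nabla_\alpha C(\alpha,\theta_k)$ is an empirical average of $r$ i.i.d.\ random vectors, where each sample $\theta_k$ is drawn from the posterior $\mu_N$. The plan is to show this average is unbiased for $g(\alpha)$ and then apply the standard variance-of-the-mean identity. First I would observe that by Theorem~\ref{thm:gradient_coherent}, $g(\alpha)=\int_\Theta \xi_\alpha^*(\theta)\mu_N(\theta)\nabla_\alpha C(\alpha,\theta)\,d\theta = \mathbb{E}_{\theta\sim\mu_N}[\xi^*(\theta)\nabla_\alpha C(\alpha,\theta)]$, so that each summand $\xi^*(\theta_k)\nabla_\alpha C(\alpha,\theta_k)$ has expectation exactly $g(\alpha)$; hence $\mathbb{E}[\widehat g(\alpha)]=g(\alpha)$ and $\widehat g(\alpha)-g(\alpha)$ is a centered average.

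Next I would expand the mean squared error coordinate-wise. Writing $Y_k:=\xi^*(\theta_k)\nabla_\alpha C(\alpha,\theta_k)\in\mathbb{R}^d$, we have $\mathbb{E}\|\widehat g(\alpha)-g(\alpha)\|_2^2 = \mathbb{E}\big\|\tfrac{1}{r}\sum_{k=1}^r (Y_k-g(\alpha))\big\|_2^2$. Because the $Y_k$ are i.i.d.\ and each is centered at $g(\alpha)$, the cross terms vanish: $\mathbb{E}[(Y_j-g(\alpha))^\top(Y_k-g(\alpha))]=0$ for $j\neq k$. This leaves $\frac{1}{r^2}\sum_{k=1}^r \mathbb{E}\|Y_k-g(\alpha)\|_2^2 = \frac{1}{r}\,\mathbb{E}\|Y_1-g(\alpha)\|_2^2 = \frac{1}{r}\operatorname{tr}\big(\mathrm{Cov}_{\theta\sim\mu_N}[\xi^*(\theta)\nabla_\alpha C(\alpha,\theta)]\big)$, which by the definition of $\sigma_\xi$ in Assumption~\ref{assum:xi_variance} (interpreting $\mathrm{Var}$ of a vector as the trace of its covariance, equivalently $\mathbb{E}\|Y_1-\mathbb{E}Y_1\|_2^2$) is $\frac{\sigma_\xi}{r}$. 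Taking the supremum over $\alpha\in W$ on both sides, using that Assumption~\ref{assum:xi_variance} bounds the variance uniformly in $\alpha$, gives the claimed bound for all $\alpha\in W$.

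The only genuine subtlety—and the step I would be most careful about—is the well-definedness and finiteness of the quantities involved: one needs $\xi^*(\theta)\nabla_\alpha C(\alpha,\theta)$ to be $\mu_N$-integrable with finite second moment so that the variance decomposition is valid. This is exactly what Assumption~\ref{assum:Lipschitz_F_C}(3) (uniform boundedness of $\nabla_\alpha C$ by $B$) together with Assumption~\ref{assum:xi_variance} secures; the bound on $\|\xi^*\|_q$ built into Definition~\ref{assum: envelop} and Assumption~\ref{assum:C_lp} ensures $\xi^*\nabla_\alpha C\in L_1(\mu_N)$ so that $g(\alpha)$ itself is finite. A minor bookkeeping point is to confirm that the $\xi^*$ appearing in the estimator (obtained per-sample via the SAA problem \eqref{eq:max_min_SAA}) coincides in distribution with evaluating the population-level $\xi_\alpha^*$ at the sampled $\theta_k$; under the stated assumption that \eqref{eq:max_min_SAA} is solved exactly this identification is immediate, so no additional approximation error enters here. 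Beyond these integrability checks the argument is a one-line second-moment computation, so I do not anticipate any real obstacle.
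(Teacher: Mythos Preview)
Your proposal is correct and follows essentially the same route as the paper: identify $g(\alpha)$ as the mean of $\xi^*(\theta)\nabla_\alpha C(\alpha,\theta)$ under $\mu_N$, then apply the i.i.d.\ variance-of-the-mean identity to get $\mathbb{E}\|\widehat g(\alpha)-g(\alpha)\|_2^2 = \tfrac{1}{r}\mathbb{E}\|Y_1-g(\alpha)\|_2^2 \le \sigma_\xi/r$. The paper's proof is in fact just this one-line computation; your additional remarks on integrability and on the identification of the SAA $\xi^*$ with the population $\xi_\alpha^*$ are valid clarifications that the paper leaves implicit.
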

Theorem~\ref{thm:estimator_error_coherent} implies that the sample complexity of $\Theta(1/\epsilon)$ is required to achieve the  estimation error  $\mathcal{O}(\epsilon)$. Please refer to Appendix \ref{appendix:estimator_error_coherent} for the detailed proof.

\subsection{Convergence Analysis}
\label{subsec:convergence_alg}

First we make an assumption about the Lipschitz continuity of $g(\alpha)$ in Assumption~\ref{assum:bounded_gradient}.




\begin{theorem}{(Stationary convergence)}
\label{thm:converge_alg}
Suppose that Assumption~\ref{assum:C_lp}, \ref{assum:Lipschitz_F_C}, \ref{assum:xi_variance} and  \ref{assum:bounded_gradient} hold.
By choosing $\eta_t=2L_G$ in Algorithm \ref{alg:sgd_coherent}, it holds that $
    \mathbb{E}\|\nabla G(\alpha_{\text{out}})\|\le \mathcal{O}\left(T^{-1/2}+r^{-1/2}\right)$.
\end{theorem}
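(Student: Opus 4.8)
\textbf{Proof proposal for Theorem~\ref{thm:converge_alg}.}

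The plan is to run the standard analysis for projected gradient descent on a nonconvex but $L_G$-smooth objective $G$, feeding in the unbiased gradient estimator $\widehat g(\alpha_t)$ whose mean-square error is controlled by Theorem~\ref{thm:estimator_error_coherent}. First I would record the two ingredients I may assume: (i) by Theorem~\ref{thm:gradient_coherent}, $g(\alpha)=\nabla G(\alpha)$, and $G$ is $L_G$-smooth by Assumption~\ref{assum:bounded_gradient}; (ii) by Theorem~\ref{thm:estimator_error_coherent}, $\mathbb{E}\|\widehat g(\alpha_t)-\nabla G(\alpha_t)\|_2^2\le \sigma_\xi/r$ conditional on $\alpha_t$, and in particular $\widehat g$ is an unbiased estimator of $\nabla G$. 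I would also note $G$ is bounded on the compact set $W$ (Assumption~\ref{assum:Lipschitz_F_C}(4),(5) plus boundedness of $\nabla C$), so $G(\alpha_0)-\min_W G =: D_G<\infty$.

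Next I would set up the descent inequality. Writing the update \eqref{eq:gradient_descent_step} as $\alpha_{t+1}=\operatorname{Proj}_W(\alpha_t-\tfrac{1}{\eta_t}\widehat g(\alpha_t))$ with $\eta_t=2L_G$, I would introduce the gradient-mapping surrogate $\mathcal{G}_t := \eta_t(\alpha_t-\alpha_{t+1})$ (the projected analog of the gradient step). Using $L_G$-smoothness of $G$, the defining variational inequality of the projection, and the choice $\eta_t=2L_G$, one obtains the standard one-step bound of the form
\begin{equation*}
\mathbb{E}\big[G(\alpha_{t+1})\big] \le \mathbb{E}\big[G(\alpha_t)\big] - c_1\,\mathbb{E}\big[\|\mathcal{G}_t\|^2\big] + c_2\,\frac{\sigma_\xi}{r},
\end{equation*}
for absolute constants $c_1>0,c_2$ depending only on $L_G$; the error term $c_2\sigma_\xi/r$ is exactly where Theorem~\ref{thm:estimator_error_coherent} enters, since the cross term $\langle \nabla G(\alpha_t)-\widehat g(\alpha_t),\cdot\rangle$ is killed in expectation by unbiasedness and the residual second-order term is bounded by the MSE. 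Telescoping over $t=0,\dots,T-1$, dividing by $T$, and using $G(\alpha_T)\ge \min_W G$ gives
\begin{equation*}
\frac{1}{T}\sum_{t=0}^{T-1}\mathbb{E}\big[\|\mathcal{G}_t\|^2\big] \le \frac{D_G}{c_1 T} + \frac{c_2\sigma_\xi}{c_1 r}.
\end{equation*}
Since $\alpha_{\mathrm{out}}$ is drawn uniformly from $\{\alpha_0,\dots,\alpha_{T-1}\}$, the left side is $\mathbb{E}\|\mathcal{G}_{\mathrm{out}}\|^2$, so $\mathbb{E}\|\mathcal{G}_{\mathrm{out}}\|^2 \le \mathcal{O}(T^{-1}+r^{-1})$, and by Jensen $\mathbb{E}\|\mathcal{G}_{\mathrm{out}}\|\le \mathcal{O}(T^{-1/2}+r^{-1/2})$.

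Finally I would translate the gradient-mapping bound back to $\|\nabla G(\alpha_{\mathrm{out}})\|$. On the bounded domain $W$ the gradient mapping is equivalent to the true stationarity measure: a standard lemma gives $\|\nabla G(\alpha)\| \le \|\mathcal{G}(\alpha)\| + (\text{smoothness term})$, and more precisely $\|\mathcal{G}(\alpha)\|$ upper-bounds the norm of the projected gradient, so $\mathbb{E}\|\nabla G(\alpha_{\mathrm{out}})\| \le \mathcal{O}(T^{-1/2}+r^{-1/2})$ follows (when $W=\mathbb{R}^d$ the two coincide exactly). I expect the main obstacle to be purely expository rather than mathematical: carefully handling the projection step so that the descent inequality is valid for the \emph{constrained} iteration (the unconstrained case is textbook), and being precise about which stationarity measure the statement refers to — I would state the result in terms of the gradient mapping and remark that it reduces to $\|\nabla G\|$ in the unconstrained case, or alternatively invoke the equivalence lemma to keep the clean $\|\nabla G(\alpha_{\mathrm{out}})\|$ form. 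The bias-free property of $\widehat g$ from Theorem~\ref{thm:estimator_error_coherent} is essential; if one only had the MSE bound without unbiasedness, an extra bias term $\sqrt{\sigma_\xi/r}$ would appear but the final rate would be unchanged.
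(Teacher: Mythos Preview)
Your high-level strategy (smoothness descent lemma $+$ telescoping $+$ the MSE bound from Theorem~\ref{thm:estimator_error_coherent} $+$ uniform output) matches the paper, but two concrete details differ from the paper's argument and one of them is a real gap in your plan.

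First, the paper does \emph{not} use unbiasedness at all. It handles the cross term $\langle g_t-\widehat g_t,\alpha_{t+1}-\alpha_t\rangle$ by Young/Cauchy--Schwarz, picking up an additive $\frac{1}{2L_G}\|g_t-\widehat g_t\|_2^2$ (and a second such term later), and only then invokes $\mathbb{E}\|g_t-\widehat g_t\|_2^2\le\sigma_\xi/r$. So your statement that unbiasedness is ``essential'' is off; as you yourself note at the end, the rate survives with only the MSE bound, and that is exactly what the paper does.

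Second, and more importantly, the paper never passes through the gradient mapping. After bounding $G(\alpha_{t+1})$ by $\min_{\alpha\in W}\{G(\alpha_t)+\langle g_t,\alpha-\alpha_t\rangle+\tfrac{3L_G}{2}\|\alpha-\alpha_t\|^2\}+\tfrac{1}{L_G}\|g_t-\widehat g_t\|^2$, it substitutes the specific test point $\alpha=\alpha_t-\tfrac{1}{3L_G}g_t$, which yields $G(\alpha_{t+1})\le G(\alpha_t)-\tfrac{1}{6L_G}\|g_t\|^2+\tfrac{1}{L_G}\|g_t-\widehat g_t\|^2$ and hence a telescoping bound on $\frac{1}{T}\sum_t\|\nabla G(\alpha_t)\|^2$ directly. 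Your proposed route --- bound $\mathbb{E}\|\mathcal G_{\mathrm{out}}\|$ and then ``translate back'' via an inequality of the form $\|\nabla G(\alpha)\|\le\|\mathcal G(\alpha)\|+\text{(smoothness term)}$ --- does not work in general on a constrained domain: at a boundary point the true gradient can be large while the gradient mapping is small, so no such equivalence lemma exists. The paper's test-point trick is precisely what lets it state the result for $\|\nabla G(\alpha_{\mathrm{out}})\|$ rather than for the gradient mapping (tacitly treating the test point as feasible). If you want to keep your gradient-mapping derivation, you should state the conclusion for $\mathcal G$, not for $\nabla G$.
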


Theorem \ref{thm:converge_alg} shows that the gradient bound of the output policy consists of two parts: an asymptotically diminishing error bound $T^{-1/2}$ in the exact setting and an estimation error bound $r^{-1/2}$ of the policy gradient.{ 
The total
sample complexity from the posterior $\mu_N$ to achieve accuracy $\mathcal{O}(\epsilon)$ is $\mathcal{O}(\epsilon^{-4})$ by choosing $T=\epsilon^{-2}$ and $r=\epsilon^{-2}$}. The proof and assumptions are shown in Appendix \ref{appendix:convergenc_alg}. Because of the intrinsic non-convex structure under a fixed posterior, only convergence to a stationary point can be achieved under a fixed posterior (or in other words, in a fixed episode). Detailed discussion about the non-convex structure is provided in Appendix \ref{appendix:non-convex}. However, global convergence can be achieved in the episodic setting as the posterior updates and converges. This is shown in Corollary \ref{thm:global convergence} later.


\begin{theorem}{(Consistency of episodic optimal policy)}
\label{thm:consistency}
Suppose that Assumption~\ref{assum:C_lp},  \ref{assum:Lipschitz_F_C}, \ref{assum:xi_variance},  \ref{assum:bounded_gradient} and  \ref{assum:bdro} hold. Define $G_i(\alpha):=\rho_{\theta\sim \mu_i}(C(\alpha,\theta))$, which is the objective for the posterior $\mu_i$ with data size $i$.
Then we have  $D_i:=\sup_{\alpha\in W} |G_i(\alpha)-C(\alpha,\theta^*)|$ and $    E_i:=\sup_{\alpha\in W}\left\|\nabla_\alpha G_i(\alpha)-\nabla_\alpha C(\alpha,\theta^*)\right\|_2$ tend to $0$ with probability 1 as $i\to \infty$, where the probability is  w.r.t.  the data-generating distribution.
Moreover,
$C(\alpha_i^*,\theta^*)- C(\alpha^*,\theta^*) \to 0$ with probability 1 as $i\to \infty$ , where $\alpha^*_i$ is a global minimizer of  $G_i(\alpha)$ and $\alpha^*$ is a global minimizer of $C(\alpha,\theta^*)$.
\end{theorem}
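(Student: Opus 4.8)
The plan is to establish the consistency statement in three stages: first the uniform-in-$\alpha$ convergence of the value function $G_i$, then the uniform convergence of the gradients $\nabla_\alpha G_i$, and finally the convergence of the optimality gap via a standard argmin-stability argument. For the first stage I would exploit the dual representation of Theorem~\ref{thm:dual_risk}: writing $G_i(\alpha)=\max_{\xi\in\mathcal{U}(\mu_i)}\int \xi(\theta)C(\alpha,\theta)\,d\mu_i(\theta)$ and $C(\alpha,\theta^*)=\max_{\xi\in\mathcal{U}(\delta_{\theta^*})}\int\xi C\,d\delta_{\theta^*}$, the difference $|G_i(\alpha)-C(\alpha,\theta^*)|$ can be bounded by (i) the posterior concentration $\mu_i\to\delta_{\theta^*}$ (which holds w.p.\ 1 under Assumption~\ref{assum:bdro}, a Bayesian consistency / Doob-type assumption) combined with the boundedness and Lipschitz continuity of $C$ from Assumption~\ref{assum:Lipschitz_F_C}, and (ii) the bounded-derivative condition in Definition~\ref{assum: envelop} that controls how much the risk envelope $\mathcal{U}(\mu_i)$ can move as $\mu_i$ perturbs toward $\delta_{\theta^*}$, so that $\sup_\xi|\mathbb{E}_\xi^{\mu_i}[C]-\mathbb{E}_\xi^{\delta_{\theta^*}}[C]|\to0$. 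Because the bound in each step is uniform over $\alpha\in W$ (using $\|\nabla_\alpha C\|_2\le B$ and compactness of $W$), we get $D_i\to0$ a.s.

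For the second stage, the gradient formula of Theorem~\ref{thm:gradient_coherent} gives $\nabla_\alpha G_i(\alpha)=\int\xi^*_{\alpha,i}(\theta)\nabla_\alpha C(\alpha,\theta)\,d\mu_i(\theta)$ while $\nabla_\alpha C(\alpha,\theta^*)=\xi^*_{\alpha,*}(\theta^*)\nabla_\alpha C(\alpha,\theta^*)$ with $\xi^*_{\alpha,*}(\theta^*)=1$ (the envelope over a point mass is the singleton $\{1\}$). I would split $E_i$ into a term where the envelope optimizer is held fixed — controlled again by $\mu_i\to\delta_{\theta^*}$ and the $L_{\theta,2}$-Lipschitz continuity of $\nabla_\alpha C$ in $\theta$ — and a term measuring the drift of the optimal $\xi^*$ as the posterior changes, which is where the twice-differentiability and bounded-derivative hypotheses of Definition~\ref{assum: envelop} do the work (they give continuity of the argmax of the Lagrangian in $\mu_N$, e.g.\ via a maximum-theorem / implicit-function argument on the finite-dimensional reduction). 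Uniformity over $\alpha$ follows from the same compactness and uniform-boundedness assumptions. The a.s.\ statements are then obtained by invoking the posterior consistency assumption once and passing to the full-measure event on which $\mu_i\Rightarrow\delta_{\theta^*}$.

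For the third stage, with $D_i\to0$ a.s.\ the chain
\begin{equation*}
C(\alpha_i^*,\theta^*)\le G_i(\alpha_i^*)+D_i\le G_i(\alpha^*)+D_i\le C(\alpha^*,\theta^*)+2D_i
\end{equation*}
(using optimality of $\alpha_i^*$ for $G_i$ in the middle inequality) together with $C(\alpha_i^*,\theta^*)\ge C(\alpha^*,\theta^*)$ (optimality of $\alpha^*$) squeezes $C(\alpha_i^*,\theta^*)-C(\alpha^*,\theta^*)\to0$ a.s. Note this argument does not even need $E_i$; the gradient-convergence conclusion $E_i\to0$ is a separate claim needed later (for Corollary~\ref{thm:global convergence}) and is handled by stage two.

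The main obstacle I anticipate is stage two, specifically controlling the drift of the risk-envelope optimizer $\xi^*_{\alpha,i}$ as a functional of $\mu_i$ in the infinite-dimensional setting: one must show that the argmax of the Lagrangian $L_\alpha$ over $\mathcal{U}(\mu_i)$ is (Hausdorff-)continuous in the posterior, uniformly in $\alpha$, which is delicate because $\xi^*$ need not be unique and lives in $L_q(\Theta,\mu_i)$ with a base measure that is itself changing. I would address this by reducing to the finite-dimensional SAA-type representation used around \eqref{eq:max_min_SAA}, applying Berge's maximum theorem together with the strict-feasibility (Slater) point $\bar\xi$ guaranteed in Definition~\ref{assum: envelop} to get upper hemicontinuity of the solution correspondence, and then using the bounded-derivative condition to make the modulus of continuity uniform in $\alpha$ and in the data realization. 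The posterior-concentration input (Assumption~\ref{assum:bdro}) is assumed, so that part is not an obstacle, only its interaction with the moving envelope is.
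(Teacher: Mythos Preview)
Your three-stage architecture is sound, and stage three matches the paper's argument exactly. But in stages one and two you are working much harder than necessary, and the ``main obstacle'' you single out --- controlling the drift of the envelope optimizer $\xi^*_{\alpha,i}$ as $\mu_i$ changes --- is not an obstacle at all in the paper's proof; it is completely bypassed.

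The trick the paper uses is the normalization constraint $\int_\Theta \xi(\theta)\,\mu_N(\theta)\,d\theta=1$ built into $\mathcal{U}(\mu_N)$. Because of this, for \emph{any} feasible $\xi$ (in particular $\xi^*_\alpha$) one may write the constant $C(\alpha,\theta^*)=\int_\Theta \xi(\theta)\,\mu_N(\theta)\,C(\alpha,\theta^*)\,d\theta$, and hence
\[
\bigl|G_N(\alpha)-C(\alpha,\theta^*)\bigr|
=\Bigl|\max_{\xi\in\mathcal{U}(\mu_N)}\int_\Theta \xi(\theta)\mu_N(\theta)\bigl[C(\alpha,\theta)-C(\alpha,\theta^*)\bigr]d\theta\Bigr|.
\]
Now split $\Theta=U_{\epsilon_0}\cup V_{\epsilon_0}$ into a neighbourhood of $\theta^*$ and its complement. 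On $U_{\epsilon_0}$ use the $\theta$-Lipschitz bound on $C$; on $V_{\epsilon_0}$ use boundedness of $C$, H\"older's inequality (with the $\|\xi\|_q\le B_q$ constraint), and the exponential posterior-concentration bound supplied by Assumption~\ref{assum:bdro}. Both pieces are uniform in $\alpha$, giving $D_N\to0$. The gradient case is identical: the same normalization lets you write
\[
\nabla_\alpha G_N(\alpha)-\nabla_\alpha C(\alpha,\theta^*)
=\int_\Theta \xi^*_\alpha(\theta)\,\mu_N(\theta)\bigl[\nabla_\alpha C(\alpha,\theta)-\nabla_\alpha C(\alpha,\theta^*)\bigr]d\theta,
\]
and the same $U_{\epsilon_0}/V_{\epsilon_0}$ split (now using the $L_{\theta,2}$-Lipschitz assumption on $\nabla_\alpha C$) yields $E_N\to0$. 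No statement about continuity of $\xi\mapsto\xi^*$, no Berge maximum theorem, no worry about the changing base measure of $L_q(\Theta,\mu_i)$ --- the argument only needs that \emph{whatever} $\xi^*_\alpha$ is, it is normalized and has bounded $q$-norm.

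So your route via argmax-stability could in principle be made to work, but it buys nothing here and introduces exactly the delicate infinite-dimensional set-valued-continuity issues you flag. The paper's approach is strictly more elementary: it never compares envelopes at different posteriors, only integrands.
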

\color{black}
As the  data size $N$ increases, the posterior distribution converges to a Dirac measure, which is a point mass at the true parameter $\theta^*$.  Consequently, the performance of the optimal policy for the posterior $\mu_N$  converges to the optimal policy under the true environment $\theta^*$, as demonstrated in Theorem \ref{thm:consistency}. 
Additional assumptions are required to ensure the convergence of a series of posteriors. Broadly speaking, it is necessary that all parameters and all data points have positive probabilities of being sampled under both the prior and posterior distributions, and that the interchangeability of limits and integrals is satisfied. Detailed proof and assumptions for Theorem \ref{thm:consistency} are provided in Appendix~\ref{appendix:consistency_proof}.
In the episodic setting, we iteratively use the current policy for data collection and posterior updates, and perform policy updates  based on the updated posterior, as described in Algorithm \ref{alg:sgd_episodic}.
A natural question arises: how many iterations are required within a given episode to achieve a certain level of accuracy? This is addressed in Theorem \ref{thm:two_episode}. 
Notably, the inner map $\lambda(\alpha,\theta^*)$ from policy parameter to occupancy measure is not necessarily convex in $\alpha$, though $ F (\lambda,\theta^*)$ is convex in $\lambda$. However, the hidden convexity can be utilized to get the global optimality under the true environment, regardless of the gradient estimation method. By utilizing the local bijection assumption of $\lambda(\cdot,\theta^*)$, a  stationary point is still globally optimal, {shown by Theorem 5.13 in \citep{zhang2021convergence}, which requires Assumption~\ref{assum:local_invertible_parametrization}. 
Assumption~\ref{assum:local_invertible_parametrization} can be satisfied when  $\lambda$ is a locally differentiable bijection on a compact convex set $W$.}

\begin{theorem}{(Finite-episode error bound)}
\label{thm:two_episode}
Suppose that Assumption~\ref{assum:C_lp}, \ref{assum:Lipschitz_F_C}, \ref{assum:xi_variance},  \ref{assum:bounded_gradient}, \ref{assum:bdro} and \ref{assum:local_invertible_parametrization} 
hold. 
Assume that $G_i(\alpha)$ defined in Theorem \ref{thm:consistency} has $L_{G,i}$-Lipschitz continuous gradient.
Let $\{\alpha_{i,j}\}_{i=1}^{N} \ _{j=0}^{t_i}$  be the generated policy parameter sequences for $N$ episodes by Algorithm \ref{alg:sgd_episodic}. 
For any $\epsilon>0$, if we choose $t_{i}=\Theta(L_{G,i}(E_{i-1}+D_{i})\epsilon^{-2})$, $r=\Theta(\epsilon^{-2})$, then we can keep a constant gradient bound
$\mathbb{E}\left[ \left(\sum_{j=0}^{t_{i}-1}\|\nabla G_{i}(\alpha_{i,j})\|_2\right)/{t_{i}}\right]\le\epsilon $
for each $i$. Furthermore, $\mathbb{E}C({\alpha_{\text{out}}},\theta^*)-C(\alpha^*,\theta^*)\le \mathcal{O}(\epsilon+E_N),$ where  $
D_i,E_i
$  are defined in Theorem \ref{thm:consistency}.


\end{theorem}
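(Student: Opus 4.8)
\textbf{Proof proposal for Theorem \ref{thm:two_episode}.}

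The plan is to combine three ingredients that are already available: (i) the stationary convergence rate of the inner policy-gradient loop (Theorem \ref{thm:converge_alg}) applied with the episode-$i$ objective $G_i$ in place of $G$; (ii) the perturbation bounds $D_i$ and $E_i$ from Theorem \ref{thm:consistency}, which control how far $G_i$ and its gradient are from the true-environment objective $C(\cdot,\theta^*)$; and (iii) the hidden-convexity argument (Theorem 5.13 in \citep{zhang2021convergence}, under Assumption \ref{assum:local_invertible_parametrization}) that upgrades a small gradient of $C(\cdot,\theta^*)$ to a small optimality gap $C(\alpha,\theta^*) - C(\alpha^*,\theta^*)$. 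First I would run the finite-time analysis of Algorithm \ref{alg:sgd_coherent} on episode $i$: with $L_{G,i}$-Lipschitz gradient of $G_i$ and the gradient-estimator error bound $\sigma_\xi/r$ from Theorem \ref{thm:estimator_error_coherent}, the standard descent-lemma telescoping over $t_i$ steps gives $\mathbb{E}\big[(\sum_{j=0}^{t_i-1}\|\nabla G_i(\alpha_{i,j})\|_2)/t_i\big] \le \mathcal{O}\big(\sqrt{L_{G,i}\,\Delta_i / t_i} + \sqrt{\sigma_\xi/r}\big)$, where $\Delta_i$ is the initial optimality gap of $G_i$ at the episode's starting point $\alpha_{i,0}$.

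The key subtlety is the size of $\Delta_i$: the episode starts from the last iterate of the previous episode, which is near-stationary for $G_{i-1}$, not for $G_i$. Here I would use the triangle-type bound $\Delta_i \le |G_i(\alpha_{i,0}) - G_{i-1}(\alpha_{i,0})| + (\text{gap of }G_{i-1}\text{ at }\alpha_{i,0})$; the first term is $\mathcal{O}(D_i + D_{i-1})$ by Theorem \ref{thm:consistency}, and the second is controlled by the previous episode's near-stationarity together with the hidden-convexity conversion — a near-stationary point of $G_{i-1}$ has $G_{i-1}$-gap of order $E_{i-1}$ plus the inner-loop residual. This is where the choice $t_i = \Theta(L_{G,i}(E_{i-1}+D_i)\epsilon^{-2})$ and $r = \Theta(\epsilon^{-2})$ comes in: plugging $\Delta_i = \mathcal{O}(E_{i-1}+D_i)$ into the rate above and balancing the two terms against $\epsilon$ yields exactly this iteration count and keeps the per-episode averaged gradient bound at $\epsilon$ uniformly in $i$. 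One then argues inductively that this bound is maintained across all episodes, so that at the end of episode $N$ the output iterate $\alpha_{\text{out}}$ satisfies $\mathbb{E}\|\nabla G_N(\alpha_{\text{out}})\|_2 \le \mathcal{O}(\epsilon)$.

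Finally I would convert this into the claimed optimality gap under the true environment. By $E_N = \sup_\alpha \|\nabla G_N(\alpha) - \nabla C(\alpha,\theta^*)\|_2$, we get $\mathbb{E}\|\nabla_\alpha C(\alpha_{\text{out}},\theta^*)\|_2 \le \mathcal{O}(\epsilon + E_N)$. Then Assumption \ref{assum:local_invertible_parametrization} and Theorem 5.13 of \citep{zhang2021convergence} (hidden convexity via the locally invertible parametrization $\lambda(\cdot,\theta^*)$, together with convexity of $F$ in $\lambda$) imply that a point with small $C(\cdot,\theta^*)$-gradient has small $C(\cdot,\theta^*)$-suboptimality, giving $\mathbb{E}C(\alpha_{\text{out}},\theta^*) - C(\alpha^*,\theta^*) \le \mathcal{O}(\epsilon + E_N)$. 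I expect the main obstacle to be bookkeeping the warm-start term $\Delta_i$ cleanly across episodes — i.e., making rigorous the claim that the end-of-episode iterate, although only approximately stationary for the \emph{wrong} objective $G_{i-1}$, still furnishes an $\mathcal{O}(E_{i-1}+D_i)$-bounded initial gap for $G_i$ — since this is what couples the episodes and justifies the stated $t_i$; the rest is a routine assembly of the descent lemma, the variance bound, and the hidden-convexity lemma.
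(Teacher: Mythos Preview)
Your proposal is essentially the paper's proof: induction on episodes using the inner-loop rate of Theorem~\ref{thm:converge_alg}, the perturbation quantities $E_i,D_i$ to move between $G_i$ and $C(\cdot,\theta^*)$, and the hidden-convexity conversion (which the paper isolates as Lemma~\ref{lem:gradient_bound_function}) for the final optimality gap. Two small refinements are worth noting. First, the warm start $\alpha_{i,0}$ is drawn \emph{uniformly at random} from the previous episode's iterates, not taken to be the last iterate; this is precisely what turns the averaged bound $\mathbb{E}\big[\tfrac{1}{t_{i-1}}\sum_j\|\nabla G_{i-1}(\alpha_{i-1,j})\|\big]\le\epsilon$ into $\mathbb{E}\|\nabla G_{i-1}(\alpha_{i,0})\|\le\epsilon$. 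Second, the paper bounds the initial gap $\Delta_i=G_i(\alpha_{i,0})-G_i(\alpha_i^*)$ by routing directly through $C(\cdot,\theta^*)$ rather than through the $G_{i-1}$-gap: from $\mathbb{E}\|\nabla G_{i-1}(\alpha_{i,0})\|\le\epsilon$ add $E_{i-1}$ to get $\mathbb{E}\|\nabla_\alpha C(\alpha_{i,0},\theta^*)\|\le\epsilon+E_{i-1}$, apply Lemma~\ref{lem:gradient_bound_function} (hidden convexity holds for $C(\cdot,\theta^*)$, not for $G_{i-1}$) to obtain a $C$-gap of order $\epsilon+E_{i-1}$, then add $2D_i$ to pass to the $G_i$-gap. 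This avoids the extra $D_{i-1}$ terms your $|G_i-G_{i-1}|$ detour would pick up and matches exactly the stated choice $t_i=\Theta(L_{G,i}(E_{i-1}+D_i)\epsilon^{-2})$, resolving the bookkeeping you flagged as the main obstacle.
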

\begin{corollary}{(Global convergence)}
\label{thm:global convergence}
    Using the same assumptions and notations in Theorem \ref{thm:two_episode}, $\mathbb{E}C({\alpha_{\text{out}}},\theta^*)-C(\alpha^*,\theta^*)\le \mathcal{O}(\epsilon)$ for any $\epsilon>0$ as $N \rightarrow \infty$.
\end{corollary}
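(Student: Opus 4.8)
The plan is to combine the finite-episode error bound from Theorem \ref{thm:two_episode} with the consistency result from Theorem \ref{thm:consistency} and take the limit $N\to\infty$. Theorem \ref{thm:two_episode} already gives, for the output policy $\alpha_{\text{out}}$ after running Algorithm \ref{alg:sgd_episodic} for $N$ episodes with the prescribed schedule $t_i=\Theta(L_{G,i}(E_{i-1}+D_i)\epsilon^{-2})$ and $r=\Theta(\epsilon^{-2})$, the bound
\begin{equation*}
\mathbb{E}\,C(\alpha_{\text{out}},\theta^*)-C(\alpha^*,\theta^*)\le \mathcal{O}(\epsilon+E_N),
\end{equation*}
where $E_N=\sup_{\alpha\in W}\|\nabla_\alpha G_N(\alpha)-\nabla_\alpha C(\alpha,\theta^*)\|_2$. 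So the entire task reduces to controlling the term $E_N$ as $N\to\infty$.

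First I would invoke Theorem \ref{thm:consistency}, which states precisely that $E_i\to 0$ with probability $1$ as $i\to\infty$ (and likewise $D_i\to 0$), where the probability is taken with respect to the data-generating distribution. This is the consistency of the Bayesian posterior: as the data size grows the posterior $\mu_i$ concentrates at a Dirac mass on $\theta^*$, so both the risk objective $G_i$ and its gradient converge uniformly on the compact set $W$ to the true-environment quantities $C(\cdot,\theta^*)$ and $\nabla_\alpha C(\cdot,\theta^*)$. Consequently, given any $\epsilon>0$, there exists (almost surely) an index $N_0$ such that $E_N\le\epsilon$ for all $N\ge N_0$. Combining this with the Theorem \ref{thm:two_episode} bound yields $\mathbb{E}\,C(\alpha_{\text{out}},\theta^*)-C(\alpha^*,\theta^*)\le\mathcal{O}(\epsilon)$ for all $N\ge N_0$, which is exactly the claimed statement as $N\to\infty$.

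The main subtlety — and the step I would be most careful with — is the interaction between the almost-sure convergence of $E_N$ over the data randomness and the expectation $\mathbb{E}$ appearing in the error bound, which is over the gradient-estimator sampling (and policy-update) randomness. One has to be clear that the two sources of randomness are distinct: the schedule $t_i$ is itself random through its dependence on $E_{i-1},D_i$, and one should either condition on the data sequence (so that $E_N,D_i$ are fixed numbers and Theorem \ref{thm:two_episode} applies verbatim) and then push the limit through, or argue uniform integrability if one wants a statement in unconditional expectation. I would handle this by stating the result conditionally on the realized data path, noting that on the probability-one event where $E_i\to0$ the conclusion holds for all sufficiently large $N$, and remarking that since the bound is uniform once $E_N\le\epsilon$, no further integrability argument is needed for the stated $\mathcal{O}(\epsilon)$ conclusion. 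A minor additional point to check is that the global-optimality comparison in Theorem \ref{thm:two_episode} (via the hidden-convexity / local-bijection Assumption \ref{assum:local_invertible_parametrization} and Theorem 5.13 of \citep{zhang2021convergence}) continues to be valid in the limiting regime, which it does since that assumption concerns $\lambda(\cdot,\theta^*)$ and is independent of $N$.
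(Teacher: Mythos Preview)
Your proposal is correct and follows exactly the same approach as the paper: invoke the bound $\mathbb{E}\,C(\alpha_{\text{out}},\theta^*)-C(\alpha^*,\theta^*)\le \mathcal{O}(\epsilon+E_N)$ from Theorem~\ref{thm:two_episode} and then use Theorem~\ref{thm:consistency} to conclude $E_N\to 0$ as $N\to\infty$. In fact the paper's own argument is a one-line remark to this effect, so your discussion of the data-randomness versus estimator-randomness subtlety is more careful than what the paper provides.
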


Theorem \ref{thm:two_episode} offers theoretical advice on how to choose the iteration number in each episode. 
Generally speaking,  we need fewer iterations to keep the gradient bound when $i$ grows since $D_i, E_i$ approaches $0$. Corollary \ref{thm:global convergence} is a direct result of Theorem \ref{thm:consistency} and  Theorem \ref{thm:two_episode}, and Corollary \ref{thm:global convergence} implies global convergence to the true optimal policy in the episodic setting (when the data size N increases to infinity).
Detailed proof can be found in Appendix~\ref{appendix:proof_of_two_episodes}.
\color{black}
\section{Numerical Experiments}
\label{sec:numerical}
We evaluate our proposed formulation and algorithm on the offline Frozen Lake problem~\citep{ravichandiran2018hands}, an OpenAI benchmark. We refer readers to Appendix~\ref{appendix:implementation}  for a detailed description of the problem. 
We consider different convex loss functions, including the mean and Kullback-Leibler (KL) divergence, for various tasks.
We compare the Bayesian Risk Policy Gradient (BR-PG) algorithm with CVaR risk measure under different risk levels $\beta=0,0.5,0.9$, respectively, with two other methods. 
The first is the empirical approach, which fits a maximum likelihood estimator (MLE) to the data and solves the MDP using the estimated parameters. The second is a modified offline version of distributionally robust Q-learning (DRQL)~\citep{liu2022distributionally}, which uses Q-learning to optimize worst-case performance over a KL divergence ball centered at the MLE kernel.
When the risk level $\beta$ approaches $1$, Bayesian-risk performance is similar as the worst-case performance.
Since we are considering an offline planning problem, we modify the DRQL to interact with an offline simulator that uses the transition kernel with the MLE parameters derived from the data. 
For a fair comparison, we conduct DRQL experiments with different radii of the KL divergence ball. 
\textbf{Linear Loss.}  We consider the linear loss function, which corresponds to the total discounted cost in a classical MDP problem.
This is referred to as one replication, and we repeat  for 50 replications using different independent data sets.
\textbf{Episodic Case.} We consider the episodic setting with 50 replications where the data collection and policy update are alternatively conducted.
\textbf{Mimicking a policy.}  Here we consider a different problem of mimicking an expert policy still using  Frozen Lake environment and 50 replications. The loss function  to minimize is defined as the KL divergence between state occupancy measure under the current policy and the expert state distribution. More implement details for three tasks can be found in Appendix \ref{appendix:implementation}.

\begin{table}[!ht]
\centering
\caption{Results for frozen lake problem. Linear loss and positive-sided variance  at different risk levels $\alpha$ are reported for different algorithms and different data sizes with linear loss function. Standard errors are reported in parentheses. Escape probability $\theta_e=0.02$ and number of data points is $N=5$ and $50$.}
\resizebox{8.5cm}{!}{%
\begin{tabular}{ccccc}
\hline
\multirow{2}{*}{Approach} & \multicolumn{2}{c}{N=5}             
 &\multicolumn{2}{c}{N=50}          
\\ \cline{2-5} 
                          & linear loss & positive-sided variance  & 
                          linear loss & positive-sided variance 
                          \\ \hline
BR-PG ($\beta=0$)         & 33.886(0.347 )     & 5.212        
 & 32.784
(0.00825)     & 0.0026    
\\ \hline
    BR-PG ($\beta=0.5$)       & 33.104
(0.127)     & 0.710                   
& 32.757
(0.00516)     &  0.00119                             
\\ \hline
BR-PG ($\beta=0.9$)       & {32.854
(0.0641)}     & {0.193
}             
 & {32.741
(0.00283)}     & {0.000376}                   
\\ \hline
Empirical Method               & 37.057(0.927)     & 34.387                     & 33.340
(0.0936)     &0.380         
\\ \hline
DRQL(radius=0.05)                  & 37.936(0.887)     & 26.554      & 34.365(0.366)     & 5.139      \\ \hline
DRQL(radius=1)                  & 35.216(0.732)     & 22.213      & 32.924(0.105)     & 0.519       \\ \hline
DRQL(radius=20)                  & 36.255(0.813)     &  24.622     & 32.855(0.063)     & 0.179       \\ \hline

{Optimal Policy under True Model}               & {32.499  }   &     & {32.499  }      &     \\ \hline
\end{tabular}
}
\label{table: frozen lake small,theta0.02}
\end{table}

\begin{figure}[ht]
\centering
    \begin{subfigure}[h]{0.23\textwidth}
         \centering
         \includegraphics[width=\textwidth, height=0.8\textwidth]{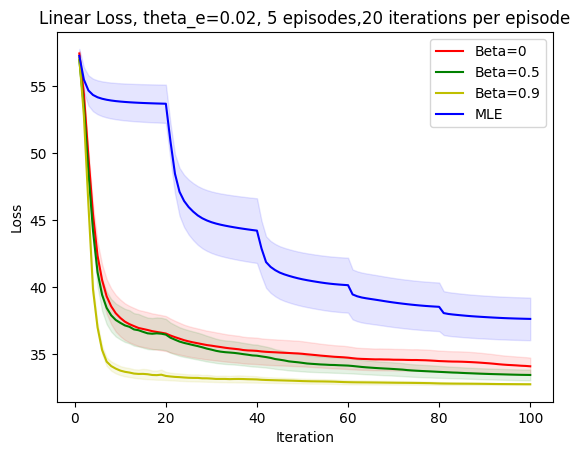}
         \caption{$5\times 20$ total iterations }
         \label{fig:Linear_eps_theta002_5_20}
    \end{subfigure}
    \begin{subfigure}[h]{0.23\textwidth}
         \centering
         \includegraphics[width=\textwidth, height=0.8\textwidth]{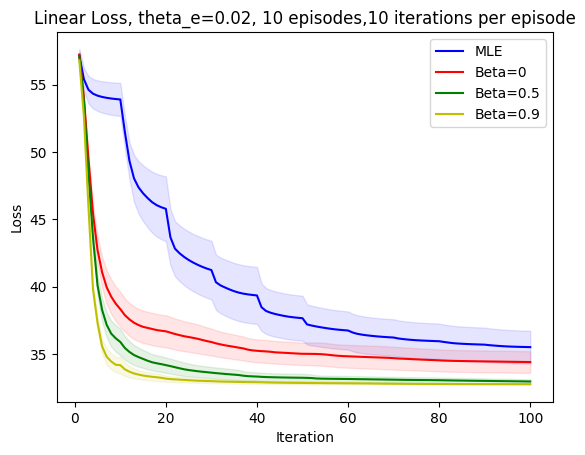}
         \caption{$10\times 10$ total iterations }
         \label{fig:Linear_eps_theta002_10_10}
    \end{subfigure}
    \begin{subfigure}[h]{0.23\textwidth}
         \centering
         \includegraphics[width=\textwidth, height=0.8\textwidth]{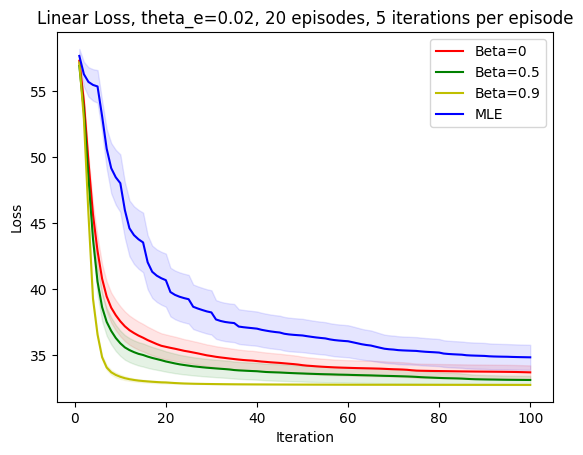}
         \caption{$20\times 5$ total iterations}
         \label{fig:Linear_eps_theta002_20_5}
    \end{subfigure}
    \caption{Results for episodic case with different episode numbers and iterations per episode  under the same escape probability $\theta_e=0.02$ and $50$ replications. Here the loss function is still chosen to be the linear loss. $95\%$ confidence intervals are reported by the shaded bands.}
    \label{fig:eps_linear_theta002}
\end{figure}

\textbf{Conclusions.} In each replication, data points are randomly sampled from the true distribution. While facing the epistemic uncertainty, BR-PG algorithm  provides robustness across different loss functions. Table \ref{table: frozen lake small,theta0.02} shows  that  our BR-PG algorithm has lower linear loss, standard error and positive-sided variance (psv), demonstrating more robustness in the sense of balancing the mean and variability of the actual cost. In contrast, the empirical approach performs badly when the data size is small, e.g. $N=5$, indicating that it is not robust against the epistemic uncertainty and suffers from the scarcity of data. DRQL also performs better than empirical method but worse than our algorithm in the sense of having larger mean and variance of the loss. 
\begin{wrapfigure}{r}{0.5\textwidth}
    \begin{subfigure}[b]{0.22\textwidth}
        \centering
         \includegraphics[width=\textwidth]{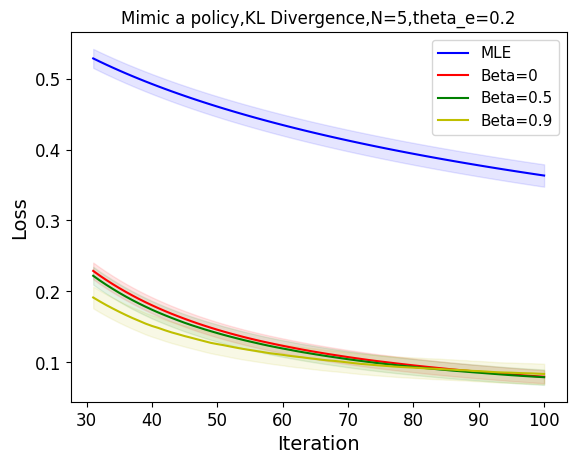}
         \caption{$N=5$}
         \label{fig:KLN5}
    \end{subfigure}
    \begin{subfigure}[b]{0.22\textwidth}
         \centering
         \includegraphics[width=\textwidth]{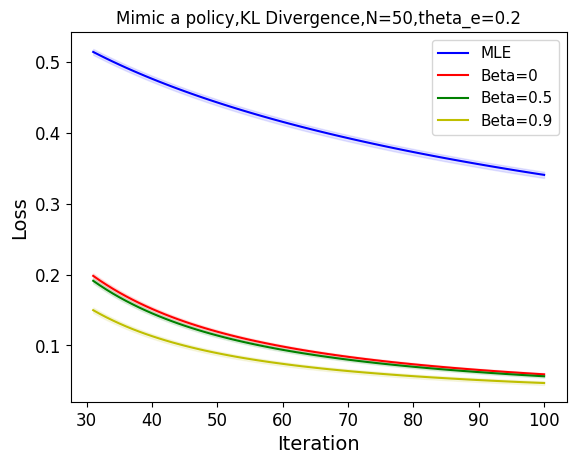}
         \caption{$N=50$}
         \label{fig:KLN50}
    \end{subfigure}
    \caption{Results for loss function "KL Divergence" with data sizes $N=5$ and $50$ under  $\theta_e=0.02$. $95\%$ confidence intervals are reported in the shaded area.}
    \label{fig:KL}
\end{wrapfigure}
Figure~\ref{fig:eps_linear_theta002} shows that the loss of our algorithm decreases  quickly in spite of few data. In the episodic case, the loss function decreases faster with more episodes (but the same total number of iterations), due to more collected data with more episodes. The loss function of our BR-PG method decreases more quickly in early episodes, which is shown by two differences between Figure~\ref{fig:KLN5} and Figure~\ref{fig:KLN50}. First, the 95\% confidence interval, shown in the shaded band around each curve, is narrower for  $N=50$. Second, the absolute loss of $N=50$ decreases by about 20\% compared with $N=5$. Figure~\ref{fig:KL} demonstrates the better performance of our proposed BR-PG algorithm compared to the empirical approach, where we achieve smaller loss and lower variability, for the policy mimicking task.
From Table \ref{table: frozen lake small,theta0.02} and Figure \ref{fig:eps_linear_theta002}, we can see when there are more data, the posterior distribution used in BR-PG algorithm and the MLE estimator used in the empirical approach converges to the true parameter as data size increases, which reduces to solving an MDP with known transition probability, and therefore, the optimal policies and the actual costs tend to be similar. 

\section{Conclusions}\label{sec:conclusion}

In this paper, we develop a Bayesian risk approach to jointly address both epistemic and intrinsic uncertainty in the infinite-horizon MDP. For a general coherent risk measure and a general convex loss function, we design a policy gradient algorithm for the proposed formulation and demonstrate the algorithm’s convergence at a rate of 
{$\mathcal{O}(T^{-1/2}+r^{-1/2})$}. Furthermore, we establish the consistency of an online episodic extension and provide bounds on the number of iterations required to maintain a constant gradient bound {$\mathcal{O}(\epsilon)$} for each episode. The numerical experiments confirm the stationary analysis of the proposed algorithm and demonstrate the robustness of the formulation under various loss functions. 

\bibliography{refs}
\bibliographystyle{icml2025}

\newpage
\appendix
\onecolumn

\section{Review on convex RL}
\label{appendix:convexRL}
Our problem is highly relevant to convex RL, which generalizes cumulative reward on a convex general-utility objective instead of cumulative reward. 
Specifically, our problem is closely tied to convex RL, which extends the traditional cumulative reward framework to a convex general-utility objective. Prior research has explored policy gradient methods to address convex RL. For instance, \citep{zhang2020variational}  demonstrates that the policy gradient of convex RL can be formulated as a min-max optimization problem. To reduce estimator variance, \citep{zhang2021convergence} introduces an off-policy policy gradient estimator that leverages mini-batch techniques and truncation mechanisms, while \citep{barakat2023reinforcement} employs a recursive approach to handle large state-action spaces. In the domain of multi-agent convex RL, \citep{zhang2022multi} assumes global state observability and proposes a trajectory-based actor-critic method. Recent studies have also focused on safe convex RL, where the objective is to maximize a convex utility function under convex safety constraints. For example, \citep{ying2023policy} develops a primal-dual algorithm with strong guarantees on the optimality gap and constraint violations, achieving an 
$\mathcal{O}(1/\epsilon^3)$
 bound in the convex-concave case with zero constraint violation. Building on this, \citep{bai2023achieving} improves the bound to 
$\mathcal{O}(1/\epsilon^2)$. Furthermore, \citep{ying2024scalable} extends the primal-dual framework to multi-agent convex safe RL.

\section{Discussion on Non-convex Structure}
\label{appendix:non-convex}

For any fixed environment parameter $\theta$, the set of occupancy measure is convex, which makes global convergence achievable. But the global convergence cannot be achieved under the Bayesian setting because the set of occupancy measure is nonconvex!  Consider the simple case that $\Theta=\{\theta_1,\theta_2\}$ and the state space and action space are finite. Then for any fixed policy parameter $\alpha$ and environment parameter $\theta_i$, the occupancy measure $\lambda(\alpha,\theta_i)$ is a $|S||A|$-dimensional vector. Under any fixed $\theta_i$, for any $t\in[0,1]$ and policy parameters $\alpha_1,\alpha_2$, the convex combination $t\lambda(\alpha_1,\theta_i)+(1-t)\lambda(\alpha_2,\theta_i)$ is also an occupancy measure corresponding to some policy $\alpha_3$ under $\theta_i$. This hidden convexity can be utilized to achieve global convergence for a fixed $\theta$. However, the occupancy measure is a $2|S||A|$-dimensional vector under the Bayesian setting, and we need an additional constraint to guarantee that the occupancy measures under different environments correspond to the same policy $\alpha$:
\begin{equation*}
    \frac{\lambda(\alpha,\theta_1)_{s,a}}{\sum_{a^{\prime}} \lambda(\alpha,\theta_1)_{s,a'}}= \frac{\lambda(\alpha,\theta_2)_{s,a}}{\sum_{a^{\prime}} \lambda(\alpha,\theta_2)_{s,a'}},\forall s,a.
\end{equation*}

This constraint means that the agent chooses the action $a$ at state $s$ with the same probability under two environments, which makes the  set of occupancy measure nonconvex. In other words, for any $t\in[0,1],\alpha_1,\alpha_2$, there may not exist a policy $\alpha_3$ 
 such that $t(\lambda(\alpha_1,\theta_1),\lambda(\alpha_1,\theta_2))+(1-t)(\lambda(\alpha_2,\theta_1),\lambda(\alpha_2,\theta_2))=(\lambda(\alpha_3,\theta_1),\lambda(\alpha_3,\theta_2)).$
 Thus, the global convergence cannot be achieved due to the lack of intrinsic convexity under the Bayesian setting with a fixed posterior. 
 However, as the data size $N$ increases, the posterior distribution converges to a Dirac measure, which is a point mass at the true parameter $\theta^*$. Consequently, the performance of the optimal policy for the posterior $\mu_N$ converges to the optimal policy under the true environment, as demonstrated in Theorem \ref{thm:consistency}.  What's more, the global optimality gap will converge to any accuracy $\epsilon$ when the data size $N$ increases, as shown in Theorem \ref{thm:two_episode}.  In other words, global convergence can be achieved in our episodic setting (when the data size $N$ increases to infinity).
\color{black}
\newpage

\section{Discussion on Assumption \ref{assum:Lipschitz_F_C}}
\label{appendix:Lipshitz}

Recall that $C(\alpha, \theta):=F\left(\lambda^{\pi_\alpha, P_\theta}, P_\theta\right)$. By chain rule, $\nabla_\alpha C(\alpha, \theta)=\nabla_\lambda F \cdot \nabla_\alpha \lambda^{\pi_\alpha}$. Similar to classical Policy Gradient Theorem, it holds that $\nabla_\alpha \lambda^{\pi_\alpha}(s, a)=\lambda^{\pi_\alpha}(s, a) \nabla_\alpha \log \pi_\alpha(a \mid s)$. Thus, the behavior of $\nabla_\alpha C(\alpha, \theta)$ depends on the regularity of both $\nabla_\lambda F$ and $\nabla_\alpha \log \pi_\alpha(a \mid s)$. In classical RL analysis like \citet{agarwal2021theory,papini2018stochastic}, one typically assumes that $\nabla_\alpha \log \pi_\alpha(a \mid s)$ is either Lipschitz continuous or uniformly bounded, and per-step rewards $r(s, a)$ are assumed to be bounded for all ( $s, a$ ) pairs, which together implies the smoothness of the expected return with respect to the policy parameter. In our setting, we replace the usual expected return with a general convex loss function $F$. Consequently, we must impose smoothness conditions on $F$. Specifically, if $F(\lambda)$ is Lipschitz continuous with Lipschitz-continuous gradient, and $\nabla_\alpha \log \pi_\alpha(a \mid s)$ is Lipschitz continuous and bounded, then Assumption 3.2 is satisfied.

 We can demonstrate that the classic Linear-Quadratic Regulator (LQR), perhaps the simplest continuous-action benchmark, satisfies all of these assumptions.

The dynamics is
\begin{align*}
s_{t+1}=A s_t+B a_t+w_t, \quad w_t \sim \mathcal{N}\left(0, \Sigma_w\right)
\end{align*}
and the policy is Gausssian
\begin{align*}
a_t \sim \pi_\alpha\left(\cdot \mid s_t\right)=\mathcal{N}\left(K s_t, \Sigma_a\right)
\end{align*}
with parameter $\alpha=\operatorname{vec}(K)$. Define the loss function $F$ to be any convex function in $\lambda$. Since both the transition and the policy are affine and Gaussian, the joint law ($s_t, a_t$) is Gaussian at every $t$. Then the occupancy measure is a discounted summation of Gaussian distributions.
Recall that  the behavior of $\nabla_\alpha C(\alpha, \theta)$ depends on the regularity of both $\nabla_\lambda F$ and $\nabla_\alpha \log \pi_\alpha(a \mid s)$.  In the example of LQR, the first and second derivatives of $\log \pi_\alpha(a \mid s)$ are bounded since the policy is Gaussian.
Specifically, if $F(\lambda)$ is Lipschitz continuous with Lipschitz-continuous gradient, and since $\nabla_\alpha \log \pi_\alpha(a \mid s)$ is Lipschitz continuous and bounded in this LQR problem, then Assumptions 3.2 (2)(3) are satisfied.

In the example of $\mathrm{LQR}, \Theta$ is a set containing possible $A, B, \Sigma_w$. Assumptions 3.2 (4) is about the compactness and convexity of $\Theta$, which is not strong. One sufficient condition for Assumption 4.1 to hold is that $\xi^*$ is bounded on $\Theta$. As $\Theta$ is a compact and convex set, it is not a strong condition.

\section{Discussion on Differences between Our Method and \citet{tamar2015policy}}

\label{appendix:difference_tamar}
 We want to obtain $\nabla_\alpha\left[\rho_{\theta \sim \mu_N}(C(\alpha, \theta))\right]$, where the derivative is taken with respect to the policy parameter $\alpha$ in the general loss function $C(\alpha, \theta)$. On the other hand, the problem in \citet{tamar2015policy} is how to get $\nabla_\alpha\left[\rho_{\theta \sim \mu_N(\alpha)}(D(\theta))\right]$ for a random variable $D(\theta)$, where the derivative is taken with respect to $\alpha$ in the distribution $\mu_N$. The difference in settings essentially leads to different forms of Lagrangians and causes the failure to apply their results to our setting. What's more, $\Theta$ is a finite set in their setting, but $\Theta$ can be an uncountable continuous subset of some $\mathbb{R}^d$ in our setting. As a result, when we use the Envelope Theorem to prove the result, we are facing an infinite-dimensional optimization problem over functions, which is a much harder problem than their finite-dimensional optimization problem over vectors. Briefly speaking, in our proof we ensure differentiability and integrability conditions hold uniformly, and construct a separable set of disturbation functions as the domain for Lagrangians.

The original envelope theorem, as presented by \citet{milgrom2002envelope}, primarily addresses finite-dimensional parameter spaces. \citet{tamar2015policy} utilize a discrete-parameter space framework for policy gradients in risk-sensitive Markov Decision Processes (MDPs), restricting their applicability to problems with finite and discrete parameter settings.
Our extension generalizes the envelope theorem to handle continuous parameter spaces, significantly broadening the applicability of policy gradient methods to a wider class of problems, such as those involving continuous uncertainty sets or continuous Bayesian posterior distributions over model parameters. Specifically, we address the additional complexities introduced by functional optimization in infinite-dimensional spaces, which involves ensuring differentiability and integrability conditions hold uniformly.

This generalization is nontrivial as it requires overcoming challenges associated with infinite-dimensional optimization, such as ensuring the boundedness and continuity of gradients and validating strong duality under more complex integrative constraints. As such, our contribution facilitates the development of theoretically sound policy gradient methods capable of addressing a broader and more practical range of MDP formulations where uncertainty is represented continuously. This makes our methodology independently interesting to researchers in stochastic control, reinforcement learning, and risk-sensitive optimization.

\section{Algorithms}
\label{appendix:algorithms}
\begin{algorithm}[ht]
\caption{Bayesian Risk Policy Gradient (BR-PG)}
\label{alg:sgd_coherent}
\begin{algorithmic}
\State \textbf{input}: Initial $\alpha_0$, data $\zeta^{(N)}$ of size $N$, prior distribution $\mu_0(\theta)$, iteration number $T$;
\State Calculate the posterior $\mu_N(\theta)=\frac{P_{\theta}(\zeta^{(N)})\mu_0(\theta)}{\int_{\Theta} P_{\theta'}(\zeta^{(N)})\mu_0(\theta')d\theta'}$;
\For{ $t =0$ to $T-1$}
\State Sample $\{\theta_k^t\}_{k=1}^{r}$ from $\mu_N(\theta)$;
\State Use the closed-form expression or solve  \eqref{eq:max_min_SAA} to get $\xi^*(\theta_k^t)$;

\State Solve \eqref{eq:min_max_nablaC} to get $\nabla_\alpha C\left(\alpha_t,\theta_k^t\right)
$ for  $k=1.\dots,r$;
\State $\widehat{g_t}:=\frac{1}{r}\sum_{k=1}^{r}\xi^*(\theta_k^t)\nabla_\alpha C\left(\alpha,\theta_k^t\right)$;
\State $\alpha_{t+1}=\operatorname{Proj}_{W}\left(\alpha_t-\frac{1}{\eta_t}\widehat{g}_t\right)$;
\EndFor
\State \textbf{output}: Choose  $\alpha_{\text{out}}$ uniformly from $\alpha_0,\dots,\alpha_{T-1}$.
\end{algorithmic}
\end{algorithm}

\begin{algorithm}[ht]
\caption{ Episodic BR-PG}
\label{alg:sgd_episodic}
\begin{algorithmic}
\State \textbf{input}: Initial $\alpha_0$, prior distribution $\mu_0(\theta)$,  total episode number $N$.
\State Deploy policy $\pi(\alpha_0)$ to gain the initial data set $\zeta^{(1)}$.
\For{ $i =1$ to $N$}
\If{$i=1$}
\State $\alpha_{i,0}=\alpha_0$
\Else
\State Let $\alpha_{i,0}$ to be uniformly chosen from $\alpha_{i-1,0},\dots,\alpha_{i-1,t_{i-1}-1}$;
\EndIf
\State Deploy policy $\pi(\alpha_{i,0})$ to gain a  data set $\zeta^{(i)}$.
\State Calculate the posterior $\mu_i(\theta)=\frac{P_{\theta}(\zeta^{(i)})\mu_{i-1}(\theta)}{\int_{\theta'} P_{\theta'}(\zeta^{(i)})\mu_{i-1}(\theta')}$;
\State Use Algorithm \ref{alg:sgd_coherent} with $t_i$ iterations and initial point $\alpha_{i,0}$ to generate the policy parameter sequence $\alpha_{i,1},\cdots,\alpha_{i,t_i}$.
\EndFor
\State \textbf{output}: Let $\alpha_{\text{out}}$ to be randomly and uniformly chosen from $ \alpha_{N,0},\dots,\alpha_{N,t_N-1}$.
\end{algorithmic}
\end{algorithm}

\newpage
\section{Proof Details}

\subsection{Definition of Radon Measure}
\label{appendix: radon}
\begin{definition}
    $\mu_N$ is a Radon measure on $\Theta$ if (i) $\mu_N(\Theta)$ is finite, (ii) for all Borel set $E\subseteq \Theta$, we have $\mu_N(E)=\inf\{\mu_N(U):E\subseteq U, U \text{ is open}\}$ and $\mu_N(E)=\sup\{\mu_N(K):K\subseteq E,K \text{ is compact}\}.$ 
\end{definition}
For a continuous parameter space $\Theta$, if the prior is a continuous distribution and the likelihood function is continuous in $\theta$, then the posterior is Radon. And it always holds for discrete cases.  Thus it hold in most cases that we may care about, and most common probability distributions are Radon Measures.

\subsection{Proof of Theorem \ref{thm:gradient_coherent}}
\label{appendix:gradient_coherent}

\begin{proof}
\begin{equation*}
    \begin{split}
  \mathcal{U}\left(\mu_N\right)=\{\xi :& g_e\left(\xi,\mu_N\right)=0, \forall e \in \mathcal{E}, f_i\left(\xi, \mu_N\right) \leq 0, \forall i \in \mathcal{I},\\
  &\int_{\theta \in \Theta} \xi(\theta) \mu_N(\theta)=1, \xi(\theta) \geq 0,\|\xi\|_q\le B_q\}. 
\end{split}
\end{equation*}

Define the Lagrangian:
\begin{equation}
L_\alpha(\xi,\lambda^{\mathcal{I}},\lambda^{\mathcal{E}})=\int_{\theta \in \Theta} \xi(\theta) \mu_N(\theta) C(\alpha,\theta)-\sum_{i \in \mathcal{I}} \lambda^{\mathcal{I}}(i) f_i\left(\xi, \mu_N\right)-\sum_{e \in \mathcal{E}} \lambda^{\mathcal{E}}(e) g_e\left(\xi, \mu_N\right),
\end{equation}
and a subtly relaxed envelope
\begin{equation*}
\begin{split}
  \mathcal{U}'\left(\mu_N\right)=\{\xi : 
  &\int_{\theta \in \Theta} \xi(\theta) \mu_N(\theta)=1, \xi(\theta) \geq 0,,
\|\xi\|_q \le B_q\}.  
\end{split}
\end{equation*}

As mentioned before, we can rewrite the objective as the value of a max-min problem in \eqref{eq:dual_problem}
\[\rho_{\theta\sim\mu_N}(C(\alpha,\theta))=\max_{\xi\in \mathcal{U}'(\mu_N)}\min _{\lambda^{\mathcal{I}} \geq 0,\lambda^{\mathcal{E}} } L_\alpha(\xi,\lambda^{\mathcal{I}},\lambda^{\mathcal{E}}).
\]

Two things deserved to be noticed:
(i) Slater's condition holds in the primal optimization problem \eqref{eq:dual_problem} by Definition \ref{assum: envelop}.
(ii) $L_\theta\left(\xi, \lambda^{\mathcal{I}},\lambda^{\mathcal{E}}\right)$ is concave in $\xi$ and convex in $(\lambda^{\mathcal{I}},\lambda^{\mathcal{E}}).$
Then strong duality holds for \eqref{eq:dual_problem}.
\begin{equation}
\label{eq:saddle_problem}
    \begin{split}
        \rho_{\theta\sim\mu_N}(C(\alpha,\theta))&=\max_{\xi\in \mathcal{U}'(\mu_N)}\min _{\lambda^{\mathcal{I}} \geq 0,\lambda^{\mathcal{E}} } L_\alpha(\xi,\lambda^{\mathcal{I}},\lambda^{\mathcal{E}})\\
&=\min _{\lambda^{\mathcal{I}} \geq 0,\lambda^{\mathcal{E}} }\max_{\xi\in \mathcal{U}'(\mu_N)}L_\alpha(\xi,\lambda^{\mathcal{I}},\lambda^{\mathcal{E}})
     \end{split}
\end{equation}

As  $\nabla_\alpha C(\alpha,\theta)$ is uniformly bounded for all $\theta$ and $\alpha$, we have $\nabla_\alpha L_\alpha(\xi,\lambda^{\mathcal{I}},\lambda^{\mathcal{E}})$ is uniformly bounded w.r.t $\alpha$  and continuous at all$(\xi,\lambda^{\mathcal{I}},\lambda^{\mathcal{E}})$. Then we have
$L_\alpha(\xi,\lambda^{\mathcal{I}},\lambda^{\mathcal{E}})$ is absolutely continuous w.r.t $\alpha$ for all $(\xi,\lambda^{\mathcal{I}},\lambda^{\mathcal{E}})$.
Since $\nabla_\alpha^2 C(\alpha,\theta)$ is uniformly bounded for all $\theta$ and $\alpha$, we have $\{L_\alpha(\xi,\lambda^{\mathcal{I}},\lambda^{\mathcal{E}})\}_{(\xi, \lambda^{\mathcal{I}}, \lambda^{\mathcal{E}})}$ is equi-differentiable in $\alpha$.

As $\Theta$ is compact and convex,   $\Theta$ is a separable metric space with Euclidean metric  and its Borel sigma algebra. Then $(\Theta,\mu_N)$ is a  separable metric measure space.  By  Theorem 4.13 \citep{brezis2011functional}, $L^q(\Theta,\mu_N)$ is separable. Then $ \mathcal{U}'\left(\mu_N\right)=\{\xi \in L^q(\Theta,\mu_N): \int_{\theta \in \Theta} \xi(\theta) \mu_N(\theta)=1, \xi(\theta) \geq 0,,
\|\xi\|_q \le B_q\}  $ is separable.

Define the set of saddle point for \eqref{eq:saddle_problem} by $X^*=\arg \max_{\xi \in \mathcal{U}'\left(\mu_N\right)}\min _{\lambda^{\mathcal{I}} \geq 0,\lambda^{\mathcal{E}} } L_\alpha(\xi,\lambda^{\mathcal{I}},\lambda^{\mathcal{E}})$
and $Y^*=\arg \min _{\lambda^{\mathcal{I}} \geq 0,\lambda^{\mathcal{E}} } \max_{\xi \in \mathcal{U}'\left(\mu_N\right)}L_\alpha(\xi,\lambda^{\mathcal{I}},\lambda^{\mathcal{E}})$.

 Then for every selection of saddle point $\left(\xi_\alpha^*, \lambda_\alpha^{*, \mathcal{E}}, \lambda_\alpha^{*, \mathcal{I}}\right) \in X^*\times Y^*$, the Envelope theorem for saddle-point problems ( Theorem 4\citep{milgrom2002envelope} ) shows that

 \begin{equation}
     \begin{split}
     \nabla_\alpha \rho_{\theta\sim\mu_N}(C(\alpha,\theta))=
        &\nabla_\alpha \max_{\xi \in \mathcal{U}'\left(\mu_N\right)}\min _{\lambda^{\mathcal{I}} \geq 0,\lambda^{\mathcal{E}} } L_\alpha(\xi,\lambda^{\mathcal{I}},\lambda^{\mathcal{E}})\\
        &=\left.\nabla_\alpha L_\alpha(\xi,\lambda^{\mathcal{I}},\lambda^{\mathcal{E}})\right|_{\left(\xi_\alpha^*, \lambda_\alpha^{*, \mathcal{E}}, \lambda_\alpha^{*, \mathcal{I}}\right)} \\
        &=\int_{\theta \in \Theta} \xi_\alpha^*(\theta) \mu_N(\theta)\nabla_\alpha C(\alpha,\theta)
     \end{split}
 \end{equation}
 \end{proof}

\subsection{Proof of Lemma~\ref{thm:nablaC}}
\label{appendix:nablaC_solve}
\begin{proof}
Here is a brief proof sketch, and the full proof can be found in the proof of Theorem 3.1\citep{zhang2020variational}.  For a convex function, the  conjugate of the conjugate is itself. Notice that $V(\alpha ; z)+\delta \nabla_\alpha V(\alpha ; z)^{\top} x-F^*(z) =\langle z, \lambda(\alpha,\theta)+\delta \nabla_\alpha \lambda(\alpha,\theta) x\rangle-F^*(z)  $.
 Then we have $\sup_{z\in \mathbb{R}^{SA}}V(\alpha ; z)+\delta \nabla_\alpha V(\alpha ; z)^{\top} x-F^*(z)=F(\lambda(\alpha,\theta)+\delta \nabla_\alpha \lambda(\alpha,\theta) x) $. By the first-order condition, we have \[\underset{x\in \mathbb{R}^{SA}}{\operatorname{argmin}} F(\lambda(\alpha,\theta)+\delta \nabla_\alpha \lambda(\alpha,\theta) x)+\frac{\delta}{2}\|x\|_2^2=- \nabla F\left(\lambda(\alpha,\theta)+\delta \nabla_\alpha \lambda(\alpha,\theta) x\right)\nabla_\alpha \lambda(\alpha,\theta) x).\]

By letting $\delta \to 0^+$ and using the chain rule, we get the result \eqref{eq:min_max_nablaC}.
\end{proof}

\subsubsection{Algorithm for solving Theorem \ref{thm:nablaC}}
\label{subsec:solve_min_max}
\textbf{Estimate $V(\alpha,z)$:} Recall that we consider an offline setting where the transition kernel $P_\theta$ is assumed to be known for any given $\theta$. For any fixed transition kernel $P_\theta$ and policy $\pi_\alpha$, we can estimate the occupancy measure by making a truncation $K$ in the definition of occupancy measure in \eqref{eq:def_occu}:
\begin{equation*}
 \widehat{\lambda_{s a}^{\pi,P}} =\sum_{t=0}^{K} \gamma^t \cdot \mathbb{P}\left(s_t=s, a_t=a \mid \pi, s_0 \sim \tau,P \right)
\end{equation*}
with the error $\|\widehat \lambda -\lambda\|_1\le \epsilon_\lambda:=\gamma^K/(1-\gamma) $. This error can be made arbitrarily small by increasing $K$, thus we assume that we can exactly compute occupancy measure. After computing the occupancy measure, $V(\alpha; z) = \langle z, \lambda\rangle$.

\textbf{Estimate $\nabla_\alpha V(\alpha,z)$:}
The policy gradient theorem \citep{sutton1999policy} shows that \[\nabla_\alpha V(\alpha ; z)=\mathbb{E}^{\pi_\alpha}\left[\sum_{t=0}^{\infty} \gamma^t Q^{\pi_\alpha}\left(s_t, a_t ; z\right) \cdot \nabla_\alpha \log \pi_\alpha\left(a_t \mid s_t\right)\right]\]
where $Q^\pi(s, a ; z):=\mathbb{E}^\pi\left[\sum_{t=0}^\infty \gamma^t z\left(s_t, a_t\right) \mid s_0=s, a_0=a, a_t \sim \pi\left(\cdot \mid s_t\right)\right]$ satisfying the Bellman equation
\begin{equation}
\label{eq:bellman_Q}
       Q^\pi(s, a ; z)=\mathbb{E}[z(s,a)]+\sum_{s'\in\mathcal{S}}\sum_{a'\in\mathcal{A}} P_\theta(s'|s,a)\pi(a'|s')Q^\pi(s', a' ; z).
\end{equation}

For any given $\theta$, policy $\pi$ and cost function $z$, we can solve the Bellman equation \eqref{eq:bellman_Q} exactly to get $Q(\cdot,\cdot)$. It can be seen that $\nabla_\alpha V(\alpha ; z)$ is a linear function of $\lambda$:
\[
\nabla_\alpha V(\alpha ; z)=\sum_{s\in\mathcal{S}}\sum_{a\in\mathcal{A}} Q(s,a)\dot\nabla_\alpha \log \pi_\alpha\left(a \mid s\right)\lambda(s,a).
\]


For any $\theta$, policy $\pi$ and cost function $z$, we can calculate the $Q$ value function by solving the Bellman equation:
\begin{equation*}
       Q^\pi(s, a ; z)=\mathbb{E}[z(s,a)]+\sum_{s'\in\mathcal{S}}\sum_{a'\in\mathcal{A}} P_\theta(s'|s,a)\pi(a'|s')Q^\pi(s', a' ; z) 
\end{equation*}

Then we can use Algorithm \ref{alg:alternative_gd} to solve \eqref{eq:min_max_nablaC} in Lemma~\ref{thm:nablaC}. It should be noticed that $\delta \nabla_\alpha V(\alpha ; z)^{\top} x=\mathcal{O}(\delta)$ is omitted when calculating the gradient for $z$ as $\delta\to 0$. Thus we omit this term when calculating the gradient for $z$. To evaluate the integral $\int_{\theta \in \Theta} \xi_\alpha^*(\theta) \mu_N(\theta)\nabla_\alpha C(\alpha,\theta)$, we sample i.i.d $\theta_k$ from $\mu_N$ for $k=1,\dots,r$, then we can construct the policy gradient estimator
\begin{equation*}
    \nabla_\alpha \rho_{\theta\sim\mu_N}(C(\alpha,\theta))\approx \widehat{g}(\alpha):=\frac{1}{r}\sum_{k=1}^{r}\xi^*(\theta_k)\nabla_\alpha C\left(\alpha,\theta_k\right).
\end{equation*}
\begin{algorithm}
\caption{Alternative Gradient Descent Method}
\label{alg:alternative_gd}
\begin{algorithmic}

\State \textbf{input}: initial $z_0,x_0$, step sizes ${a_t},{b_t}$, iteration number $T$, transition kernel parameter $\theta$, policy parameter $\alpha$;

\For{$t =0$ to $T-1$}
\State$
z_{t+1}=z_t+a_t [\lambda(\alpha,\theta)-\nabla F^*(z_t)]
$
\State $x_{t+1}=x_t-b_t[\nabla_\alpha V(\alpha ; z)+x_t]$, where
$\nabla_\alpha V(\alpha ; z)=\sum_{s,a} Q(s,a)\dot\nabla_\alpha \log \pi_\alpha\left(a \mid s\right)\lambda(s,a)$
\EndFor
\State \textbf{output}: $-x_T$.
\end{algorithmic}
\end{algorithm}

\subsection{Proof of Theorem~\ref{thm:estimator_error_coherent}}
\label{appendix:estimator_error_coherent}


\begin{proof}
     By Theorem \ref{thm:gradient_coherent}, the true gradient is 
  \begin{equation*}
      g(\alpha)=\int_{\theta \in \Theta} \xi_\alpha^*(\theta) \mu_N(\theta)\nabla_\alpha C(\alpha,\theta). 
  \end{equation*}
And our gradient estimator is 
\begin{equation*}
   \widehat{g}(\alpha):=\frac{1}{r}\sum_{k=1}^{r}\xi^*(\theta_k)\nabla_\alpha C\left(\alpha,\theta_k\right).
\end{equation*}
Then we have
\begin{equation*}
    \begin{split}
       \mathbb{E}\|\widehat{g}-g\|_2^2&\le \frac{1}{r} \mathbb{E}
     \|\xi^*(\theta_1)\nabla_\alpha C\left(\alpha,\theta_1\right)-\int_\Theta \xi^*(\theta)\mu_N(\theta)\nabla_\alpha C(\alpha,\theta)d\theta\|_2^2\le \frac{\sigma_\xi}{r}.
    \end{split}
\end{equation*}
\end{proof}

\subsection{Proof of Theorem \ref{thm:converge_alg}}
\label{appendix:convergenc_alg}
First, we make an assumption about $G$.
\begin{assumption}
\label{assum:bounded_gradient}
There exists some $L_G>0$ s.t. $g(\alpha)$ is $L_G$-Lipschitz continuous in $\alpha$.  
\end{assumption}

\begin{proof}
   For ease of notation, denote $g(\alpha_t)$ as $g_t$ and $\widehat{g}(\alpha_t)$ as $\widehat{g}_t$. By Assumption \ref{assum:bounded_gradient}, we have
\begin{equation}
\label{eq:L_G_inequality}
\begin{split}
G(\alpha) &\le  G\left(\alpha_t\right)+\left\langle g_t, \alpha-\alpha_t\right\rangle+\frac{L_G}{2}\left\|\alpha-\alpha_t\right\|_2^2\\
&\le G(\alpha)+L_G\left\|\alpha-\alpha_t\right\|_2^2 .
\end{split}
\end{equation}
Then we have

\begin{equation*}
    \begin{split}
        G(\alpha_{t+1}) &\le G\left(\alpha_t\right)+\left\langle \widehat {g_t}, \alpha_{t+1}-\alpha_t\right\rangle+\left\langle g_t-\widehat{g_t}, \alpha_{t+1}-\alpha_t\right\rangle   +\frac{L_G}{2}\left\|\alpha_{t+1}-\alpha_t\right\|_2^2\\
        & \le G\left(\alpha_t\right)+\left\langle \widehat {g_t}, \alpha_{t+1}-\alpha_t\right\rangle+\frac{1}{2L_G}\|g_t-\widehat{g_t}\|_2^2+\frac{L_G}{2}\|\alpha_{t+1}-\alpha_t\|_2^2+\frac{L_G}{2}\left\|\alpha_{t+1}-\alpha_t\right\|_2^2\\
        &=G\left(\alpha_t\right)+\left\langle \widehat {g_t}, \alpha_{t+1}-\alpha_t\right\rangle+\frac{1}{2L_G}\|g_t-\widehat{g_t}\|_2^2+L_G\|\alpha_{t+1}-\alpha_t\|_2^2\\
        &=\min_{\alpha \in W}G(\alpha_t)+\langle \widehat{g}_t,\alpha-\alpha_t \rangle+L_G\|\alpha-\alpha_t\|_2^2 +\frac{1}{2L_G}\|g_t-\widehat{g_t}\|_2^2\\
        &=\min_{\alpha \in W}G(\alpha_t)+\langle g_t,\alpha-\alpha_t \rangle+L_G\|\alpha-\alpha_t\|_2^2 +\langle \widehat{g_t}-g_t, \alpha-\alpha_t\rangle +\frac{1}{2L_G}\|g_t-\widehat{g_t}\|_2^2\\
        &\le \min_{\alpha \in W} G(\alpha_t)+
        \langle g_t,\alpha-\alpha_t \rangle+L_G\|\alpha-\alpha_t\|_2^2+\frac{L_G}{2}\|\alpha-\alpha_t\|_2^2+\frac{1}{2L_G}\|g_t-\widehat{g_t}\|_2^2+\frac{1}{2L_G}\|g_t-\widehat{g_t}\|_2^2 \\
        &=\min_{\alpha \in W} G(\alpha_t)+
        \langle g_t,\alpha-\alpha_t \rangle+\frac{3L_G}{2}\|\alpha-\alpha_t\|_2^2+\frac{1}{L_G}\|g_t-\widehat{g_t}\|_2^2\\
        &= G(\alpha_t)-\frac{1}{6L_G} \|g_t\|_2^2+\frac{1}{L_G}\|g_t-\widehat{g_t}\|_2^2
    \end{split}
\end{equation*}
where the first inequality comes from \eqref{eq:L_G_inequality}, the second inequality comes from Cauchy–Schwarz inequality, the second equality holds because the definition of $\alpha_{t+1}$, the  third inequality holds because of 
Cauchy–Schwarz inequality again, and the fifth equality holds by taking $\alpha=\alpha_t-\frac{1}{3L_g}g_t$. Telescoping over $t$, we have

$$
\frac{\sum_{t=0}^{T-1}\|g_t\|_2^2}{T}\le \frac{6L_G}{T}(G(\alpha_0)-G(\alpha_T))+6\sum_{t=0}^{T-1}\frac{\|g_t-\widehat{g}_t\|_2^2}{T}
$$
Since
$$
\mathbb{E}\left[\left\|g_t-\widehat{g_t}\right\|_2^2\right] \leq \frac{\sigma_{\xi}}{r}
$$
Then we have
$$
\mathbb{E}\|g_\text{out}\|_2^2\le \frac{6L_G}{T}(G(\alpha_0)-\mathbb{E}G(\alpha_T))+6\frac{\sigma_{\xi}}{r}
$$
Let $T=\epsilon^{-2},r=\epsilon^{-2}$, then 
$$
\mathbb{E}\|g_\text{out}\|_2^2=\mathcal{O}(\epsilon^2)
$$
and then 
$$
\mathbb{E}\|g_\text{out}\|_2=\mathcal{O}(\epsilon)
$$

\end{proof}
\color{black}

\subsection{Proof of Theorem \ref{thm:consistency}}
\label{appendix:consistency_proof}

\begin{assumption}
\label{assum:bdro}
{(Assumption 3.1 in \citep{shapiro2023bayesian})}
~
\begin{enumerate}
    \item[(1)] The set $\Theta$ is convex compact with nonempty interior.
    \item[(2)] $\ln \mu_{0}(\theta)$ is bounded on $\Theta$, i.e., there are constants $c_1>c_2>0$ such that $c_1 \geq \mu_0(\theta) \geq c_2$ for all $\theta \in \Theta$.
    \item[(3)] $P^*(\zeta)>0$ for any $\zeta\in\Xi$. 
    \item[(4)] $P_{\theta}(\zeta )>0$, and hence $\mu_N(\theta )>0$, for all $\xi \in \Xi$ and $\theta \in \Theta$.
    \item[(5)] $P_{\zeta}(\xi )$ is continuous in $\theta \in \Theta$.
    \item[(6)] $\ln P_{\theta}(\zeta ), \theta \in \Theta$, is dominated by an integrable (w.r.t. $P_*$ ) function.
\end{enumerate}
\end{assumption}

Assumption~\ref{assum:bdro} (1), (2) are used to guarantee the uniform convergence of posterior. Assumption~\ref{assum:bdro} (3), (4) require that all data points has positive probability to be sampled under the prior and posterior. Assumption~\ref{assum:bdro} (5), (6) are used to exchange the order of limit and integral.

With Assumption~\ref{assum:bdro}, we are now ready to prove Theorem~\ref{thm:consistency}. Define a function $\psi(\theta)=\mathbb{E}_{P^*}[\ln P_\theta (\xi)] $ and let $\Theta^*:=\{\theta' \in \Theta: \psi(\theta')=\inf_{\theta \in \Theta} \psi(\theta)\}.$
For $\epsilon>0$, define sets
\[
V_\epsilon:=\left\{\theta \in \Theta: \psi\left(\theta^*\right)-\psi(\theta) \geq \epsilon\right\}, U_\epsilon:=\Theta \backslash V_\epsilon=\left\{\theta \in \Theta: \psi\left(\theta^*\right)-\psi(\theta)<\epsilon\right\}.
\]
First we need to show two intermediate lemmas.
\begin{lemma}{(Lemma 3.1. \citep{shapiro2023bayesian})}
Suppose that Assumption \ref{assum:bdro} holds. Then for $0<\epsilon_2<\epsilon_1<\epsilon_0$, it follows that w.p. 1 for $N$ large enough
\[
\sup _{\theta \in V_{\epsilon_0}} \mu_N(\theta) \leq \kappa(\epsilon_2)^{-1} e^{-N(\epsilon_1-\epsilon_2)},
\]
where $V_{\epsilon_0}$ and $U_{\epsilon_0}$ are defined in (3.2), and $\kappa({\epsilon_2}):=\int_{U_{\epsilon_2}} d \theta$.
\end{lemma}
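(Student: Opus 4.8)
The plan is to prove a concentration bound for the posterior density $\mu_N(\theta)$ restricted to the ``bad'' region $V_{\epsilon_0}$, where the Kullback--Leibler-type discrepancy $\psi(\theta^*)-\psi(\theta)$ is at least $\epsilon_0$. The starting point is the explicit Bayes formula
\[
\mu_N(\theta)=\frac{P_\theta(\zeta^{(N)})\,\mu_0(\theta)}{\int_\Theta P_{\theta'}(\zeta^{(N)})\,\mu_0(\theta')\,d\theta'},
\]
and the observation that, because the data $\zeta_1,\dots,\zeta_N$ are i.i.d.\ from $P^*$, the log-likelihood decomposes as $\ln P_\theta(\zeta^{(N)})=\sum_{n=1}^N \ln P_\theta(\zeta_n)$, whose normalized version $\frac1N\sum_{n=1}^N\ln P_\theta(\zeta_n)$ converges (by the strong law of large numbers, uniformly in $\theta$ by Assumptions~\ref{assum:bdro}(1),(5),(6) via a standard compactness/ULLN argument) to $\psi(\theta)=\mathbb{E}_{P^*}[\ln P_\theta(\zeta)]$.

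First I would bound the numerator over $\theta\in V_{\epsilon_0}$. Using the ULLN, w.p.\ $1$ for $N$ large enough we have $\sup_{\theta\in\Theta}\big|\frac1N\ln P_\theta(\zeta^{(N)})-\psi(\theta)\big|\le \tfrac12(\epsilon_0-\epsilon_1)$ for any chosen $\epsilon_1<\epsilon_0$, so that on $V_{\epsilon_0}$,
\[
\tfrac1N\ln P_\theta(\zeta^{(N)})\le \psi(\theta)+\tfrac12(\epsilon_0-\epsilon_1)\le \psi(\theta^*)-\epsilon_0+\tfrac12(\epsilon_0-\epsilon_1)=\psi(\theta^*)-\tfrac12(\epsilon_0+\epsilon_1).
\]
Second I would lower-bound the denominator by restricting the integral to the ``good'' set $U_{\epsilon_2}$ for some $0<\epsilon_2<\epsilon_1$: on $U_{\epsilon_2}$ we have $\psi(\theta')>\psi(\theta^*)-\epsilon_2$, and again using the uniform convergence (refining the ULLN threshold to $\tfrac12(\epsilon_1-\epsilon_2)$ if needed) we get $\frac1N\ln P_{\theta'}(\zeta^{(N)})\ge \psi(\theta^*)-\epsilon_2-\tfrac12(\epsilon_1-\epsilon_2)=\psi(\theta^*)-\tfrac12(\epsilon_1+\epsilon_2)$ for $\theta'\in U_{\epsilon_2}$. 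Combining with the bounds $c_1\ge\mu_0\ge c_2>0$ from Assumption~\ref{assum:bdro}(2), the denominator is at least $c_2\,e^{N(\psi(\theta^*)-\frac12(\epsilon_1+\epsilon_2))}\,\kappa(\epsilon_2)$ where $\kappa(\epsilon_2)=\int_{U_{\epsilon_2}}d\theta>0$ (positivity of this Lebesgue measure follows since $\theta^*$ lies in the interior of $\Theta$ by Assumption~\ref{assum:bdro}(1) and $\psi$ is continuous, so $U_{\epsilon_2}$ contains a neighborhood of $\theta^*$).

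Dividing the numerator bound by the denominator bound, for $\theta\in V_{\epsilon_0}$,
\[
\mu_N(\theta)\le \frac{c_1\,e^{N(\psi(\theta^*)-\frac12(\epsilon_0+\epsilon_1))}}{c_2\,\kappa(\epsilon_2)\,e^{N(\psi(\theta^*)-\frac12(\epsilon_1+\epsilon_2))}}=\frac{c_1}{c_2\,\kappa(\epsilon_2)}\,e^{-\frac{N}{2}(\epsilon_0-\epsilon_2)}.
\]
Absorbing the constant $c_1/c_2$ and re-indexing the $\epsilon$'s to match the statement (replacing $\frac12(\epsilon_0-\epsilon_2)$ by $\epsilon_1-\epsilon_2$ after renaming, which is the form quoted) yields $\sup_{\theta\in V_{\epsilon_0}}\mu_N(\theta)\le \kappa(\epsilon_2)^{-1}e^{-N(\epsilon_1-\epsilon_2)}$ w.p.\ $1$ for $N$ large enough. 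The main obstacle I anticipate is making the uniform-in-$\theta$ law of large numbers rigorous: one needs the dominating integrable envelope (Assumption~\ref{assum:bdro}(6)) together with continuity of $\theta\mapsto\ln P_\theta(\zeta)$ (Assumption~\ref{assum:bdro}(5)) and compactness of $\Theta$ to invoke a uniform SLLN, and then to handle the order of ``w.p.\ $1$'' quantifiers carefully so that a single null set works simultaneously for all the $\epsilon$-thresholds; the rest is bookkeeping. Since this lemma is quoted verbatim from \citep{shapiro2023bayesian} (Lemma 3.1), I would cite that source for the ULLN details and present only the skeleton above.
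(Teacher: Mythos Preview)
The paper does not actually prove this lemma; it merely quotes it verbatim from \citep{shapiro2023bayesian} (Lemma~3.1) and uses it as a black box in the proof of Theorem~\ref{thm:consistency}. Your sketch is the standard argument for this type of posterior concentration bound---upper-bounding the numerator of the Bayes formula on $V_{\epsilon_0}$ and lower-bounding the denominator via the good set $U_{\epsilon_2}$, both controlled through a uniform law of large numbers for $\frac1N\sum_n\ln P_\theta(\zeta_n)\to\psi(\theta)$ on the compact $\Theta$---and it is correct in outline; you have also correctly identified that the only real work is the ULLN step and that citing the source suffices here.
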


\begin{lemma}
    Suppose that Assumption \ref{assum:bdro} holds. $\forall \delta >0$, $\exists \epsilon>0$ such that $d(\theta,\Theta^*)<\delta$ for all $ \theta \in U_\epsilon $.
\end{lemma}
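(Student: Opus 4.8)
The statement to prove is a purely topological/analytic lemma about the function $\psi$ and the sets $U_\epsilon$: for every $\delta > 0$ there is $\epsilon > 0$ so that $d(\theta, \Theta^*) < \delta$ whenever $\theta \in U_\epsilon$. The plan is to argue by contradiction using compactness of $\Theta$ and continuity of $\psi$. I should first record why $\psi$ is continuous on $\Theta$ (this uses Assumption~\ref{assum:bdro}(5),(6): $P_\theta(\zeta)$ is continuous in $\theta$ and $\ln P_\theta(\zeta)$ is dominated by a $P^*$-integrable function, so dominated convergence gives continuity of $\psi(\theta) = \mathbb{E}_{P^*}[\ln P_\theta(\zeta)]$), and why $\Theta^*$ is a nonempty compact set (it is the preimage of the minimum value under the continuous $\psi$ on the compact set $\Theta$, hence closed and bounded; nonempty since a continuous function on a compact set attains its infimum). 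Note also $\Theta^*$ as written should really be the \emph{arg\,min} set, and $\theta^*$ the true parameter, which lies in $\Theta^*$ under the usual identifiability/KL arguments; but for the lemma as stated I only need $\psi(\theta^*) = \inf_\Theta \psi$, i.e.\ $\theta^* \in \Theta^*$, so $U_\epsilon = \{\theta : \inf_\Theta \psi \le \psi(\theta) < \inf_\Theta\psi + \epsilon\}$ is the $\epsilon$-sublevel neighborhood of the minimizers.

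The core argument: fix $\delta > 0$ and suppose for contradiction that for every $n \in \mathbb{N}$ there exists $\theta_n \in U_{1/n}$ with $d(\theta_n, \Theta^*) \ge \delta$. By compactness of $\Theta$, pass to a convergent subsequence $\theta_{n_k} \to \bar\theta \in \Theta$. On one hand $d(\bar\theta, \Theta^*) \ge \delta > 0$ since the distance function $\theta \mapsto d(\theta,\Theta^*)$ is continuous (it is $1$-Lipschitz) and $\Theta^*$ is closed, so $\bar\theta \notin \Theta^*$. On the other hand $\psi(\theta_{n_k}) < \inf_\Theta \psi + 1/n_k$, and by continuity of $\psi$, $\psi(\bar\theta) = \lim_k \psi(\theta_{n_k}) \le \inf_\Theta \psi$; combined with $\psi(\bar\theta) \ge \inf_\Theta \psi$ this forces $\psi(\bar\theta) = \inf_\Theta \psi$, i.e.\ $\bar\theta \in \Theta^*$ — a contradiction. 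Hence for some $n$ we have $d(\theta,\Theta^*) < \delta$ for all $\theta \in U_{1/n}$, and we take $\epsilon = 1/n$; monotonicity $U_{\epsilon'} \subseteq U_\epsilon$ for $\epsilon' \le \epsilon$ then gives the claim for all smaller $\epsilon$ too.

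I do not anticipate a serious obstacle here — the lemma is standard compactness boilerplate. The only point that needs care is justifying continuity of $\psi$ rigorously from Assumption~\ref{assum:bdro}, since $\psi$ is defined as an expectation: I would spell out the dominated-convergence step, noting that for any sequence $\theta_j \to \theta$ we have $\ln P_{\theta_j}(\zeta) \to \ln P_\theta(\zeta)$ pointwise in $\zeta$ by part (5), with the common integrable envelope from part (6), hence $\psi(\theta_j) \to \psi(\theta)$. A secondary subtlety worth a sentence is the well-definedness and closedness of $\Theta^*$: since $\Theta$ is compact (Assumption~\ref{assum:bdro}(1)) and $\psi$ is continuous, $\inf_\Theta \psi$ is attained, $\Theta^* = \psi^{-1}(\{\inf_\Theta\psi\})$ is a nonempty closed subset of the compact $\Theta$, and hence $d(\cdot, \Theta^*)$ is a well-defined continuous function vanishing exactly on $\Theta^*$. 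With these two points in place the contradiction argument closes immediately.
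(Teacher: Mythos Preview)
Your proposal is correct and follows essentially the same approach as the paper's proof: both argue by contradiction, extract a sequence $\theta_n \in U_{1/n}$ with $d(\theta_n,\Theta^*)\ge\delta$, pass to a convergent subsequence by compactness of $\Theta$, and use continuity of $\psi$ to conclude the limit lies in $\Theta^*$, yielding the contradiction. Your write-up is in fact more careful than the paper's, since you explicitly justify the continuity of $\psi$ via dominated convergence from Assumption~\ref{assum:bdro}(5),(6) and the nonemptiness/closedness of $\Theta^*$, whereas the paper simply asserts ``$\psi$ is continuous'' without comment.
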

\begin{proof}
    We prove this lemma by contradiction. Suppose that $\exists \delta_0 >0$ such that $\forall \epsilon>0 $, there exists $\theta \in \Theta $ satisfying  $\psi\left(\theta^*\right)-\psi(\theta)<\epsilon$ and  $d(\theta,\Theta^*) \ge\delta_0$.

    Choose $\epsilon=\frac{1}{n}$ and then get a sequence $\{\theta_n\}_{n=1}^\infty$. As $\Theta$ is compact, there exists a subsequence of $\{\theta_n\}_{n=1}^\infty$ that converge to a point $\theta' \in \Theta$ satisfying $d(\theta',\Theta^*) \ge\delta_0.$  As $\psi$ is continuous, $\psi(\theta')=\psi(\theta^*)$. Contradiction!
\end{proof}

Then we can prove Theorem \ref{thm:consistency}
\begin{proof}
For any $\delta>0$, we can  choose $\epsilon_0$ such that $d(\theta,\Theta^*)\le \delta $ for $\theta \in U_{\epsilon_0}$. Then we have
\begin{equation*}
\begin{split}
     &|\rho_{\theta\sim\mu_N}(C(\alpha,\theta))-C(\alpha,\theta^*)|\\
  & =  |\max _{\xi: \xi  \in \mathcal{U}\left(\mu_N\right)} \int_{\theta \in \Theta} \xi(\theta) \mu_N(\theta)[C(\alpha,\theta) -C(\alpha,\theta^*)]d\theta|\\
    &\le \max _{\xi: \xi \in \mathcal{U}\left(\mu_N\right)} 
    \int_{U_{\epsilon_0}} \xi(\theta) \mu_N(\theta)|C(\alpha,\theta) -C(\alpha,\theta^*)| d\theta 
    +\max _{\xi: \xi  \in \mathcal{U}\left(\mu_N\right)} \int_{V_{\epsilon_0}} \xi(\theta) \mu_N(\theta)|C(\alpha,\theta) -C(\alpha,\theta^*)|d\theta \\
    &\le \sup_{\|\theta- \theta^*\|\le \delta}|C(\alpha,\theta) -C(\alpha,\theta^*)|+2\sup_{\alpha\in W,\theta\in \Theta}|C(\alpha,\theta)|\max _{\xi: \xi \mu_N \in \mathcal{U}\left(\mu_N\right)} \int_{V_\epsilon} \xi(\theta) \mu_N(\theta)d\theta 
 \end{split} 
\end{equation*}

By Holder's Inequality, we have
\begin{equation*}
    \begin{split}
        \int_{V_\epsilon} \xi(\theta) \mu_N(\theta) d\theta &=        \int_{V_\epsilon} \xi(\theta) \mu_N(\theta)^{1/q}\mu_N(\theta)^{1/p} d\theta\\
        &\le \left[\int_{V_\epsilon} \xi(\theta)^q \mu_N(\theta)d\theta\right]^{1/q} \left[\int_{V_\epsilon}  \mu_N(\theta)d\theta\right]^{1/p}\\
        &\le B_q\kappa(\epsilon_2)^{-1/p} e^{-N(\epsilon_1-\epsilon_2)/p}\cdot Vol(\Theta)^{1/p}
    \end{split}
\end{equation*}

Thus
\begin{align*}
   |\rho_{\theta\sim\mu_N}(C(\alpha,\theta))-C(\alpha,\theta^*)|
\le \delta L_\theta+2B_q\kappa(\epsilon_2)^{-1/p} e^{-N(\epsilon_1-\epsilon_2)/p}Vol(\Theta)^{1/p}\sup_{\alpha\in W,\theta\in \Theta}|C(\alpha,\theta)|, 
\end{align*}
where the inequality holds because $C(\alpha,\theta)$ is $B-$ Lipschitz continuous w.r.t. $\theta$. 
This implies $D_N\to 0$ as $N\to \infty$ since $\delta$ is arbitrary.
Then we have 
\begin{equation*}
    \begin{split}
   &C(\alpha_N^*,\theta^*)-C(\alpha^*,\theta^*)\\
   &=C(\alpha_N^*,\theta^*)-\rho_{\theta\sim\mu_N}(C(\alpha_N^*,\theta))+\rho_{\theta\sim\mu_N}(C(\alpha_N^*,\theta))-\rho_{\theta\sim\mu_N}(C(\alpha^*,\theta))+\rho_{\theta\sim\mu_N}(C(\alpha^*,\theta))-C(\alpha^*,\theta^*)\\
    &\le  2\delta B+4B_q\kappa(\epsilon_2)^{-1/p} e^{-N(\epsilon_1-\epsilon_2)/p}Vol(\Theta)^{1/p}\sup_{\alpha\in W,\theta\in \Theta}|C(\alpha,\theta)|,
    \end{split}
\end{equation*}
Let $N\to \infty$ and recall that $\delta$ is arbitary, we get the result.

Define
\begin{align*}
    E_N:=\sup_{\alpha\in W}\left\|\nabla_\alpha \rho_{\theta\sim\mu_N}(C(\alpha,\theta))-\nabla_\alpha C(\alpha,\theta^*)\right\|_2.
\end{align*}
Similar to $D_N$, we have
\begin{align*}
    \begin{aligned}
        &\left\|\nabla_\alpha \rho_{\theta\sim\mu_N}(C(\alpha,\theta))-\nabla_\alpha C(\alpha,\theta^*)\right\|_2\\
        &=\left\|\int_{\Theta} \xi_\alpha^*(\theta) \mu_N(\theta)[\nabla_\alpha C(\alpha,\theta)-\nabla_\alpha C(\alpha,\theta^*)]d\theta\right\|_2\\
        &\le  \int_{U_{\epsilon_0}} \xi_\alpha^*(\theta) \mu_N(\theta)\|\nabla_\alpha C(\alpha,\theta) -\nabla_\alpha C(\alpha,\theta^*)\|_2 d\theta 
    +\int_{V_{\epsilon_0}} \xi_\alpha^*(\theta) \mu_N(\theta)\|\nabla_\alpha C(\alpha,\theta) -\nabla_\alpha C(\alpha,\theta^*)\|_2d\theta \\
    &\le \delta L_{\theta,2}+2B_q\kappa(\epsilon_2)^{-1/p} e^{-N(\epsilon_1-\epsilon_2)/p}Vol(\Theta)^{1/p}\sup_{\alpha\in W,\theta\in \Theta}\|\nabla_\alpha C(\alpha,\theta)\|,
    \end{aligned}
\end{align*}
which implies $E_N\to 0$ as $N\to \infty$ since $\delta$ is arbitrary.

\end{proof}
\subsection{Proof of Theorem \ref{thm:two_episode} and Corollary \ref{thm:global convergence}}

\label{appendix:proof_of_two_episodes}

\begin{assumption}
\label{assum:local_invertible_parametrization}{(Assumption 5.11 in \citep{zhang2021convergence})}
 For policy parameterization $\pi_\alpha$, $\alpha$ overparametrizes the set of policies in the following sense. (i). For any $\alpha$ and $\lambda(\alpha)$ under the true environment $P_{\theta^*}$, there exist (relative) neighourhoods $\alpha \in \mathcal{U}_\alpha \subset W$ and $\lambda(\alpha) \in \mathcal{V}_{\lambda(\alpha)} \subset \lambda(W,\theta^*)$ s.t. $\left(\lambda|_{\mathcal{U}_\alpha}\right)(\cdot)$ forms a bijection between $\mathcal{U}_\alpha$ and $\mathcal{V}_{\lambda(\alpha)}$, where $\left(\lambda \mid \mathcal{U}_\alpha\right)(\cdot)$ is the confinement of $\lambda$ onto $\mathcal{U}_\alpha$. We assume $\left(\lambda \mid \mathcal{U}_\alpha\right)^{-1}(\cdot)$ is $\ell_\alpha$-Lipschitz continuous and $\left(\lambda \mid \mathcal{U}_\alpha\right)(\cdot)$ is $L_\lambda$-Lipschitz smooth  for any $\alpha$ . (ii). Let $\pi_{\alpha^*}$ be the optimal policy under the true environment. Assume there exists $\bar{\epsilon}$ small enough, s.t. $(1-\epsilon) \lambda(\alpha)+\epsilon \lambda\left(\alpha^*\right) \in$ $\mathcal{V}_{\lambda(\alpha)}$ for $\forall \epsilon \leq \bar{\epsilon}, \forall \alpha$.
\end{assumption}





Under the true environment $P_{\theta^*}$, the set of all occupancy measures is a convex set, and there is an bijection between all policies and all occupancy measures. More discussions can be found in Section 5.2 in \citep{zhang2021convergence}. Based on this observation, we have $\min_\pi  F(\lambda^\pi,\theta^*)=\min_{\lambda} F(\lambda,\theta^*)$, which turns the non-convex policy optimization problem into a convex optimization problem over occupancy measure. Then any stationary point is also globally optimal, which is shown in the following lemma. 
\begin{lemma}
\label{lem:gradient_bound_function}
Assume that Assumption \ref{assum:local_invertible_parametrization} holds.  Then $C(\bar{\alpha},\theta^*)-C(\alpha^*,\theta^*)\le \mathcal{O}(\|\nabla_\alpha C(\bar{\alpha},\theta^*)\|_2),\forall \bar{\alpha}\in W$.
\end{lemma}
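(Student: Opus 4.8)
The plan is to exploit the hidden convexity afforded by Assumption~\ref{assum:local_invertible_parametrization}: although $C(\cdot,\theta^*)$ is nonconvex in $\alpha$, the composition $F(\lambda,\theta^*)$ is convex in the occupancy measure $\lambda$, and $\alpha\mapsto\lambda(\alpha,\theta^*)$ is a local bijection with Lipschitz inverse. First I would fix an arbitrary $\bar\alpha\in W$ and consider the convex combination $\lambda_\epsilon := (1-\epsilon)\lambda(\bar\alpha,\theta^*)+\epsilon\lambda(\alpha^*,\theta^*)$ for small $\epsilon\le\bar\epsilon$; by part (ii) of Assumption~\ref{assum:local_invertible_parametrization}, $\lambda_\epsilon\in\mathcal V_{\lambda(\bar\alpha)}$, so there is a policy parameter $\alpha_\epsilon:=(\lambda|_{\mathcal U_{\bar\alpha}})^{-1}(\lambda_\epsilon)\in\mathcal U_{\bar\alpha}$ with $\|\alpha_\epsilon-\bar\alpha\|_2\le \ell_\alpha\|\lambda_\epsilon-\lambda(\bar\alpha,\theta^*)\|_2 = \ell_\alpha\epsilon\|\lambda(\alpha^*,\theta^*)-\lambda(\bar\alpha,\theta^*)\|_2$, which is $\mathcal O(\epsilon)$ since $W$ is bounded and occupancy measures are bounded.

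Next I would use convexity of $F$ in $\lambda$ together with the definition $C(\alpha,\theta^*)=F(\lambda(\alpha,\theta^*),\theta^*)$:
\begin{align*}
C(\alpha_\epsilon,\theta^*)=F(\lambda_\epsilon,\theta^*)\le (1-\epsilon)C(\bar\alpha,\theta^*)+\epsilon\, C(\alpha^*,\theta^*),
\end{align*}
so that $C(\bar\alpha,\theta^*)-C(\alpha^*,\theta^*)\le \tfrac{1}{\epsilon}\bigl(C(\bar\alpha,\theta^*)-C(\alpha_\epsilon,\theta^*)\bigr)$. Then I would Taylor-expand $C(\cdot,\theta^*)$ around $\bar\alpha$: since $\alpha_\epsilon\in\mathcal U_{\bar\alpha}$ lies in a neighborhood and $\nabla_\alpha C(\cdot,\theta^*)$ is Lipschitz on $W$ (this follows from Assumption~\ref{assum:Lipschitz_F_C} and the $L_\lambda$-Lipschitz smoothness of $\lambda|_{\mathcal U_{\bar\alpha}}$), a standard descent-lemma bound gives
\begin{align*}
C(\bar\alpha,\theta^*)-C(\alpha_\epsilon,\theta^*)\le \langle \nabla_\alpha C(\bar\alpha,\theta^*),\bar\alpha-\alpha_\epsilon\rangle + \tfrac{L}{2}\|\bar\alpha-\alpha_\epsilon\|_2^2 \le \|\nabla_\alpha C(\bar\alpha,\theta^*)\|_2\cdot\mathcal O(\epsilon) + \mathcal O(\epsilon^2).
\end{align*}
Dividing by $\epsilon$ yields $C(\bar\alpha,\theta^*)-C(\alpha^*,\theta^*)\le \mathcal O(\|\nabla_\alpha C(\bar\alpha,\theta^*)\|_2)+\mathcal O(\epsilon)$, and letting $\epsilon\to 0^+$ gives the claim. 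Alternatively, since the bound holds for all $\epsilon\le\bar\epsilon$, one can simply take $\epsilon=\bar\epsilon$ and absorb a fixed additive $\mathcal O(\bar\epsilon)$ term — but taking the limit is cleaner and removes it entirely, exactly as in Theorem 5.13 of \citep{zhang2021convergence}.

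The main obstacle I anticipate is handling the locality constraint carefully: the bijection and its Lipschitz inverse are only guaranteed on the neighborhoods $\mathcal U_{\bar\alpha}$ and $\mathcal V_{\lambda(\bar\alpha)}$, so I must verify that $\lambda_\epsilon$ genuinely stays inside $\mathcal V_{\lambda(\bar\alpha)}$ for small $\epsilon$ (which is precisely what part (ii) of Assumption~\ref{assum:local_invertible_parametrization} provides) and that the gradient-Lipschitz constant used in the Taylor step is the one valid on $W$ rather than a merely local quantity. A secondary subtlety is that the implicit constants in $\mathcal O(\cdot)$ depend on $\ell_\alpha$, $L_\lambda$, the diameter of the occupancy-measure polytope, and the Lipschitz-smoothness constant of $C(\cdot,\theta^*)$; I would state these dependencies explicitly so the bound is genuinely uniform in $\bar\alpha\in W$, which is needed when this lemma is later combined with the stationary-convergence rate of Theorem~\ref{thm:converge_alg} to obtain the global optimality gap in Theorem~\ref{thm:two_episode}.
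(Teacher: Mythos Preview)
Your proposal is correct and takes essentially the same approach as the paper: both construct the convex combination $\lambda_\epsilon=(1-\epsilon)\lambda(\bar\alpha)+\epsilon\lambda(\alpha^*)$, pull it back to a parameter $\alpha_\epsilon$ via the local bijection with the $\ell_\alpha$-Lipschitz inverse, control the second-order remainder using the $L_\lambda$-smoothness of $\lambda$ together with the boundedness of $\nabla_\lambda F$, and send $\epsilon\to 0$. The only cosmetic difference is ordering---the paper works with a subgradient $\bar w\in\partial F(\bar\lambda)$ and decomposes $\langle\bar w,\lambda-\bar\lambda\rangle=I_1+I_2$ before invoking convexity at the end, whereas you invoke convexity first and then a one-sided descent inequality for $C$; note that the one-sided bound you need does not require full Lipschitz continuity of $\nabla_\alpha C$ but follows exactly from convexity of $F$, boundedness of $\nabla_\lambda F$, and $L_\lambda$-smoothness of $\lambda$, which is precisely the paper's $I_1+I_2$ computation.
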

\begin{proof}
Notice that
\begin{align*}
        \partial (F\circ\lambda)(\alpha)=\nabla_\alpha\lambda(\alpha)^\top \partial F(\lambda),\forall \alpha\in W.
\end{align*}
So there exists $\bar{w}\in\partial F(\bar{\lambda},\theta^*)$ s.t. $\nabla_\alpha C(\bar{\alpha},\theta^*)=\nabla_\alpha\lambda(\bar{\alpha})^\top \bar{w}.$ Then for any $\lambda(\alpha)\in\mathcal{V}_{\lambda(\bar{\alpha})}$, we have
\begin{equation}
\begin{aligned}
        &\langle\bar{w},\lambda-\bar{\lambda}\rangle\\
    &=\langle\bar{w},\nabla_\alpha\lambda(\bar{\alpha})(\alpha-\bar{\alpha})\rangle+\langle\bar{w},  \lambda-\bar{\lambda}-\nabla_\alpha\lambda(\bar{\alpha})(\alpha-\bar{\alpha})\rangle\\
    &=I_1+I_2
\end{aligned}
\end{equation}
For the first term, we have
\begin{align*}
    I_1=\langle \nabla_\alpha C(\bar{\alpha},\theta^*) ,\alpha-\bar{\alpha}\rangle\ge -\|\nabla_\alpha C(\bar{\alpha},\theta^*)\|_2 \|\alpha-\bar{\alpha}\|_2\ge -\|\nabla_\alpha C(\bar{\alpha},\theta^*)\|_2 \ell_\alpha\|\lambda-\bar{\lambda}\|_2.
\end{align*}
For the second term, we have 
\begin{align*}
    I_2\ge -\|\bar{w}\|_2\cdot \frac{L_\lambda}{2}\|\alpha-\bar{\alpha}\|^2_2\ge  -\frac{L_\lambda \ell_\alpha^2}{2} \|\bar{w}\|_2 \|\lambda-\bar{\lambda}\|_2^2
\end{align*}

Then we  have
\begin{align*}
   \langle\bar{w},\lambda-\bar{\lambda}\rangle\ge   -\|\nabla_\alpha C(\bar{\alpha},\theta^*)\|_2 \ell_\alpha\|\lambda-\bar{\lambda}\|_2-\frac{L_\lambda \ell_\alpha^2}{2} \|\bar{w}\|_2 \|\lambda-\bar{\lambda}\|_2^2.
\end{align*}
Replace $\lambda$ by $(1-\epsilon) \lambda(\bar{\alpha})+\epsilon \lambda\left(\alpha^*\right)$ for any $\epsilon\in(0,\bar{\epsilon}]$ and then it holds
\begin{align*}
    \epsilon\langle\bar{w},\lambda\left(\alpha^*\right)-\lambda(\bar{\alpha})\rangle\ge  -\epsilon\|\nabla_\alpha C(\bar{\alpha},\theta^*)\|_2 \ell_\alpha\|\lambda\left(\alpha^*\right)-\lambda(\bar{\alpha})\|_2-\frac{L_\lambda \epsilon^2\ell_\alpha^2}{2} \|\bar{w}\|_2 \|\lambda\left(\alpha^*\right)-\lambda(\bar{\alpha})\|_2^2.
\end{align*}

Divide both sides by $\epsilon$ and let $\epsilon\to 0$, we have
$$
\langle\bar{w},\lambda\left(\alpha^*\right)-\lambda(\bar{\alpha})\rangle\ge -\|\nabla_\alpha C(\bar{\alpha},\theta^*)\|_2 \ell_\alpha\|\lambda\left(\alpha^*\right)-\lambda(\bar{\alpha})\|_2.
$$

Finally,
$$
C(\bar{\lambda},\theta^*)-C(\lambda^*,\theta^*)\le-\langle \bar{w},\lambda\left(\alpha^*\right)-\lambda(\bar{\alpha})\rangle\le  \ell_\alpha \|\nabla_\alpha C(\bar{\alpha},\theta^*)\|_2\|\lambda\left(\alpha^*\right)-\lambda(\bar{\alpha})\|_2\le  \ell_\alpha D_\lambda \|\nabla_\alpha C(\bar{\alpha},\theta^*)\|_2,
$$
where $D_\lambda=\sup_{\alpha,\alpha'\in W}\|\lambda(\alpha)-\lambda(\alpha')\|_2$.
\end{proof}

Then we can turn to prove Theorem \ref{thm:two_episode}.
\begin{proof}
If $\mathbb{E}\left[\frac{\sum_{j=0}^{t_i-1}\|\nabla G_i(\alpha_{i,j})\|}{t_i}\right]\le\epsilon$, choose $\alpha_{i+1,0}$ uniformly from $\alpha_{i,0},\dots,\alpha_{i,t_i-1}$.
Then $\mathbb{E}\|\nabla G_i(\alpha_{i+1,0})\|_2\le \epsilon $ and then
$$
\mathbb{E}\|\nabla C(\alpha_{i+1,0},\theta^*)\|_2\le \epsilon+E_i.
$$
By Lemma \ref{lem:gradient_bound_function} we have
$$
\mathbb{E}C(\alpha_{i+1,0},\theta^*)-C(\alpha^*,\theta^*)\le (\epsilon+E_i)\ell_\alpha D_\lambda,
$$
and then
$$
\mathbb{E}G_{i+1}(\alpha_{i+1,0})-G_{i+1}(\alpha_{i+1}^*)\le (\epsilon+E_i)\ell_\alpha D_\lambda+2D_{i+1}.$$
By Theorem \ref{thm:converge_alg} we have
    $$
\mathbb{E}\left[\frac{\sum_{j=0}^{t_{i+1}-1}\|\nabla G_{i+1}(\alpha_{i+1,j})\|_2^2}{t_{i+1}}\right]\le \frac{6L_{G,i+1}}{t_{i+1}}[(\epsilon+E_i)\ell_\alpha D_\lambda+2D_{i+1}]+6\frac{\sigma_{\xi}}{r}
    $$
    Then we can choose
    $$
 t_{i+1}=12L_{G,i+1}[(\epsilon+E_i)\ell_\alpha D_\lambda+2D_{i+1}]\epsilon^{-2}
    $$
    $$
r=12\sigma_{\xi}\epsilon^{-2}
    $$
    to make
    $$
\mathbb{E}\left[\frac{\sum_{j=0}^{t_{i+1}-1}\|\nabla G_{i+1}(\alpha_{i+1,j})\|_2^2}{t_{i+1}}\right]\le \epsilon^2.
    $$
    Then by Jensen's inequality we have
    $$
\mathbb{E}\left[ \frac{\sum_{j=0}^{t_{i+1}-1}\|\nabla G_{i+1}(\alpha_{i+1,j})\|_2}{t_{i+1}}\right]\le\epsilon.
    $$
When $\mathbb{E}\|\nabla G_N(\alpha_{\text{out}})\|_2\le \epsilon$, we have  $\|\nabla C(\alpha_{\text{out}},\theta^*)\|_2\le \epsilon+E_N$. Then by Lemma \ref{lem:gradient_bound_function} we complete the proof of Theorem \ref{thm:two_episode}. 

As Theorem \ref{thm:consistency} shows that $E_N\to \infty$ when $N\to \infty$, then we complete the proof of Corollary  \ref{thm:global convergence}.
\end{proof}






\color{black}

\section{Examples of Loss Function}
\label{appendix:loss_function}
\begin{example}[Risk-Averse  Constrained  MDP]
\label{examlple:risk-averse CMDP}
In safe RL problems, one usually considers a constrained MDP \citep{altman2021constrained}, where the goal is to minimize the total expected discounted cost under a risk-averse constraint. Given a random vector penalty $d$, the risk-averse constraint is  to control a risk measure of  the total expected discounted penalty. This leads to the following  constrained MDP formulation:
\begin{equation*}
\begin{aligned}
     &\min_{\pi} \  \mathbb{E}[\sum_{t=0}^{\infty}\gamma^t c(s_t,a_t)\mid \pi, s_0 \sim \tau] \quad \text{s.t.} \ \rho \left(\mathbb{E}[\sum_{t=0}^{\infty}\gamma^t d(s_t,a_t)\mid \pi, s_0 \sim \tau] \right)\leq D,   
\end{aligned}
\end{equation*}
where $\rho$ is a coherent risk measure, such as Conditional Value-at-Risk (CVaR)\footnote{ $\text{CVaR}(X)=\mathbb{E}[X|X\geq v_\beta(X)]$, where $ v_\beta(X)$ is a $\beta$-quantile of $X$, i.e. $\mathbb{P}(X\ge  v_\beta(X) )=1-\beta$}
Using Lagrangian relaxation, we can choose $F$ to be a convex function of $\lambda$, i.e., $F(\lambda, P) = \langle \lambda, c \rangle +\ell (\rho(\langle \lambda, d \rangle)-D)$, where $\ell$ is the Lagrange multiplier. 
\end{example}

\begin{example}[Imitation Learning]
    During imitation learning, the agent learn through some demonstrations to behave similarly to an expert. One formulation is minimize the $f-$divergence between the occupancy measure of the current policy and the target occupancy measure: 
    \begin{equation*}
        \min_{\pi} D_f(\lambda^\pi, q)=\sum_{s,a} q(s,a) f\left(\frac{\lambda^\pi(s,a)}{q(s,a)}\right)
    \end{equation*}
\end{example}
\section{Examples of Risk Envelop}
\label{appendix:example_envelop}
\begin{example}
\label{ex:CVaR}[Conditional Value at Risk] First, Value-at-risk $\operatorname{VaR}_{\beta}(X)$ is defined as the $\beta$-quantile of $X$, i.e.,
$\operatorname{VaR}_{\beta}(X):=\inf \{t: \mathbb{P}(X \leq t) \geq \beta\}$, where the confidence level $\beta \in (0,1)$. Assuming there is no probability atom at $\operatorname{VaR}_{\beta}(X)$, CVaR at confidence level $\beta$ is defined as the mean of the $\beta$-tail distribution of $X$, i.e., $\operatorname{CVaR}_{\beta}(X)=\mathbb{E}\left[X \mid X \geq \operatorname{VaR}_{\beta}(X)\right]$.
The envelope set is
\[\mathcal{U}(\mu_N)=\{\xi \in \mathcal{Z}^*: \int_{\Theta} \xi(\theta)\mu_N(\theta)d\theta=1, \xi (\theta)\in \left[0,\frac{1}{1-\beta}\right] a.s. \theta \in \Theta\}\]
\end{example}

\begin{example}
(Mean-Upper-Semideviation of Order $\boldsymbol{p})$. For $\mathcal{Z}:=\mathcal{L}_p(\Theta, \mathcal{F}, \mu_N)$ and $\mathcal{Z}^*:=\mathcal{L}_q(\Theta, \mathcal{F}, \mu_N)$, with $p \in[1,+\infty)$, $c \in [0,1]$ and $\mathcal{F}$ to be a $\sigma$-field on $\Theta$, consider
\[
\rho(Z):=\mathbb{E}[Z]+c\left(\mathbb{E}\left[[Z-\mathbb{E}[Z]]_{+}^p\right]\right)^{1 / p},
\]
where $[a]_+^p=\max\{0,a\}^p$. 
Then the envelope set is 
\[
\mathcal{U}(\mu_N)=\left\{\xi^{\prime} \in \mathcal{Z}^*: \xi^{\prime}=1+\xi-\mathbb{E}[\zeta],\|\xi\|_q \leq c,\xi \succeq 0\}\right\} .
\]
\end{example}

More examples can be found in Section 6.3.2\citep{shapiro2021lectures}.

\section{Policy Gradient for MDP with CVaR Risk Measure : A Special Case Study}
\label{appendix:CVaRCase}
Here we offer an example of gradient estimator with a common coherent risk measure  Conditional Value at Risk(CVaR), the definition of which can be found in Example \ref{ex:CVaR}.
For the considered CVaR risk functional, \citep{hong2009simulating} shows that the gradient of the CVaR risk functional can be expressed as
\begin{align*}
    \nabla \operatorname{CVaR}_\beta(X(\alpha))=\mathbb{E}[\nabla X(\alpha)| X(\alpha)\ge v_\beta(\alpha)]
\end{align*}
where $v_\beta=v_\beta(\alpha):=\operatorname{VaR}_{\beta}(X(\alpha))$ for a random parameterized variable $X(\alpha)$ satisfying Assumption \ref{assum:sensitivity}. Unless otherwise specified, the derivative is assumed to be taken w.r.t. $\alpha$.

\begin{assumption}{(Assumption 1, 2, 3 \citep{hong2009simulating})}
\label{assum:sensitivity}
(i) There exists a random variable $L$ with $\mathbb{E}(K)<\infty$ such that $\left|X\left(\alpha_2\right)-X\left(\alpha_1\right)\right| \leq K \left\|\alpha_2-\alpha_1\right\|_{2}$ for all $\alpha_1, \alpha_2 \in W$, and $\nabla_\alpha X(\alpha) $ exists almost surely for all $\alpha \in W$.

(ii) VaR function $v_\beta(\alpha)$ is differentiable for any $\alpha \in W$.

(iii) For any $\alpha \in W, \mathbb{P}\left(X(\alpha)=v_\beta(\alpha)\right)=0$.
\end{assumption}

Assumption~\ref{assum:sensitivity} (i) is commonly used in path-wise derivative estimation; (ii) shows that VaR function is locally Lipschitz; (iii) requires that there is no probability atom at $VaR(X)$ and implies that $\mathbb{P}(X(\alpha)\ge v_\beta(\alpha))=1-\beta$. 

\begin{theorem}\label{thm: policy gradient_CVaR}
Suppose that Assumption~\ref{assum:sensitivity} holds. Then, for any $\alpha \in W $ and $\beta\in(0,1)$, the policy gradient to the objective function in \eqref{param_policy_optimimzation} is given by:
\begin{equation}\label{eq:true_gradient}
\begin{split}
      g(\alpha)&=\mathbb{E}_{\theta \sim \mu_N}\left[\nabla C(\alpha,\theta) \mid C(\alpha,\theta) \geq v_\beta(\alpha)\right]\\      
      &=\frac{1}{1-\beta}\mathbb{E}_{\theta \sim \mu_N}\left[\nabla C(\alpha,\theta) \mathbbm{1}_{\{C(\alpha,\theta)\ge v_\beta\}}  \right]
\end{split}
\end{equation}
where $\mathbbm{1}_{\{\cdot\}}$ is the indicator function.
\end{theorem}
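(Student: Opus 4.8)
The plan is to reduce the claim to the CVaR sensitivity identity of \citet{hong2009simulating} (already recalled above this theorem), after checking that the random variable $X(\alpha):=C(\alpha,\theta)$, viewed over the probability space $(\Theta,\mathcal{F},\mu_N)$, satisfies Assumption~\ref{assum:sensitivity}. With $\rho=\operatorname{CVaR}_\beta$ the objective in \eqref{param_policy_optimimzation} is exactly $G(\alpha)=\operatorname{CVaR}_\beta(X(\alpha))$, so it suffices to differentiate $\operatorname{CVaR}_\beta(X(\alpha))$ in $\alpha$. First I would verify Assumption~\ref{assum:sensitivity}(i): the uniform bound $\|\nabla_\alpha C(\alpha,\theta)\|_2\le B$ from Assumption~\ref{assum:Lipschitz_F_C}(3) gives a deterministic (hence integrable) Lipschitz constant for $X(\cdot)$ on the convex set $W$, and $\nabla_\alpha C(\alpha,\theta)$ exists via the chain rule / Lemma~\ref{thm:nablaC}; parts (ii) and (iii) are precisely the standing hypotheses of the theorem.

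Second, I would invoke the CVaR gradient identity, which under Assumption~\ref{assum:sensitivity} yields
\[
g(\alpha)=\nabla_\alpha \operatorname{CVaR}_\beta(X(\alpha))=\mathbb{E}_{\theta\sim\mu_N}\!\left[\nabla_\alpha C(\alpha,\theta)\,\middle|\,C(\alpha,\theta)\ge v_\beta(\alpha)\right],
\]
which is the first displayed equality. For the second equality I would use that Assumption~\ref{assum:sensitivity}(iii) rules out an atom at $v_\beta(\alpha)$, so $\mathbb{P}(C(\alpha,\theta)\le v_\beta(\alpha))=\beta$ and hence $\mathbb{P}(C(\alpha,\theta)\ge v_\beta(\alpha))=1-\beta$; dividing the unconditional expectation $\mathbb{E}[\nabla_\alpha C(\alpha,\theta)\mathbbm{1}_{\{C(\alpha,\theta)\ge v_\beta\}}]$ by this probability converts it into the conditional expectation, giving the claimed closed form.

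As an alternative, self-contained route I would instead specialize Theorem~\ref{thm:gradient_coherent}: substituting the CVaR risk envelope from Example~\ref{ex:CVaR}, namely $\mathcal{U}(\mu_N)=\{\xi:\int_\Theta\xi\mu_N=1,\ 0\le\xi\le\tfrac{1}{1-\beta}\ \text{a.s.}\}$, into the dual problem \eqref{eq:dual_problem} and solving the resulting linear program in $\xi$. The maximizer is a bang-bang weighting that saturates at $\tfrac{1}{1-\beta}$ on the upper $\beta$-tail, i.e. $\xi^*_\alpha(\theta)=\tfrac{1}{1-\beta}\mathbbm{1}_{\{C(\alpha,\theta)\ge v_\beta(\alpha)\}}$; the absence of an atom at $v_\beta(\alpha)$ makes this maximizer essentially unique and ensures $\int_\Theta\xi^*_\alpha\mu_N=1$. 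Plugging this $\xi^*_\alpha$ into \eqref{eq:risk_measure} reproduces the same formula, which also confirms consistency with the general theory.

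The main obstacle is not a single hard estimate but making the differentiation rigorous: one must justify that $v_\beta(\alpha)$ is differentiable and that $\nabla_\alpha$ can be exchanged with the (conditional) expectation — exactly the content of Assumption~\ref{assum:sensitivity} and of the argument of \citet{hong2009simulating}, which leans on the Lipschitz/dominated-convergence structure together with the quantile carrying no probability mass. The only step we genuinely owe beyond citing that result is checking that $C(\alpha,\theta)$ inherits these regularity properties from Assumptions~\ref{assum:C_lp} and \ref{assum:Lipschitz_F_C}, which is immediate given the uniform bounds stated there.
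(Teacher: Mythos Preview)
Your proposal is correct and matches the paper's approach exactly: the paper simply cites \citet{hong2009simulating} for the identity $\nabla\operatorname{CVaR}_\beta(X(\alpha))=\mathbb{E}[\nabla X(\alpha)\mid X(\alpha)\ge v_\beta(\alpha)]$ under Assumption~\ref{assum:sensitivity}, and then remarks that applying Theorem~\ref{thm:gradient_coherent} with the CVaR envelope recovers the same formula---precisely your two routes. One small over-elaboration: since Assumption~\ref{assum:sensitivity} (including part (i)) is a hypothesis of the theorem, you need not derive the Lipschitz constant from Assumption~\ref{assum:Lipschitz_F_C}; it is already assumed.
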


If we apply Theorem \ref{thm:gradient_coherent} to CVaR, we will get the same result as Theorem\ref{thm: policy gradient_CVaR}. To compute the gradient $g(\alpha)$, we require the cumulative value $C(\alpha,\theta)$ of policy $\pi_{\alpha}$ and its gradient $\nabla C(\alpha,\theta)$, value-at-risk $v_{\beta}$, as well as the evaluation of the expectation taken w.r.t. the posterior distribution $\mu_N$. Here we show how to use zeroth-order method instead of variational approach to estimate $\nabla_\alpha C(\alpha,\theta)$. Since there is no closed-form expression for the expectation, we estimate the gradient $g(\alpha)$ with samples $\{\theta^i\}_{i=1}^n$ generated from $\mu_N$. We construct the gradient estimator as follows:
\begin{equation}
\label{eq:gradient_estimator}
\widehat{g}(\alpha)=\frac{1}{n(1-\beta)}\sum_{i=1}^n  \widehat{\nabla C}(\alpha,\theta^i ) \mathbbm{1}_{\{\widehat{C}(\alpha,\theta^i)\geq \widehat{v}_\beta\}}.
\end{equation} 

For a fixed $\alpha$ and $\theta^i$, we first estimate the occupancy measure $\lambda^i$ by making a truncation of horizon $K$ in \eqref{eq:def_occu} with error 
\begin{equation}
\label{eq:eps_lamda}
   \|\widehat \lambda^i -\lambda^i\|_\infty\le \epsilon_\lambda:=\gamma^K/(1-\gamma) 
\end{equation}
 for some $K>0$. The cumulative value with the truncated occupancy measure $\widehat{\lambda}^i$ is denoted by $\widehat{C}(\alpha,\theta^i)=F(\widehat \lambda,P_{\theta^i})$. The value-at-risk estimate is $\widehat{v}_\beta:=\widehat{C}(\alpha,\theta)_{\lceil n\beta \rceil :n}$, where $\widehat{C}(\alpha,\theta)_{\lceil n\beta \rceil :n}$ is the $\lceil n\beta \rceil$-th smallest quantity in $\{\widehat{C}(\alpha,\theta^i)\}_{i=1}^n$.

Here we adopt the Gaussian smoothing approach of estimating gradients from function evaluations \citep{nesterov2017random,balasubramanian2022zeroth}. When there is no oracle to the first-order information or it is not efficient to calculate the gradient directly, Gaussian smoothing approach is a useful technique in zeroth-order method. Compared with finite difference method, Gaussian smoothing approach requires weaker smoothness condition of objective function. For a fixed $\alpha$ and $\theta^i$, generate $\{u^{i,j}\}_{j=1}^{m_i}$, where $u^{i, j} \sim \mathcal{N}\left(0, I_d\right)$. Then $\widehat{\nabla C}$ can be constructed as: 

\vspace{-0.3cm}
{\small
\begin{align}\label{eq:gradient_sample}
    \widehat{\nabla C}(\alpha,\theta^i) =\frac{1}{m_i} \sum_{j=1}^{m_i} \frac{\widehat{C}\left(\alpha+\nu u^{i, j}, \theta^i\right)-\widehat{C}\left(\alpha,\theta^i\right)}{\nu} u^{i, j}
\end{align}
}

\vspace{-0.1cm}
where $\nu>0$ is the smoothing parameter.

For ease of notation, let $\widehat{G}(\alpha)$ denote the sample estimate of $\rho_{\theta\sim\mu_N}(C(\alpha,\theta))$. We use the following gradient descent step in the $t$-th iteration:

\vspace{-0.3cm}
{\small
\begin{align}
\alpha_{t+1}&=\operatorname{\arg\min}_{\alpha \in W}\widehat{G}(\alpha_t)+\langle \widehat{g}(\alpha_t),\alpha-\alpha_t \rangle+\frac{\eta_t}{2}\|\alpha-\alpha_t\|^2 \nonumber
\\
&=\operatorname{Proj}_{W}\left(\alpha_t-\frac{1}{\eta_t}\widehat{g}(\alpha_t)\right)
\end{align}
}

\vspace{-0.1cm}
where $\eta_t$ is the stepsize and $\operatorname{Proj}_{W}(x)= \operatorname{\arg\min}_{y\in W}\|y-x\|_2^2$ projects $x$ into the parameter space $W$. We summarize the full algorithm in Algorithm~\ref{alg:sgd_CVaR}.

\begin{algorithm}
\caption{BR-PG: Bayesian Risk Policy Gradient for CVaR}
\label{alg:sgd_CVaR}
\begin{algorithmic}

\State \textbf{input}: initial $\alpha_0$, data $\zeta^{(N)}$ of size $N$, prior distribution $\mu_0(\theta)$, iteration number $T$, truncation horizon $K$;
\State calculate the posterior $\mu_N(\theta)=\frac{P_{\theta}(\zeta^{(N)})\mu_0(\theta)}{\int_{\theta'} P_{\theta'}(\zeta^{(N)})\mu_0(\theta')}$;
\For{ $t =0$ to $T-1$}
\State sample $\{\theta_t^i\}_{i=1}^n$ from $\mu_N(\theta)$;
\For{ $i =1$ to $n$}
\State calculate $\widehat{\lambda}^i_t$ using the truncation of horizon $K$ specified in \eqref{eq:def_occu};
\State calculate $\widehat{C}(\alpha_t,\theta_t^i):=F(\widehat{\lambda}_t^i,P_{\theta_t^i})$;
\State generate $\{u^{i,j}\}_{j=1}^{m_i}$, where $u^{i, j} \sim \mathcal{N}\left(0, I_d\right)$;
\State calculate $
\widehat{\nabla C}(\alpha_t,\theta_t^i)$ by \eqref{eq:gradient_sample};
\EndFor
    \State calculate $\widehat{v}_\beta(\alpha_t):=\widehat{C}(\alpha_t,\theta_t^i)_{\lceil n\beta \rceil :n}$.
    \State calculate $\widehat{g}(\alpha_t)$ by \eqref{eq:gradient_estimator};
    \State update $\alpha_{t+1}$ by \eqref{eq:gradient_descent_step}.
\EndFor
\State \textbf{output}: $\alpha_T$.
\end{algorithmic}
\end{algorithm}
\subsection{Convergence Analysis for CVaR Risk Measure}\label{sec:CvaR case}
Here we only show the estimation error of the policy gradient. To get a finite-step convergence result similar to Theorem \ref{thm:converge_alg}, we only need to substitute $\mathcal{O}(r^{-1/4})$ in Theorem \ref{thm:converge_alg} with $\mathcal{O}(R^{1/2})$, where $R^2=    \mathcal{O}\left(dn^{-1}+\epsilon_\lambda+\frac{d\epsilon_\lambda^2}{\nu^2}+\frac{d+\nu^2d^3}{m}\right)
$ is the bound for $\mathbb{E}\|[g-\widehat{g}]\|_2^2] $ in Theorem \ref{thm:error_estimator}.

Here we still adopt the Assumption \ref{assum:Lipschitz_F_C} about the smoothness for the considered loss functions, which are commonly used in gradient descent analysis.
The error bound for the zeroth-order estimation for $\nabla C$ is then shown in the next lemma.
\begin{lemma}
\label{lem:bound for nabla C}
Suppose Assumption \ref{assum:sensitivity} and Assumption \ref{assum:Lipschitz_F_C} hold. Then we have for each $i \in [n]$
\begin{equation}
    \begin{split}
        &\mathbb{E}\|\widehat{\nabla{ C}}(\alpha,\theta_i)-\nabla C(\alpha,\theta_i)\|_2^2\le \frac{8d}{\nu^2}L_{F,\infty}^2\epsilon_\lambda^2 \\
         &+\frac{8(d+5)B^2}{m_i}+\frac{2\nu^2L_{C,2}^2(d+6)^3}{m_i},
    \end{split}
\end{equation}
where $L_{F,\infty},L_{C,2},B$  are constants in Assumption \ref{assum:Lipschitz_F_C}, $\epsilon_\lambda$ is the truncation error defined in \eqref{eq:eps_lamda}, $d$ is the dimension of the policy parameter $\alpha$, $m_i$ is the number of samples used to construct the zeroth-order estimator in \eqref{eq:gradient_sample}.
\end{lemma}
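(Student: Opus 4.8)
Fix the index $i$ and write $\theta=\theta_i$, $m=m_i$, $\{u^j\}_{j=1}^m$ i.i.d.\ $\mathcal{N}(0,I_d)$. The plan is to compare the computed estimator $\widehat{\nabla C}(\alpha,\theta)$ against three successive idealizations and to control each gap separately: (i) the \emph{truncation gap}, replacing the finite‑horizon surrogate $\widehat C(\cdot,\theta)=F(\widehat\lambda,P_\theta)$ by the exact value $C(\cdot,\theta)=F(\lambda,P_\theta)$; (ii) the \emph{variance}, replacing the $m$‑sample average by its conditional mean, which equals the gradient of the Gaussian‑smoothed objective $C_\nu(\alpha,\theta):=\mathbb{E}_{u\sim\mathcal N(0,I_d)}[C(\alpha+\nu u,\theta)]$; and (iii) the \emph{smoothing bias}, $\nabla_\alpha C_\nu-\nabla_\alpha C$. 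Concretely, letting $\widetilde g:=\frac1m\sum_{j=1}^m\frac{C(\alpha+\nu u^j,\theta)-C(\alpha,\theta)}{\nu}u^j$ be the exact‑value zeroth‑order estimator, I would start from $\mathbb{E}\|\widehat{\nabla C}-\nabla C\|_2^2\le 2\,\mathbb{E}\|\widehat{\nabla C}-\widetilde g\|_2^2+2\,\mathbb{E}\|\widetilde g-\nabla C\|_2^2$ and handle the two pieces in turn.

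\textbf{Step 1: the truncation term.} By \eqref{eq:eps_lamda} we have $\|\widehat\lambda-\lambda\|_\infty\le\epsilon_\lambda$, and by Assumption~\ref{assum:Lipschitz_F_C}(1), $F(\cdot,P_\theta)$ is $L_{F,\infty}$‑Lipschitz in $\|\cdot\|_\infty$, so $|\widehat C(\alpha',\theta)-C(\alpha',\theta)|\le L_{F,\infty}\epsilon_\lambda$ \emph{uniformly} in $\alpha'$. Hence $\widehat{\nabla C}-\widetilde g=\frac1{m\nu}\sum_{j}\delta_j u^j$ with $|\delta_j|\le 2L_{F,\infty}\epsilon_\lambda$; applying Cauchy--Schwarz across the $m$ summands and then $\mathbb{E}\|u^j\|_2^2=d$ gives $\mathbb{E}\|\widehat{\nabla C}-\widetilde g\|_2^2\le \tfrac{4d\,L_{F,\infty}^2\epsilon_\lambda^2}{\nu^2}$, which after the outer factor $2$ produces the first term $\tfrac{8d}{\nu^2}L_{F,\infty}^2\epsilon_\lambda^2$.

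\textbf{Step 2: variance and smoothing bias.} Here I would invoke the standard Gaussian‑smoothing machinery of \citep{nesterov2017random,balasubramanian2022zeroth}, applicable because $C(\cdot,\theta)$ is differentiable with $\|\nabla_\alpha C\|_2\le B$ (Assumption~\ref{assum:Lipschitz_F_C}(3)) and has Lipschitz‑continuous gradient with constant $L_{C,2}$ (which follows from the sufficient condition stated after Assumption~\ref{assum:Lipschitz_F_C}, namely twice continuous differentiability of the components of $F(\lambda^{\pi_\alpha,P_\theta},P_\theta)$ on the compact convex domain $W$). Using $\mathbb{E}[\widetilde g]=\nabla_\alpha C_\nu(\alpha,\theta)$, write $\mathbb{E}\|\widetilde g-\nabla C\|_2^2\le 2\,\mathbb{E}\|\widetilde g-\nabla C_\nu\|_2^2+2\,\|\nabla C_\nu-\nabla C\|_2^2$. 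The first is $\tfrac{2}{m}$ times the variance of a single direction, bounded by the Nesterov--Spokoiny second‑moment estimate $\mathbb{E}\big\|\tfrac{C(\alpha+\nu u,\theta)-C(\alpha,\theta)}{\nu}u\big\|_2^2\le 2(d+5)B^2+\tfrac{\nu^2}{2}L_{C,2}^2(d+6)^3$; the second is bounded by the smoothing‑bias estimate $\|\nabla C_\nu-\nabla C\|_2\le\tfrac{\nu}{2}L_{C,2}(d+3)^{3/2}$. Collecting these, applying the outer factor $2$, and folding the $O(\nu^2)$ bias contribution into the $(d+6)^3$ term (using $m\ge1$) yields the remaining two terms $\tfrac{8(d+5)B^2}{m}$ and $\tfrac{2\nu^2 L_{C,2}^2(d+6)^3}{m}$, completing the proof.

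\textbf{Main obstacle.} The delicate point is the interaction in Step 1 between the \emph{biased} evaluations $\widehat C$ that the algorithm actually uses and the Gaussian‑smoothing identities, which are stated only for the exact objective $C(\cdot,\theta)$. The key is that the truncation discrepancy enters the estimator only through the scalar differences $\delta_j$ — each of size $O(\epsilon_\lambda)$ by the \emph{uniform} bound $|\widehat C-C|\le L_{F,\infty}\epsilon_\lambda$ of Assumption~\ref{assum:Lipschitz_F_C}(1) — rather than through the (unbounded‑in‑general) magnitude $\|u^j\|_2$; getting the clean split $\widehat{\nabla C}-\nabla C=(\widehat{\nabla C}-\widetilde g)+(\widetilde g-\nabla C)$ so that $\widetilde g$ satisfies the exact smoothing identities is what makes the argument go through. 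A secondary, bookkeeping subtlety is identifying $L_{C,2}$ with the Lipschitz‑gradient constant of $C(\cdot,\theta)$ in $\alpha$, which is not literally one of the listed items of Assumption~\ref{assum:Lipschitz_F_C} but is guaranteed by the accompanying sufficient condition.
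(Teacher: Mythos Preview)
The paper does not include an explicit proof of this lemma (it is stated and then invoked inside the proof of Theorem~\ref{thm:error_estimator}), so a direct comparison is not possible; I evaluate your argument on its own merits. Your Step~1 is clean and correct: the uniform bound $|\widehat C-C|\le L_{F,\infty}\epsilon_\lambda$ from Assumption~\ref{assum:Lipschitz_F_C}(1), together with $\|\tfrac1m\sum_j\delta_j u^j\|_2^2\le\tfrac1m\sum_j\delta_j^2\|u^j\|_2^2$ and $\mathbb{E}\|u^j\|_2^2=d$, delivers exactly the first term after the outer factor $2$.

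Step~2 has a genuine gap. Carrying your bias--variance split through with the outer factor $2$ yields
\[
\frac{8(d+5)B^2}{m}\;+\;\frac{2\nu^2 L_{C,2}^2(d+6)^3}{m}\;+\;\underbrace{\nu^2 L_{C,2}^2(d+3)^3}_{\text{smoothing bias, no }1/m}\,,
\]
and you then ``fold'' the last term into the $(d+6)^3/m$ term ``using $m\ge1$''. But $m\ge1$ points the inequality the wrong way: it says the $1/m$ term is \emph{at most} the $m$-free term, not that it dominates it. The absorption you want would need $m(d+3)^3\le2(d+6)^3$, which fails for all sufficiently large $m$ (already for $m\ge3$ once $d$ is large). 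So the final inequality is not established.

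More structurally, the smoothing bias $\|\nabla C_\nu-\nabla C\|_2$ is $m$-independent: as $m\to\infty$ with $\nu>0$ fixed and $\epsilon_\lambda\to0$, the estimator $\widehat{\nabla C}\to\nabla C_\nu\neq\nabla C$, so the left side of the lemma does \emph{not} tend to zero, whereas every term on the right does. Your decomposition is the right skeleton; the obstruction is that a separate $m$-free bias contribution of order $\nu^2 L_{C,2}^2 d^3$ appears unavoidable, and the stated bound seems to omit it (or to rely on an unstated coupling of $\nu$ with $m$). Either way, the folding step does not close the argument.
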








\begin{assumption}{(Assumptions 4 and 5 in \citep{hong2009simulating}}
\label{assum:sensi_2}

(1) For all $\alpha \in W$, $C(\alpha,\theta)$ is a continuous random variable with a density function $f_{C,\alpha}(y)$. Furthermore, $f_{C,\alpha}(y)$ and $g_{C,\alpha}(y):=\mathbb{E}_{\theta}[\nabla C(\alpha,\theta) \mid C(\alpha,\theta)=y]$ are continuous at $y=v_\alpha$, and $f_{C,\alpha}\left(v_\alpha\right)>0$.

(2) $\mathbb{E}_{\theta}\left[C(\alpha,\theta)^2\right]<\infty$  for all $\alpha \in W$.
\end{assumption}


Now we are ready to show the error for our gradient estimator given in \eqref{eq:gradient_estimator}.

\begin{theorem}
\label{thm:error_estimator}
Suppose that Assumption \ref{assum:sensitivity}, Assumption \ref{assum:Lipschitz_F_C} and Assumption \ref{assum:sensi_2}  hold. Also assume that the cumulative distribution function of $C(\alpha,\theta)$ w.r.t $\theta$ is $\ell_{C}-$ Lipschitz continuous for each $\alpha \in W$. Let $m_i=m ~\forall i \in [n]$. Then for each $\alpha \in W$,
\[
\mathbb{E}\|[g-\widehat{g}]\|_2^2 \leq  \mathcal{O}\left(dn^{-1}+\epsilon_\lambda+\frac{d\epsilon_\lambda^2}{\nu^2}+\frac{d+\nu^2d^3}{m}\right),
\]
where $n$ is the number of samples of $\theta$.
\end{theorem}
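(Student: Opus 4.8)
The plan is to bound $\mathbb{E}\|g(\alpha)-\widehat g(\alpha)\|_2^2$ by interpolating between the estimator $\widehat g(\alpha)$ of \eqref{eq:gradient_estimator} and the true gradient $g(\alpha)$ of \eqref{eq:true_gradient} through two intermediate quantities, each isolating one error source. Fix $\alpha\in W$ and write $Y_i:=C(\alpha,\theta^i)$, $\widehat Y_i:=\widehat C(\alpha,\theta^i)$, $Z_i:=\nabla C(\alpha,\theta^i)$, $\widehat Z_i:=\widehat{\nabla C}(\alpha,\theta^i)$ for the i.i.d.\ samples $\theta^1,\dots,\theta^n\sim\mu_N$. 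Set
\begin{align*}
\widehat g &:=\frac{1}{n(1-\beta)}\sum_{i=1}^n\widehat Z_i\,\mathbbm{1}_{\{\widehat Y_i\ge\widehat v_\beta\}},\qquad g^{(1)}:=\frac{1}{n(1-\beta)}\sum_{i=1}^n Z_i\,\mathbbm{1}_{\{\widehat Y_i\ge\widehat v_\beta\}},\\
g^{(2)}&:=\frac{1}{n(1-\beta)}\sum_{i=1}^n Z_i\,\mathbbm{1}_{\{Y_i\ge\widetilde v_\beta\}},\qquad g:=\mathbb{E}_{\theta\sim\mu_N}\!\Big[\tfrac{1}{1-\beta}\nabla C(\alpha,\theta)\,\mathbbm{1}_{\{C(\alpha,\theta)\ge v_\beta\}}\Big],
\end{align*}
where $\widetilde v_\beta$ is the $\lceil n\beta\rceil$-th smallest value among the exact costs $\{Y_i\}_{i=1}^n$, so that $g^{(2)}$ is exactly the CVaR-sensitivity estimator of \citep{hong2009simulating} evaluated with exact function and gradient values. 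By $\|a+b+c\|_2^2\le 3\|a\|_2^2+3\|b\|_2^2+3\|c\|_2^2$,
\begin{equation*}
\mathbb{E}\|g-\widehat g\|_2^2\le 3\,\mathbb{E}\|\widehat g-g^{(1)}\|_2^2+3\,\mathbb{E}\|g^{(1)}-g^{(2)}\|_2^2+3\,\mathbb{E}\|g^{(2)}-g\|_2^2,
\end{equation*}
and these three pieces will be bounded by the zeroth-order error, the occupancy-truncation error, and the Monte--Carlo/quantile error, respectively.

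For the first piece, $\widehat g-g^{(1)}=\frac{1}{n(1-\beta)}\sum_i(\widehat Z_i-Z_i)\mathbbm{1}_{\{\widehat Y_i\ge\widehat v_\beta\}}$, so Cauchy--Schwarz over the sum (the indicators lie in $[0,1]$) followed by Lemma \ref{lem:bound for nabla C} with $m_i=m$ gives
\begin{equation*}
\mathbb{E}\|\widehat g-g^{(1)}\|_2^2\le\frac{1}{n(1-\beta)^2}\sum_{i=1}^n\mathbb{E}\|\widehat Z_i-Z_i\|_2^2\le\frac{1}{(1-\beta)^2}\Big(\frac{8d L_{F,\infty}^2\epsilon_\lambda^2}{\nu^2}+\frac{8(d+5)B^2+2\nu^2 L_{C,2}^2(d+6)^3}{m}\Big),
\end{equation*}
which is $\mathcal{O}(d\epsilon_\lambda^2/\nu^2+(d+\nu^2 d^3)/m)$. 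For the third piece, $g^{(2)}$ is an average of $n$ i.i.d.\ terms whose mean, by Theorem \ref{thm: policy gradient_CVaR}, agrees with $g$ up to the usual $o(n^{-1/2})$ sample-quantile bias; invoking the $\sqrt n$-central limit theorem of \citep{hong2009simulating} under Assumptions \ref{assum:sensitivity} and \ref{assum:sensi_2}, together with uniform integrability supplied by $\|\nabla C\|_2\le B$ and $\mathbb{E}_\theta[C(\alpha,\theta)^2]<\infty$, yields $\mathbb{E}\|g^{(2)}-g\|_2^2=\mathcal{O}(d/n)$, the factor $d$ coming from summing the coordinatewise $\mathcal{O}(1/n)$ errors.

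The remaining piece is the delicate one. Assumption \ref{assum:Lipschitz_F_C}(1) and the truncation bound \eqref{eq:eps_lamda} give $|\widehat Y_i-Y_i|=|F(\widehat\lambda^i,P_{\theta^i})-F(\lambda^i,P_{\theta^i})|\le L_{F,\infty}\epsilon_\lambda$ for every $i$, and since a uniform perturbation of a data set perturbs each order statistic by at most the same amount, $|\widehat v_\beta-\widetilde v_\beta|\le L_{F,\infty}\epsilon_\lambda$ as well. Hence $\mathbbm{1}_{\{\widehat Y_i\ge\widehat v_\beta\}}$ and $\mathbbm{1}_{\{Y_i\ge\widetilde v_\beta\}}$ can differ only when $Y_i$ lies in a band of width at most $4L_{F,\infty}\epsilon_\lambda$ centered at $\widetilde v_\beta$. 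Writing $M$ for the number of such indices, we have $M\le n$ and $\|Z_i\|_2\le B$, so $\|g^{(1)}-g^{(2)}\|_2^2\le\frac{B^2}{n^2(1-\beta)^2}M^2\le\frac{B^2}{n(1-\beta)^2}M$, and it remains to bound $\mathbb{E}[M]$: using the $\ell_C$-Lipschitz continuity of the c.d.f.\ of $C(\alpha,\cdot)$ and the density positivity $f_{C,\alpha}(v_\beta)>0$ from Assumption \ref{assum:sensi_2} (which controls the spacing of order statistics near $v_\beta$), the expected number of samples whose cost falls within an $\mathcal{O}(\epsilon_\lambda)$-band around the empirical quantile is $\mathcal{O}(n\epsilon_\lambda)$, whence $\mathbb{E}\|g^{(1)}-g^{(2)}\|_2^2=\mathcal{O}(\epsilon_\lambda)$. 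Adding the three bounds proves Theorem \ref{thm:error_estimator}.

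The main obstacle is precisely this last step: the occupancy truncation enters through a discontinuous indicator whose threshold $\widehat v_\beta$ is itself a data-dependent order statistic, so one cannot condition cleanly and must instead (i) decouple the truncation of the threshold from that of the per-sample costs, both of size $L_{F,\infty}\epsilon_\lambda$, and (ii) bound the expected occupancy of a thin band around the empirical quantile, which is exactly where Assumption \ref{assum:sensi_2}(1) and the Lipschitz c.d.f.\ assumption are needed. Pushing this piece to the sharper rate $\mathcal{O}(\epsilon_\lambda^2)$ would additionally require a second-moment (weak-dependence) estimate on the band indicators, which I avoid by keeping the cruder $M^2\le nM$ bound; the other two pieces are routine given Lemma \ref{lem:bound for nabla C} and the cited CVaR-sensitivity CLT.
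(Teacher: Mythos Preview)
Your proposal is correct and follows essentially the same route as the paper: the same three-term decomposition (your $g^{(1)},g^{(2)}$ are the paper's $\widehat{g_1},\tilde g$), Lemma~\ref{lem:bound for nabla C} for the zeroth-order piece, the bias/variance results of \citet{hong2009simulating} for the Monte-Carlo piece, and the Lipschitz-CDF argument for the indicator-mismatch piece. If anything, you are more explicit than the paper about the subtlety that the threshold $\widetilde v_\beta$ is a data-dependent order statistic; the paper simply asserts $\mathbb{P}(A_i\Delta\widehat A_i)\le 4\ell_C L_{F,\infty}\epsilon_\lambda$ from the Lipschitz CDF, which is exactly your $\mathbb{E}[M]=\mathcal O(n\epsilon_\lambda)$ step.
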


\begin{proof}
First recall that the true gradient and our gradient estimator are
$g=\frac{1}{1-\beta}\mathbb{E}\left[\nabla C(\alpha,\theta) \mathbbm{1}_{\{C(\alpha,\theta)\ge v_\beta\}}\right]$ and $\widehat{g}=\frac{1}{n(1-\beta)}\sum_{i=1}^n  \widehat{\nabla C}(\alpha,\theta_i ) \mathbbm{1}_{\{\widehat{C}(\alpha,\theta_i)\ge \widehat{v}_\beta\}}$.
Let
\[
\tilde{g}=\frac{1}{n(1-\beta)}\sum_{i=1}^n  \nabla C(\alpha,\theta_i ) \mathbbm{1}_{\{C(\alpha,\theta_i)\ge \tilde {v}_\beta\}} ,
\]
and
\[
\widehat{g_1}=\frac{1}{n(1-\beta)}\sum_{i=1}^n \nabla C(\alpha,\theta_i ) \mathbbm{1}_{\{\widehat{C}(\alpha,\theta_i)\ge \widehat{v}_\beta\}} ,      
\]
where $\tilde{v}_\beta:=C(\alpha,\theta_i)_{\lceil n\beta \rceil :n}$. Then we have the decomposition $g-\widehat{g}=(g-\tilde{g})+(\tilde{g}-\widehat{g_1})+(\widehat{g_1}-\widehat{g}):=R_1+R_2+R_3$. For $R_1$, it is the error in the estimation of expectation taken w.r.t. $\theta$. Suppose that Assumption \ref{assum:sensitivity} and Assumption \ref{assum:sensi_2} hold, Theorem 4.2 from \citep{hong2009simulating} shows that
\begin{equation*}
\|\mathbb{E}R_1\|_2=\|\mathbb{E}[\tilde{g}]-g\|_2=o(n^{-1/2}d^{-1/2}).
\end{equation*}
Notice that
\[
\|g-\tilde g\|_2^2\le 2\|g-\mathbb{E}\tilde g\|_2^2+2\| \mathbb{E}\tilde g- \tilde g\|_2^2.
\]
By Theorem 4.3 from \citep{hong2009simulating}, $Var(\tilde{g})=\mathcal{O}(dn^{-1})$. Thus
\begin{equation}
\label{eq:r1}
\mathbb{E}\|R_1\|_2^2=\mathcal{O}(dn^{-1}).
\end{equation}
For $R_3$, it is the error in the estimation of $C(\alpha,\theta)$. By Lemma \ref{lem:bound for nabla C}, $\mathbb{E}[\|\widehat{\nabla C}(\alpha,\theta_i )-\nabla C(\alpha,\theta_i )\|_2^2]\le\frac{8d}{\nu^2}L_{F,\infty}^2\epsilon_\lambda^2 +\frac{8(d+5)B^2}{m_i}+\frac{2\nu^2 L_{C,2}^2(d+6)^3}{m_i}$. If we choose all $m_i$ to be the same $m$, then
\begin{equation*}
\begin{split}
\mathbb{E}[\|\widehat{g_1}-\widehat{g}\|_2^2] &\le   \frac{1}{n(1-\beta)^2}\sum_{i=1}^n\|\widehat{\nabla C}(\alpha,\theta_i)-\nabla C(\alpha,\theta_i)\|_2^2\\
&\le\mathcal{O}(\frac{d\epsilon_\lambda^2}{\nu^2}+\frac{d+5}{m}+\frac{\nu^2(d+6)^3}{m}).
\end{split}
\end{equation*}
Thus
\begin{equation}
\label{eq:r3}
    \mathbb{E}[\|R_3\|_2^2]  \le\mathcal{O}(\frac{d\epsilon_\lambda^2}{\nu^2}+\frac{d+5}{m}+\frac{\nu^2(d+6)^3}{m}).
\end{equation}
Now we consider $R_2$.
Define the event $A_i=\{C(\alpha,\theta_i)\geq \widetilde {v}_\beta\},\widehat{A_i}=\{\widehat C(\alpha,\theta_i)\ge \widehat {v}_\beta\}$ and $A_i\Delta \widehat{A_i}:= (A_i\backslash \widehat{A_i})\cup(\widehat{A_i}\backslash A_i) $. 
Then
\begin{equation*}
\begin{split}
\|R_2\|_2&\le \frac{1}{n(1-\beta)}\sum_{i=1}^n  \|\nabla C(\alpha,\theta_i )\|_2 \cdot \mathbbm{1}_{A_i\Delta \widehat{A_i}}\\
&\le \frac{1}{n(1-\beta)} \sum_{i=1}^n B \mathbbm{1}_{A_i\Delta \widehat{A_i}},
\end{split}
\end{equation*}
and
\begin{equation*}
    \begin{split}
        \|R_2\|_2^2
        &\le \frac{1}{n^2(1-\beta)^2} (\sum_{i=1}^n  B \mathbbm{1}_{A_i\Delta \widehat{A_i}})^2\\
        &\le \frac{1}{n(1-\beta)^2}B^2\sum_{i=1}^n\mathbbm{1}_{A_i\Delta \widehat{A_i}}.
    \end{split}
\end{equation*}
Notice that
\begin{equation*}
\mathbb{P}(\mathbbm{1}_{A_i\Delta \widehat{A_i}})
    =\mathbb{P}(A_i\backslash \widehat{A_i})+\mathbb{P}( \widehat{A_i}\backslash A_i).
\end{equation*}
As the estimation error of $\lambda$, i.e. $\|\hat{\lambda}-\lambda\|_\infty $,  is bounded by $\epsilon_\lambda$ and $F $ is $L_{F,\infty}$-Lipschitz continuous w.r.t $\|\cdot\|_\infty$, we have $|\widehat{C}(\alpha,\theta_i)-C(\alpha,\theta_i)|\le L_{F,\infty}\epsilon_\lambda$.  As a result,
$|\widetilde{v}_\beta-\widehat {v}_\beta|\le L_{F,\infty} \epsilon_\lambda$. Notice that
$
\{C(\alpha,\theta_i)\ge \widetilde{v}_\beta+2L_{F,\infty}\epsilon_\lambda\}\subseteq\{\widehat{C}(\alpha,\theta_i)\ge \widehat{v}_\beta\}\subseteq\{C(\alpha,\theta_i)\ge \widetilde {v}_\beta-2L_{F,\infty}\epsilon_\lambda\}.
$
Then we have
$
\mathbb{P}(A_i\backslash \widehat{A_i})+\mathbb{P}( \widehat{A_i}\backslash A_i)\le 4 \ell_C L_{F,\infty} \epsilon_\lambda,
$ by the assumption on the cumulative distribution function of $C$, and thus
\begin{equation}
\label{eq:r2}
\mathbb{E}\|R_2\|_2^2
\le \frac{4}{(1-\beta)^2} B^2 \ell_CL_{F,\infty}\epsilon_\lambda
=\mathcal{O}(\epsilon_\lambda).
\end{equation}

Combining \eqref{eq:r1}, \eqref{eq:r3} and \eqref{eq:r2}, we have 
\begin{equation*}
\mathbb{E}\|[g-\widehat{g}]\|_2^2 \le  \mathcal{O}\left(dn^{-1}+\epsilon_\lambda+\frac{d\epsilon_\lambda^2}{\nu^2}+\frac{d+\nu^2d^3}{m}\right).
\end{equation*}
\end{proof}

Theorem \ref{thm:error_estimator} implies that the error of the gradient estimator can be reduced to arbitrarily small by increasing the sample size $n, m$ or decreasing the truncation error $\epsilon_\gamma$.

\section{Implementing Details}\label{appendix:implementation}
\textbf{Frozen lake problem}. Consider moving from the Start (S) to the Goal (G) on an $5\times 5$ frozen lake with $6$ holes (H). Then there are $18$ ices (F) (involving Start). The agent may not move in the intended direction as the ice is slippery. The position is the row-column coordinate $(i,j)$ with $i,j\in\{0,1,2,3,4\}$ and the state is the $5*i+j$. The state space is $\{0,1,\dots,24\}.$ The action set consists of moving in four directions. The unknown slippery probability is $\theta_s$. Before reaching the goal and standing on the ice, the agent may move in the intended direction with unknown probability $1-\theta_s$ and move in either perpendicular direction with probability $\theta_s/2$. When falling into the hole, the agent may try to escape from the hole and move to the intended direction. Each time the agent will succeed in escaping from the hole with unknown probability $\theta_e$. After reaching the Goal, the agent will always stay in the Goal whatever the action is. We set the cost to be $1$ for each action on ice before reaching goal. Also, stronger efforts may be made when it is harder to escape from the hole. So we set the per-action cost in hole to be uniformly distributed between $[1,1+2(1-\theta_e)]$. We aim to find a policy with the minimum general loss function.
The data set consists of $N$ historical slippery movements and escapement trials. 

\textbf{Linear Loss.} For each of the considered formulations, we obtain the corresponding optimal policy for the same data set and evaluate the actual performance of the obtained policy on the true system, i.e. MDP with the true parameter $\theta^*$. Specifically, we use the  linear loss  function, which corresponds to the total discounted cost in a classical MDP problem.
This is referred to as one replication, and we repeat the experiments for 50 replications using different independent data sets.{ As the random sampling of output policy in Algorithm \ref{alg:sgd_coherent} is for the purpose of proof, we just choose $\alpha_{t_N}$ as the output for convenience in implementation.}
Results for the frozen lake problem  are presented in Table \ref{table: frozen lake small,theta0.02},  with varying data size $N=5$ and $N=50$, slippery probability $\theta_s=0.3$ and escape probability $\theta_e=0.02$. Note that we report the positive-sided variance, which corresponds to the second order moment of the positive component of the difference between the actual loss and the expected loss. Intuitively, a high positive-sided variance indicates more replications with higher costs than the average, which is undesirable.

\textbf{Episodic Case}. We consider the episodic setting where the data collection and policy update are alternatively conducted. Similar with the previous case with fixed data size, we consider the mean loss function with slippery probability $\theta_s=0.3$, escape probability $\theta_e=0.02$, and $ 5\times 20$, $10\times 10$, $20\times 5$ iterations in total. We repeat the experiments for 50 replications on different independent data sets.
Figure~\ref{fig:eps_linear_theta002} shows the decrease of the loss function by different methods.{ As the random sampling of output policy in Algorithm \ref{alg:sgd_episodic} is for the purpose of proof, we just choose $\alpha_{i+1,0}=\alpha_{i,t_i}$ for convenience in implementation.}

Results for the frozen lake problem with escape probability $\theta_e=0.7$ can be found in Table \ref{table: frozen lake small,theta0.7} and Table \ref{table: frozen lake large,theta0.7}.

\begin{table}[!ht]
\centering
\caption{Results for frozen lake problem. Expected loss and positive-sided variance at different risk levels $\alpha$ are reported for different algorithms. Standard errors are reported in parentheses. Escape probability $\theta_e=0.7$ and number of data points is $N=5$.}
\resizebox{11cm}{!}{%
\begin{tabular}{ccccc}
\hline
\multirow{2}{*}{Approach} & \multicolumn{2}{c}{loss function: mean}             
\\ \cline{2-3} 
& expected loss & positive-sided variance 
                          \\ \hline
BR-PG ($\beta=0$)      & {10.322
(0.0182)}  & {0.0153}    
\\ \hline
    BR-PG ($\beta=0.5$)       & 10.520(0.105)    
    &   0.502
\\ \hline
BR-PG ($\beta=0.9$)       &   11.718
(0.357)  &    4.982             
\\ \hline
Empirical       &   11.667
(0.0687)   &   0.156
           
           \\ \hline
DRQL  (radius=0.05)                 & 11.223(0.185)     & 1.283    \\ \hline
DRQL  (radius=1)                 & 20.751(1.438)     & 69.514     \\ \hline
DRQL  (radius=20)                 & 23.181(1.396)     & 57.495     \\ \hline
\end{tabular}
}
\label{table: frozen lake small,theta0.7}
\end{table}

\begin{table}[!ht]
\centering
\caption{Results for frozen lake problem. Expected loss and positive-sided variance  at different risk levels $\alpha$ are reported for different algorithms. Standard errors are reported in parentheses. Escape probability $\theta_e=0.7$ and number of data points is $N=50$.}
\resizebox{11cm}{!}{%
\begin{tabular}{cccc}
\hline
\multirow{2}{*}{Approach} & \multicolumn{2}{c}{loss function: mean}             
\\ \cline{2-3} 
                          & expected loss & positive-sided variance 
                          \\ \hline
BR-PG ($\beta=0$)   & {10.271
(0.00227)}  &   {0.000197}      
\\ \hline
    BR-PG ($\beta=0.5$)       &   10.256
(0.00211)   &     0.000188
      \\ \hline
BR-PG ($\beta=0.9$)       &   10.230(0.00294)   &  0.000398
                \\ \hline
Empirical       &   11.316
(0.0235)  &    0.017
        \\ \hline
DRQL  (radius=0.05)                 & 10.888( 0.171)     & 1.235     \\ \hline
DRQL  (radius=1)                 & 20.990( 1.324)     & 56.027     \\ \hline
DRQL  (radius=20)                 & 23.500(1.282)     & 51.915    \\ \hline
\end{tabular}
}
\label{table: frozen lake large,theta0.7}
\end{table}

Figure \ref{fig:map} shows the map of the frozen lake problem with $1$ Start(S), $1$ Goal(G), $6$ holes(H) and remaining frozen(F) parts. We design such a map so that the agent has to avoid falling in the hole when the escape probability is very small and cross the hole when the escape probability is high.  Detailed parameters are set as follows. The true slippery probability  is $ 0.3$. The iteration number for gradient descent is $100$, the stepsize is $0.5$, {and the sample number in each iteration is $r=30$}. we set the discounter factor to be $\gamma=0.97$ , the truncation horizon for occupancy measure to be $K=130$. 
\eqref{eq:gradient_sample}.

For the "mean" loss function, we use the maximum likelihood estimator (MLE) of $\theta$  as the empirical measure  to be compared with BR-PG. Also, we use the distributionally robust Q-learning (DRQL)\citep{liu2022distributionally} with different radius for the KL divergence ball as another benchmark. We also use the MLE of $\theta$ as the parameter for the center of  the KL divergence ball in DRQL with different radius. For BR-PG, the sample number from posterior in each iteration is $30$,  the total iteration number is $100$, the step size of SGD is chosen to be $1$, and the prior distributions are chosen to be Beta$(1,1)$ for two parameters.
We show the  histogram of total cost over $50$ replications for all methods in Figure \ref{fig:histogram_theta0.02} with the risk level $0.8$ for CVaR over replications, which visualize the measures of dispersion.

\textbf{Mimicking a policy}. Here we consider a different problem of mimicking an expert policy still using  Frozen Lake environment. Given an expert policy, we have access to the state distribution of the expert policy under the true environment, which is denoted by a nonnegative function $J$
satisfying $\sum_{s \in \mathcal{S}}J(s)=1$. The loss function we want to minimize is defined as the KL divergence between state occupancy measure under the current policy and the expert state distribution  $F(\lambda)
=\mathrm{KL}\left((1-\gamma)\sum_{a\in\mathcal{A}} \lambda_a||J\right)
=\sum_{s \in \mathcal{S}}\sum_{a\in\mathcal{A}}(1-\gamma) \lambda_{sa} \log\left(\frac{\sum_{a\in\mathcal{A}}(1-\gamma) \lambda_{sa} }{J(s)}\right)$. We compare the BR-PG algorithm with CVaR risk measure under different risk levels $\beta=0,0.5,0.9$, respectively, with the benchmark empirical approach using the MLE estimator for the parameter as before. Figure~\ref{fig:KL} shows the decrease of the loss function by different methods. 
It should be noticed that DRQL can only be applied to the "mean" loss function, thus we don't use it as a benchmark. The performance of the 50 replications is shown in figure~\ref{fig:KL_diff_theta}, where the shown results start from the 30-th iteration.

\begin{figure}[ht]
\centering 
\begin{minipage}[b]{0.45\textwidth} 
\centering 
\includegraphics[scale=0.5]{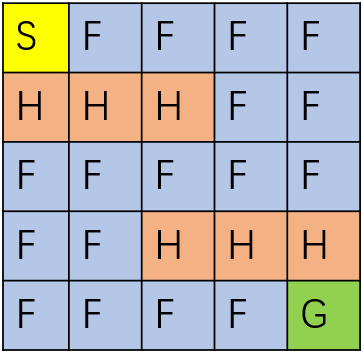}.
\caption{Map of frozen lake problem}
\label{fig:map}
\end{minipage}
\end{figure}

\begin{figure}
     \centering
     \begin{subfigure}[b]{0.4\textwidth}
         \centering
         \includegraphics[width=\textwidth]{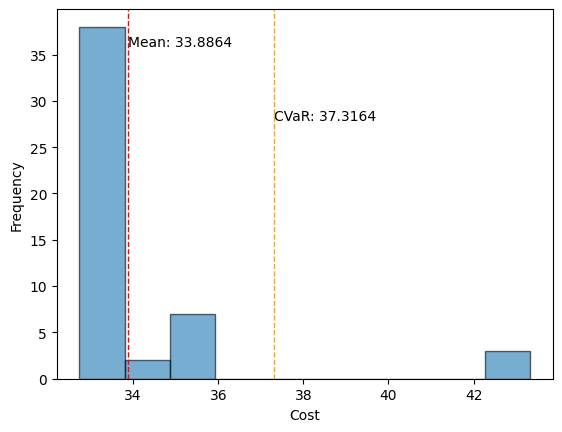}
         \caption{BR-PG  $(\beta=0)$}
         \label{fig:LinearN5Beta0his}
     \end{subfigure}
     \hfill
     \begin{subfigure}[b]{0.4\textwidth}
         \centering
         \includegraphics[width=\textwidth]{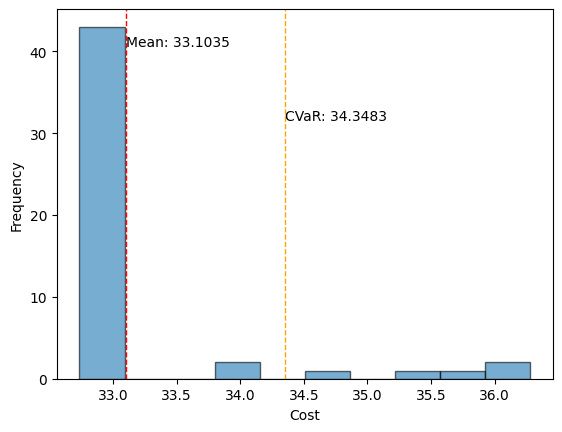}
         \caption{BR-PG  $(\beta=0.5)$}
         \label{fig:LinearN5Beta05his}
     \end{subfigure}
     \hfill
     \begin{subfigure}[b]{0.4\textwidth}
         \centering
         \includegraphics[width=\textwidth]{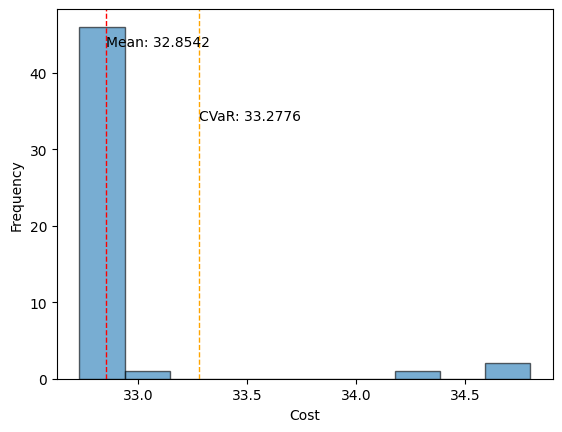}
         \caption{BR-PG  $(\beta=0.9)$}
         \label{fig:LinearN5Beta09his}
     \end{subfigure}
          \hfill
    \begin{subfigure}[b]{0.4\textwidth}
         \centering
         \includegraphics[width=\textwidth]{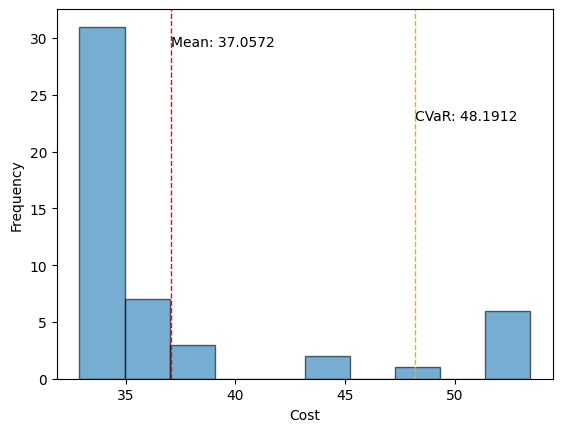}
         \caption{empirical}
         \label{fig:LinearN5MLEhis}
    \end{subfigure}
        \caption{Result for  utility function "mean" with data size $N=5$ and escape probability $\theta_e=0.02$}
        \label{fig:histogram_theta0.02}
\end{figure}

\begin{figure}
     \centering
     \begin{subfigure}[b]{0.4\textwidth}
         \centering
         \includegraphics[width=\textwidth]{Figure/KLN5theta02start30.png}
         \caption{ $\theta_e=0.2$}
         \label{fig:KLN5theta02start30}
     \end{subfigure}
     \hfill
     \begin{subfigure}[b]{0.4\textwidth}
         \centering
         \includegraphics[width=\textwidth]{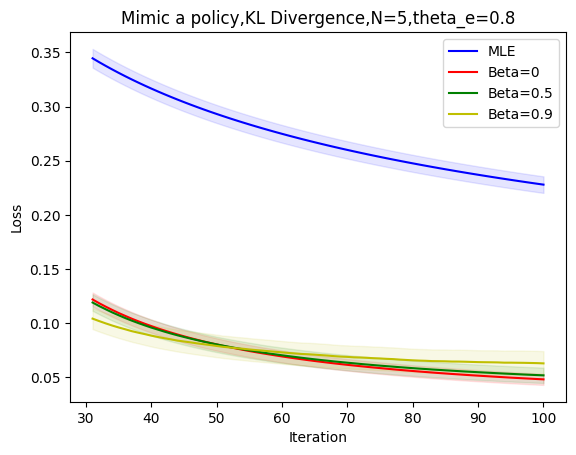}
         \caption {$\theta_e=0.8$}
         \label{fig:KLN5theta08start30}
     \end{subfigure}
        \caption{Results for  utility function "KL divergence" with data size $N=5$ and escape probability $\theta_e=0.2$ and $\theta_e=0.8$}
        \label{fig:KL_diff_theta}
\end{figure}

\end{document}